\documentclass[mnsc,nonblindrev]{informs4Preprint}

\OneAndAHalfSpacedXI

\TheoremsNumberedThrough
\theoremstyle{TH}
\newtheorem{condition}[assumption]{Condition}
\theoremstyle{EX}
\renewenvironment{proof}[1][Proof]{%
  \Trivlist\item[\hspace*{1em}\hskip\labelsep{\itshape #1.\enskip}]\ignorespaces
}{\endTrivlist\addvspace{0pt}}

\IfFileExists{1_intro.tex}{}{}

\usepackage[utf8]{inputenc}
\usepackage[T1]{fontenc}
\usepackage{amsmath,amssymb,amsfonts,mathtools}
\usepackage{algorithm}
\usepackage[noend]{algpseudocode}
\usepackage{array}
\usepackage{booktabs}
\usepackage{enumitem}
\usepackage{etoolbox,needspace,tabularx}
\usepackage{fancyvrb}
\usepackage{graphicx}
\usepackage{makecell}
\usepackage{microtype}
\usepackage{multirow}
\usepackage{nicefrac}
\usepackage{placeins}
\usepackage{xcolor}
\usepackage{hyperref}
\usepackage{url}
\allowdisplaybreaks

\usepackage{natbib}
\bibpunct[, ]{(}{)}{,}{a}{}{,}
\def\bibfont{\small}

\let\argmax\relax
\let\argmin\relax

\usepackage{amsmath,amsfonts,bm}

\def\eqref#1{equation~\ref{#1}}

\def\1{\bm{1}}

\def\vone{{\bm{1}}}

\DeclareMathAlphabet{\mathsfit}{\encodingdefault}{\sfdefault}{m}{sl}
\SetMathAlphabet{\mathsfit}{bold}{\encodingdefault}{\sfdefault}{bx}{n}

\def\sP{{\mathbb{P}}}

\newcommand{\E}{\mathbb{E}}

\newcommand{\R}{\mathbb{R}}

\DeclareMathOperator*{\argmax}{arg\,max}
\DeclareMathOperator*{\argmin}{arg\,min}

\renewcommand{\eqref}[1]{\textup{(\ref{#1})}}

\newcommand{\xhdr}[1]{\vspace{1mm}\noindent{\bfseries #1}}

\DeclareMathOperator{\Reg}{Reg}
\DeclareMathOperator{\Viol}{Viol}
\DeclareMathOperator{\clip}{clip}
\DeclareMathOperator{\supp}{supp}

\newcommand{\wh}{\widehat}
\newcommand{\wt}{\widetilde}

\newcommand{\inner}[2]{\left\langle #1,#2\right\rangle}

\newcommand{\Null}{\mathtt{null}}
\newcommand{\Aset}{\mathcal A}
\newcommand{\Asetzero}{\mathcal A_0}
\newcommand{\VUB}{V^{\mathrm{UB}}}

\newcommand{\DT}{\mathcal D_T}
\newcommand{\Rest}{\mathcal R_{\mathrm{est}}}
\newcommand{\Rshared}{\mathcal R_{\mathrm{sh}}}

\newcommand{\ROPD}{\textsc{ROPD}}
\newcommand{\AnonymousCodeURL}{}

\hypersetup{
  colorlinks=true, linkcolor=blue, citecolor=blue, urlcolor=blue,
  pdftitle={Corruption-Robust Sparse Linear Contextual Bandits with Knapsack Constraints},
  pdfauthor={Yige Wang, Hanyang Li, Yiming Zong, Wanteng Ma, Jiashuo Jiang}
}

\begin{document}

\RUNAUTHOR{Wang, Li, Zong, Ma and Jiang}
\RUNTITLE{Corruption-Robust Sparse Linear Contextual Bandits with Knapsack Constraints}

\TITLE{Corruption-Robust Sparse Linear Contextual Bandits with Knapsack Constraints}

\ARTICLEAUTHORS{%
\AUTHOR{$\text{Yige Wang}^\dag, \text{Hanyang Li}^\dag, \text{Yiming Zong}^\dag$, $\text{Wanteng Ma}^{\ddag*}$, $\text{Jiashuo Jiang}^{\dag*}$}
\AFF{$\dag$~Department of Industrial Engineering and Decision Analytics, The Hong Kong University of Science and Technology\\
$\ddag$~Department of Statistics and Data Science, The Wharton School, University of Pennsylvania}
}

\ABSTRACT{%
We study sparse linear contextual bandits with knapsack constraints under joint reward and consumption corruption. Consumption corruption creates a challenge beyond corrupted rewards: it affects not only statistical estimates, but also the recorded budget, resource prices, and stopping decisions that govern future allocation. We develop Robust Optimistic Primal--Dual (ROPD), an estimator-modular framework that combines corruption-aware confidence widths with online resource prices and a budget-safety rule. With concrete sparse implementation, ROPD achieves regret against a clean population-LP benchmark of $\widetilde O(T^{2/3}+\Gamma T^{1/3})$ under forced exploration and population-design coverage, and $\widetilde O(\sqrt T+\Gamma)$ under on-policy realized-design coverage, for a supplied valid corruption bound $\Gamma$ under the stated proportional-budget scaling and fixed model/design parameters. When the corruption level is unknown, Shared-Grid adapts confidence radii around common point estimates fitted to a single realized history, incurring explicit initialization and master-comparison costs; its sharper on-policy guarantee additionally requires recommendation coverage. Both methods preserve observed budgets on every realization and bound clean resource violation by cumulative consumption corruption. These results connect corruption-robust sparse estimation with resource accounting, pricing, and stopping in high-dimensional online allocation.}

\KEYWORDS{Contextual Bandits with Knapsacks; Sparse Linear Bandits; Adversarial Corruption; Online Resource Allocation; Primal--Dual Learning}

\maketitle
\raggedbottom

\section{Introduction}
\label{sec:introduction}

Contextual bandits with knapsack constraints (CBwK) couple learning with resource allocation: after observing a context, a learner chooses an action whose reward must be learned while its consumption depletes finite budgets \citep{badanidiyuru2018bandits,agrawal2016linear,agrawal2016efficient}. In high-dimensional applications, only a small subset of features may govern reward and consumption, motivating sparse action-specific models \citep{bastani2020online,hao2020high,ma2023high}.

Corruption of consumption feedback creates a difficulty beyond corrupted rewards. A corrupted reward changes what the learner infers about an action, whereas corrupted consumption also changes the state used to allocate future resources---consumption estimate, recorded budget, resource prices, and potentially stopping time. For example, an advertisement that is truly expensive but repeatedly recorded as inexpensive can be favored while the observed budget account still appears feasible.

A robust estimator alone therefore does not resolve the allocation problem. The analysis must control how corrupted observations affect later predictions, relate observed consumption to the latent clean resource use, and account for reward lost when corrupted resource information changes stopping. In the sparse setting, these guarantees also depend on whether the contexts actually collected for each action are informative. When the corruption level is unknown, candidates calibrated to different levels may recommend different actions even though feedback is observed only for the actions that are actually played.

We address these issues with Robust Optimistic Primal--Dual (\ROPD), which combines corruption-aware prediction widths with online resource prices and a budget-safety rule. The allocation analysis is estimator-modular: it uses only pointwise confidence and cumulative uncertainty along the recommended actions. Norm-constrained Lasso provides one concrete sparse implementation. We study both forced exploration and on-policy learning, and for unknown corruption introduce Shared-Grid, which compares confidence calibrations around common point estimates fitted to one realized history.

\begin{table}[t]
\centering
\caption{Guarantee regimes and leading terms when corruption is absent.}
\label{tab:results-overview}
\footnotesize
\setlength{\tabcolsep}{3pt}
\begin{tabularx}{\linewidth}{@{}>{\raggedright\arraybackslash}p{0.18\linewidth}>{\raggedright\arraybackslash}p{0.12\linewidth}>{\raggedright\arraybackslash}X>{\raggedright\arraybackslash}p{0.28\linewidth}@{}}
\toprule
Corruption information & Sampling & Coverage & Corruption-free term \\
\midrule
Bound $\Gamma\ge C_Z$ & Forced & Population design & $\widetilde O(T^{2/3})$ \\
Bound $\Gamma\ge C_Z$ & On-policy & Realized design & $\widetilde O(\sqrt T)$ \\
Not supplied & Forced & Population design & $\widetilde O(T^{2/3})+\Reg_{\mathrm{mst}}(T)$ \\
Not supplied & On-policy & Realized design and recommendation coverage & $\widetilde O(\sqrt T)+\Reg_{\mathrm{mst}}(T)$ \\
\bottomrule
\end{tabularx}
\par\smallskip
\raggedright Orders assume proportional budgets, fixed model and coverage parameters, dominated initialization and burn-in, and negligible failure costs. $\Reg_{\mathrm{mst}}(T)$ is the master-comparison cost; Corollaries~\ref{cor:known-rates-main} and ~\ref{cor:unknown-rates-main} give the full bounds.
\end{table}

\subsection{Main Contributions}
\label{sec:contributions}

\xhdr{1. Joint statistical and resource effects of corruption.}
We measure joint reward--consumption corruption through the priced budget $C_Z$ and separate its two roles: contaminated samples affect future prediction, while corrupted consumption directly changes resource accounting, price updates, and stopping. This decomposition yields regret guarantees together with exact observed-budget feasibility and an explicit bound on clean resource violation.

\xhdr{2. Estimator-modular robust allocation with two data routes.}
Given a valid bound $\Gamma\ge C_Z$, \ROPD{} combines optimistic predictions with online resource prices. Its allocation theorem requires only pointwise confidence and cumulative recommendation-width control; norm-constrained Lasso is one concrete implementation. Forced exploration under population design gives the $\widetilde O(T^{2/3})$ corruption-free regime, while realized on-policy design gives $\widetilde O(\sqrt T)$ under the conditions summarized in Table~\ref{tab:results-overview}.

\xhdr{3. Adaptation to unknown corruption on shared data.}
Shared-Grid compares corruption calibrations around common point estimates fitted to the single realized history, rather than analyzing candidates on counterfactual estimator histories. Its guarantee retains initialization and master-comparison costs explicitly; the sharper on-policy result additionally requires coverage of candidate recommendations by realized plays.

\xhdr{Relation to closest work.}
Clean high-dimensional CBwK already combines sparse estimation with primal--dual resource control \citep{ma2023high}, while corruption-robust linear and contextual bandits primarily address corrupted reward feedback without knapsack state \citep{he2022nearly,liu2024corruption}. Thus the new difficulty here is not sparse estimation or dual pricing in isolation: corrupted consumption simultaneously affects prediction, resource-price path, recorded budget, and stopping. We analyze these effects against a clean population-LP benchmark and, when the corruption level is unknown, adapt on a single realized estimator history. Section~\ref{app:related-work} gives the broader comparison.

\subsection{Related Work}
\label{app:related-work}

\xhdr{Contextual bandits with knapsacks and high-dimensional allocation.} The CBwK framework joins bandit learning with finite resource constraints \citep{badanidiyuru2018bandits}. \citet{agrawal2016linear} study the closest clean linear contextual model: expected rewards and resource consumptions are linear in observed features, and confidence sets are combined with online resource prices to control the knapsacks. \citet{agrawal2016efficient} develop a more general oracle-based contextual framework, and primal--dual ideas also connect this literature to online packing and dynamic resource allocation \citep{agrawal2014bandits,devanur2009adwords,agrawal2014dynamic}. Our known-corruption method keeps the basic confidence-plus-price structure, but the feedback used by the estimator, the dual update, and the stopping rule can now differ from the clean process used by the benchmark. Our analysis accounts for the difference between clean and observed consumption in the price updates and the reward lost after stopping. Adversarial and non-stationary BwK study changing reward and consumption environments \citep{immorlica2022adversarial,liu2022non}, while instance-dependent BwK analyses go beyond worst-case guarantees under clean feedback \citep{sankararaman2021bandits}. Our model instead keeps a latent uncorrupted linear contextual model with i.i.d.\ contexts and studies corrupted observations of that process, so the goal is to recover performance against a clean population-LP benchmark.

\xhdr{Sparse contextual bandits and clean high-dimensional CBwK.} Sparse contextual bandit methods use regularization, forced sampling, Thompson sampling, or hard thresholding so regret depends on a small relevant support \citep{bastani2020online,hao2020high,ren2024dynamic,chakraborty2023thompson}. The closest paper to our sparse allocation setting is \citet{ma2023high}, which combines an online hard-thresholding estimator with a primal--dual CBwK method and obtains logarithmic dependence on the ambient feature dimension under clean feedback. Their sharper clean regime and our zero-corruption on-policy result both have $\widetilde O(\sqrt T)$ horizon dependence, with different action, sparsity, curvature, norm, and resource-price factors. This comparison concerns the horizon exponent, not identical dimension dependence. Our setting adds corrupted reward and consumption observations, a direct resource-accounting term, a sequential charge for the later effect of corrupted action-specific samples, and adaptation when the corruption level is unknown. The retained-design RSC condition in our sharper result plays the familiar statistical role of ensuring that action-specific data identify sparse parameters. The additional Shared-Grid recommendation-coverage condition is different: it controls how often candidate recommendations are supported by realized plays.

\xhdr{Corruption-robust bandits.} Corruption-robust bandit work first showed how regret can degrade with an adversarial corruption budget in stochastic multi-armed bandits \citep{lykouris2018stochastic,gupta2019better}. Later work studies linear and contextual models. \citet{ding2022robust} allow attacked rewards and contexts without requiring the attack budget. For a known played-action reward-corruption budget $C$, \citet{he2022nearly} obtain $\widetilde O(d\sqrt T+dC)$ and also study adaptation without a supplied bound. For fixed-parameter linear bandits, \citet{liu2024corruption} distinguish $C$ from the worst-action budget $C_\infty$ and establish $\widetilde O(d\sqrt T+\min\{dC,\sqrt d\,C_\infty\})$ with matching lower bounds under the corresponding corruption information. These results concern reward corruption without knapsack constraints and do not by themselves establish optimal dependence on our joint reward--consumption budget. Other work studies linear optimization, Lipschitz bandits, generalized linear models, and explicit poisoning attacks \citep{li2019stochastic,kang2023robust,yu2025corruption,liu2019data}. These papers develop robust confidence sets, weighting, elimination, and attack-adaptive exploration for bandit objectives centered on prediction and action selection. Our setting adds a resource signal that changes both statistical estimates and the state of the allocation system. The $Z$-scaled budget measures corruption in the action score, and the proof also controls observed budget account, dual-price path, and stopping time.

\xhdr{Robust sparse estimation and sequential prediction.} Robust sparse regression studies parameter recovery when some responses or samples are corrupted \citep{chen2013robust,bhatia2015robust,balakrishnan2017computationally,diakonikolas2019outlier,prasad2020robust,liu2020high}. Restricted-eigenvalue and sparse covariance conditions explain when high-dimensional parameters can be identified \citep{bickel2009simultaneous,raskutti2010restricted,negahban2012unified,wainwright2019high,buhlmann2011statistics}. These tools provide the statistical base of our estimator module, while an online allocation proof additionally requires the sum of reward and consumption widths along an adaptive action path. We prove this sequential bound by matching estimator sample set to its coverage event and by charging each corrupted observation through the later inverse action counts in which it still appears. Our concrete analysis uses norm-constrained Lasso with candidate-independent regularization. An enlarged-cone argument and an RSC tolerance yield explicit corruption-aware confidence, with a norm bound handling large corruption. The resulting allocation theorems transfer to any sparse estimator that supplies the same predictable pointwise and cumulative-width guarantees.

\xhdr{Unknown corruption and bandit model selection.} Corralling and model-selection methods combine candidates or policy classes under partial feedback \citep{agarwal2017corralling,foster2019model,pacchiano2020regret,pacchiano2022best,cutkosky2021dynamic,krishnamurthy2021optimal,ghosh2021model}. \citet{agarwal2017corralling} show that a master must avoid starving a candidate of feedback. \citet{foster2019model} adapt to an unknown contextual model class, and \citet{pacchiano2022best} combine model selection with guarantees for fixed-mean and adversarial rewards. Model selection has also been used for corruption-robust reinforcement learning \citep{wei2022model}. Unknown corruption in sparse CBwK adds a data-design problem. Two independently updated corruption candidates would choose different actions and would need different action-specific context histories, so the confidence set of an inactive candidate can depend on data that the learner never collected. The shared-estimator grid avoids reliance on these unobserved histories: every candidate uses the same realized point estimate, and the master compares only their current recommendations. The remaining candidate-coverage condition in the exploration-free result states when these recommendations receive enough realized plays for their shared confidence widths to be summed.

Our paper combines primal--dual resource control, corruption-robust learning, and sparse action-specific estimation in a setting where resource-consumption feedback is itself corrupted. This requires a relation between clean and observed resource use, stopping control through resource prices, sequential tracking of corrupted retained samples, and shared-estimator adaptation on one realized design.

\section{Model, Feedback, and Benchmark}
\label{sec:preliminaries}
\label{sec:model}
\label{app:model-details}

\subsection{Interaction and Feedback}
\label{sec:model-benchmark}

The learner interacts with the environment for $T$ rounds with $m$ resource budgets, $B=(B_1,\ldots,B_m)^\top$, with $B_i>0$ for every resource $i\in[m]=\{1,\ldots,m\}$. At round $t$, it observes a \emph{context} $x_t\in\R^d$, a vector of features available before choosing an action. It then selects an action $a_t$ from $K$ alternatives, indexed by $\Aset=[K]=\{1,\ldots,K\}$, or chooses the \emph{null action} $\Null$, which gives zero reward and uses no resources. We write $\Asetzero=\Aset\cup\{\Null\}$ for the full action set and call the actions in $\Aset$ \emph{non-null actions}. The learner receives reward and consumption feedback only for the action it selects; this feedback may be corrupted.

\xhdr{Clean outcomes and observed feedback.}
We distinguish three quantities: the expected outcome before corruption, its noisy realization, and the feedback observed after corruption. We use \emph{clean} to mean before adversarial corruption, not without stochastic noise. For action $a$ and context $x$, let $r^0(a,x)$ be the clean expected reward and $b_i^0(a,x)$ the clean expected consumption of resource $i$. These \emph{clean conditional means} are linear in the context, with unknown coefficient vectors $\mu_a^\star,w_{a,i}^\star\in\R^d$:
\begin{equation}
\label{eq:linear-means}
r^0(a,x)=\inner{\mu_a^\star}{x},
\quad
b_i^0(a,x)=\inner{w_{a,i}^\star}{x},
\quad i\in[m].
\end{equation}
Write $b^0(a,x)=(b_1^0(a,x),\ldots,b_m^0(a,x))^\top$ for the consumption-mean vector.

The clean realizations $r_t^{\mathrm{cl}}(a)$ and $b_{t,i}^{\mathrm{cl}}(a)$ equal these means plus ordinary stochastic noise, specified in Assumption~\ref{ass:base-model} below. The adversary adds raw perturbations $\xi_t^r(a)$ and $\xi_{t,i}^b(a)$ to these realizations. The resulting feedback is clipped to $[0,1]$:
\[
\begin{aligned}
r_t(a)=\clip(r_t^{\mathrm{cl}}(a)+\xi_t^r(a),0,1),\quad
b_{t,i}(a)=\clip(b_{t,i}^{\mathrm{cl}}(a)+\xi_{t,i}^b(a),0,1),
\end{aligned}
\]
where $\clip(z,0,1)=\min\{1,\max\{0,z\}\}$ truncates values outside this interval. The actual changes after clipping, rather than the raw perturbations, enter our corruption measures:
\begin{equation}
\label{eq:clipped-deviations}
c_t^r(a)=r_t(a)-r_t^{\mathrm{cl}}(a),
\quad
c_{t,i}^b(a)=b_{t,i}(a)-b_{t,i}^{\mathrm{cl}}(a).
\end{equation}
The vectors $b_t^{\mathrm{cl}}(a)$, $b_t(a)$, and $c_t^b(a)$ collect the corresponding $m$ resource coordinates. Because both clean and observed outcomes lie in $[0,1]$, $|c_t^r(a)|\le1$ and $\|c_t^b(a)\|_\infty\le1$, where $\|v\|_\infty$ is the largest absolute coordinate of $v$. For the null action, all clean outcomes, observed outcomes, and corruption terms are zero.

These variables describe each action's \emph{potential outcomes}: the outcomes it would produce if selected. They are defined for all actions to permit comparisons, but only $r_t(a_t)$ and $b_t(a_t)$ are observed (Table~\ref{tab:feedback-levels}). The adversary changes clean realizations, not their underlying means, and its raw perturbations have no probability model. Their total effect is measured by the corruption budget in Section~\ref{sec:benchmark-corruption}. Clipping bounds feedback; it does not identify corrupted observations.

\begin{table}[ht]
\centering
\caption{Three levels of reward and resource-consumption feedback.}
\label{tab:feedback-levels}
\begin{tabular}{lll}
\toprule
Quantity & Meaning & Observed by the learner?\\
\midrule
$r^0(a,x), b^0(a,x)$ & clean conditional means & No\\
$r_t^{\mathrm{cl}}(a), b_t^{\mathrm{cl}}(a)$ & clean noisy realizations before corruption & No\\
$r_t(a_t), b_t(a_t)$ & clipped feedback after corruption & Yes, for the chosen action\\
\bottomrule
\end{tabular}
\end{table}

\xhdr{Information and within-round timing.}
For analysis, let $\mathcal F_{t-1}$ be the full history before round $t$: all earlier contexts, actions, clean potential outcomes, corruption, and learner randomization. The learner's own history contains earlier contexts, selected actions, observed feedback, and its past randomization, but not latent clean outcomes or corruption. Its decisions use only this observed history, the current context, and fresh independent randomness.

Before observing $x_t$, the learner applies the budget-safety check in Section~\ref{sec:budget-feasibility}. A round is \emph{active} if this check passes; otherwise, it selects $\Null$ from that round onward. On an active round: 
(i) the learner observes $x_t$; 
(ii) conditional on $\mathcal F_{t-1}$ and $x_t$, the environment generates the clean potential outcomes and fixes the corruption for every action;
(iii) the learner draws $a_t$ using fresh randomness that, conditional on $\mathcal F_{t-1}$ and $x_t$, is independent of these potential outcomes and corruption, and observes only $r_t(a_t)$ and $b_t(a_t)$;
(iv) the learner records the observed resource use and updates its estimates and other algorithm variables.

Thus, corruption is \emph{non-anticipating}: it may depend on $\mathcal F_{t-1}$, the current context, and the generated clean potential outcomes, but not on the learner's fresh round-$t$ random seed. A sample is \emph{retained} when it is used to fit the estimates in Section~\ref{sec:concrete-sparse-estimator}. Both the action and the sample's eligibility for retention are determined before the current feedback is revealed.

\xhdr{Distributional and sparsity assumptions.}
The following conditions use the full pre-round history $\mathcal F_{t-1}$:
\begin{assumption}[Sparse linear contextual model]
\label{ass:base-model}
\leavevmode\par
\begin{enumerate}[label=(\roman*),leftmargin=2em]
\item \textbf{Contexts.} Given $\mathcal F_{t-1}$, $x_t$ has distribution $\mathcal D$ and satisfies $\|x_t\|_2\le1$.
\item \textbf{Clean outcomes.} For every action and resource, $r_t^{\mathrm{cl}}(a)=r^0(a,x_t)+\eta_t^r(a)$ and $b_{t,i}^{\mathrm{cl}}(a)=b_i^0(a,x_t)+\eta_{t,i}^b(a)$, where $\E[\eta_t^r(a)\mid\mathcal F_{t-1},x_t]=0$ and $\E[\eta_{t,i}^b(a)\mid\mathcal F_{t-1},x_t]=0$. The reward noise $\eta_t^r$ and consumption noise $\eta_{t,i}^b$ are conditionally sub-Gaussian with parameters at most $\sigma_r$ and $\sigma_b$, respectively. Clean rewards and every clean consumption coordinate lie in $[0,1]$.
\item \textbf{Sparse coefficients.} For every $a,i$, $\|\mu_a^\star\|_0,\|w_{a,i}^\star\|_0\le s_0$ and $\|\mu_a^\star\|_2,\|w_{a,i}^\star\|_2\le R$. Here $\|v\|_0$ counts the nonzero coordinates of $v$, $s_0$ is the true sparsity level, and $R>0$ bounds the Euclidean norm of each coefficient vector.
\end{enumerate}
\end{assumption}

Clean noise has mean zero given the full history and current context, and the sub-Gaussian parameters control its tail fluctuations. With the timing above, this supports concentration of retained clean-noise sums and unbiased importance-weighted comparisons in the unknown-corruption analysis. Conditioning only on the learner's observations would not suffice for the martingale arguments. The estimator uses a supplied sparsity upper bound $s\ge\max\{1,s_0\}$ for confidence and numerical-precision calibration; it does not constrain the Lasso fit to have at most $s$ nonzero coefficients.

\subsection{Benchmark, Resource Prices, and Effective Corruption}
\label{sec:benchmark-corruption}

\xhdr{Clean benchmark and regret.}
We evaluate the learner using clean rewards rather than corrupted feedback. The benchmark is a \emph{population linear program (LP)} based on the context distribution and clean conditional means. Let $y(a\mid x)$ be the probability of choosing non-null action $a$ given context $x$, with remaining probability assigned to $\Null$. This rule can depend on the context but is the same across rounds. Writing $\E_x$ for expectation over $x\sim\mathcal D$, the LP maximizes expected reward subject to expected resource budgets:
\begin{equation}
\label{eq:population-lp}
\begin{aligned}
\VUB:=\max_y\quad
&T\,\E_x\left[\sum_{a\in\Aset}r^0(a,x)y(a\mid x)\right]\\
\text{s.t.}\quad
&T\,\E_x\left[\sum_{a\in\Aset}b_i^0(a,x)y(a\mid x)\right]\le B_i,
\quad i\in[m],\\
&y(a\mid x)\ge0,
\quad
\sum_{a\in\Aset}y(a\mid x)\le1,
\quad \forall x.
\end{aligned}
\end{equation}

Under the i.i.d.\ model, this value upper-bounds the clean expected reward attainable under the population resource constraints \citep{agrawal2016linear,agrawal2016efficient}. For the analysis, fix a measurable optimizer $y^\star$ before interaction. The learner does not know or solve this LP. Its clean expected regret is
\begin{equation}
\label{eq:regret}
\Reg(T):=\VUB-\E\left[\sum_{t=1}^T r^0(a_t,x_t)\right],
\end{equation}
where the expectation includes contexts, clean noise, learner randomness, and any randomized corruption.

\xhdr{Resource prices.}
To compare reward with resource use, assign nonnegative weights $\lambda\in\Lambda:=\{\lambda\in\R_+^m:\|\lambda\|_1\le1\}$ and choose an overall scale $Z>0$. The weights have total mass at most one, and $Z\lambda_i$ is the price of one unit of resource $i$ in reward units. The \emph{clean Lagrangian score} is expected reward minus priced expected consumption:
\[
F^0(a,x;\lambda)=r^0(a,x)-Z\lambda^\top b^0(a,x).
\]
\begin{assumption}[Lagrangian scale]
\label{ass:Z}
Choose a positive scale $Z>0$ satisfying $Z\ge \VUB/B_{\min}$, where $B_{\min}=\min_iB_i$.
\end{assumption}

The choice $Z=T/B_{\min}$ is computable from the horizon and budgets and satisfies Assumption~\ref{ass:Z} because $\VUB\le T$. This scale ensures that, in the stopping argument, depletion of any resource carries enough dual value to offset the remaining benchmark reward. Under proportional budgets, $B_{\min}=\Theta(T)$ and $B_{\max}:=\max_i B_i=O(T)$, this choice gives $Z=O(1)$. Our formal bounds nevertheless keep the dependence on these quantities explicit.

\xhdr{Effective corruption.}
Reward corruption changes the score directly; consumption corruption changes its resource-cost term. For every $\lambda\in\Lambda$, the resulting score change satisfies
\begin{equation}
\label{eq:lagrangian-corruption-bound}
\left|c_t^r(a)-Z\lambda^\top c_t^b(a)\right|
\le |c_t^r(a)|+Z\|c_t^b(a)\|_\infty.
\end{equation}

We therefore measure both types of corruption in the same units through the \emph{effective corruption budget}
\begin{equation}
\label{eq:effective-corruption}
C_Z:=\sum_{t=1}^T\max_{a\in\Aset}\left\{|c_t^r(a)|+Z\|c_t^b(a)\|_\infty\right\}.
\end{equation}

The maximum covers all non-null actions before the fresh action draw on each round. In the \emph{known-corruption} setting, the learner receives a deterministic bound $\Gamma$ satisfying $C_Z\le\Gamma$ on every admissible realization; in the \emph{unknown-corruption} setting, no such bound is supplied. Knowing $\Gamma$ does not identify corrupted observations. For later analysis, we also measure corruption along the actions actually played:
\[
\begin{aligned}
C_Z(t):=\sum_{\tau=1}^t\bigl(|c_\tau^r(a_\tau)|+Z\|c_\tau^b(a_\tau)\|_\infty\bigr),\quad
C_b(t):=\sum_{\tau=1}^t\|c_\tau^b(a_\tau)\|_\infty.
\end{aligned}
\]
Here $C_Z(t)$ measures both types of corruption in score units, whereas $C_b(t)$ measures only consumption corruption. Both use the selected actions, not the worst actions as in $C_Z$; in particular, $C_Z(t)\le C_Z$.

For estimation, let $\mathcal S_a^{\mathrm{est}}(t)$ contain the earlier rounds $\tau<t$ when action $a$ was selected and its feedback retained for fitting (Section~\ref{sec:concrete-sparse-estimator}). The corresponding reward and resource-$i$ corruption totals are
\[
C_{a,\mathrm{est}}^r(t):=\sum_{\tau\in\mathcal S_a^{\mathrm{est}}(t)}|c_\tau^r(a)|,
\quad
C_{a,i,\mathrm{est}}^b(t):=\sum_{\tau\in\mathcal S_a^{\mathrm{est}}(t)}|c_{\tau,i}^b(a)|.
\]
All these corruption quantities are for analysis: the learner cannot compute them without the clean outcomes.

\subsection{Observed and Clean Budget Feasibility}
\label{sec:corruption-feasibility}
\label{sec:budget-feasibility}

Because the learner records observed consumption, its budget account may differ from clean resource use. Choose an \emph{operating budget} $0<B^{\mathrm{op}}\le B$, with coordinatewise inequalities. This is the resource limit used by the algorithm; a smaller operating budget reserves a buffer against consumption corruption. Let $U_t=\sum_{\tau<t}b_\tau(a_\tau)$ be the observed use before round $t$. A non-null action is allowed only if at least one unit remains in every operating budget, enough to cover any observed consumption on that round:
\begin{equation}
\label{eq:pre-action-safety}
U_{t,i}\le B_i^{\mathrm{op}}-1,
\quad \forall i\in[m].
\end{equation}

Otherwise the learner selects $\Null$ from that round onward. The one-unit safety margin is already included in \eqref{eq:pre-action-safety}, so it is not subtracted again from $B^{\mathrm{op}}$. For $[z]_+:=\max\{z,0\}$, define the largest observed and clean budget excesses relative to the original budget vector $B$:
\[
\begin{aligned}
\Viol_{\mathrm{obs}}(T):=\max_{i\in[m]}\Bigl[\sum_{t=1}^T b_{t,i}(a_t)-B_i\Bigr]_+,\quad
\Viol_{\mathrm{cl}}(T):=\max_{i\in[m]}\Bigl[\sum_{t=1}^T b_{t,i}^{\mathrm{cl}}(a_t)-B_i\Bigr]_+.
\end{aligned}
\]
Recall that $C_b(T)$ is total consumption corruption along the played actions. Below, $\Gamma_b$ is an upper bound on this quantity, when available, and $\vone$ is the all-ones vector. The guarantees hold on every realization.

\begin{proposition}[Three feasibility levels]
\label{prop:feasibility-levels}
Under \eqref{eq:pre-action-safety}, for every resource $i$:
\begin{enumerate}[label=(\roman*),leftmargin=2em,nosep]
\item $\sum_{t=1}^T b_{t,i}(a_t)\le B_i^{\mathrm{op}}$ (exact observed feasibility);
\item $\sum_{t=1}^T b_{t,i}^{\mathrm{cl}}(a_t)\le B_i^{\mathrm{op}}+C_b(T)$ (clean sample-path control);
\item if $C_b(T)\le\Gamma_b$, $B_i>\Gamma_b$ for every $i$, and $B^{\mathrm{op}}=B-\Gamma_b\vone$, then $\sum_{t=1}^T b_{t,i}^{\mathrm{cl}}(a_t)\le B_i$ (strict clean sample-path feasibility).
\end{enumerate}
\end{proposition}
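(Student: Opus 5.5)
The plan is to work pathwise on a fixed realization and to exploit two features of the rule \eqref{eq:pre-action-safety}. First, observed consumption on any single round lies in $[0,1]$ coordinatewise. Second, once the check fails the learner selects $\Null$ forever, and $\Null$ has zero observed and clean consumption. For part (i), let $t^\star$ be the last active round on which a non-null action is selected; if no such round exists the sum is zero and there is nothing to prove. At $t^\star$ the check passed, so $U_{t^\star,i}\le B_i^{\mathrm{op}}-1$ for every $i$. Adding the round-$t^\star$ consumption $b_{t^\star,i}(a_{t^\star})\le 1$ gives $U_{t^\star+1,i}\le B_i^{\mathrm{op}}$. All later rounds contribute zero, whether they are inactive (null from then on) or are active rounds on which the learner chose $\Null$. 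Hence $\sum_{t=1}^T b_{t,i}(a_t)=U_{t^\star+1,i}\le B_i^{\mathrm{op}}$.

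For part (ii), I would use the definition \eqref{eq:clipped-deviations} to write $b_{t,i}^{\mathrm{cl}}(a_t)=b_{t,i}(a_t)-c_{t,i}^b(a_t)\le b_{t,i}(a_t)+\|c_t^b(a_t)\|_\infty$. This holds for null rounds as well, since every term is zero there. Summing over $t$ and applying (i) bounds the first sum by $B_i^{\mathrm{op}}$. The second sum is exactly $C_b(T)$, which gives (ii).

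For part (iii), I would substitute $B_i^{\mathrm{op}}=B_i-\Gamma_b$ into (ii) and use $C_b(T)\le\Gamma_b$. The assumption $B_i>\Gamma_b$ is needed only so that $B^{\mathrm{op}}>0$ is a legitimate operating budget, as required in the definition. Then $\sum_t b_{t,i}^{\mathrm{cl}}(a_t)\le B_i-\Gamma_b+C_b(T)\le B_i$.

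No step here is a genuine obstacle. The only point needing care is the boundary bookkeeping in (i). The safety check is applied before the context is observed, and it uses $U_t$, which excludes round $t$, so exactly one extra unit of observed consumption can occur after a passing check. After the first failure no further consumption occurs. One should also confirm that the argument does not depend on whether the check can fail and then later pass: because the learner stays at $\Null$ permanently once the check fails, the last non-null round $t^\star$ is always a round on which the check passed.
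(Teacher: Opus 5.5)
Your proposal is correct and follows essentially the same argument as the paper. Part (i) uses the one-unit margin in the pre-action check together with $b_{t,i}(a_t)\le 1$ and zero null consumption, part (ii) uses the decomposition $b_{t,i}^{\mathrm{cl}}(a_t)=b_{t,i}(a_t)-c_{t,i}^b(a_t)$ summed against (i), and part (iii) substitutes $B^{\mathrm{op}}=B-\Gamma_b\vone$. Your last-non-null-round bookkeeping in (i) is just a more explicit rendering of the paper's one-line step.
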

\begin{proof}[Proof]
Before a non-null action, the rule in \eqref{eq:pre-action-safety} and $b_{t,i}(a_t)\le1$ imply that the updated observed use is at most $B_i^{\mathrm{op}}$. Null actions use no resources, which proves (i). Since $b_{t,i}^{\mathrm{cl}}(a_t)=b_{t,i}(a_t)-c_{t,i}^b(a_t)$,
\[
\sum_{t=1}^T b_{t,i}^{\mathrm{cl}}(a_t)
\le \sum_{t=1}^T b_{t,i}(a_t) +\sum_{t=1}^T|c_{t,i}^b(a_t)|
\le B_i^{\mathrm{op}}+C_b(T),
\]
which proves (ii). Substituting $B_i^{\mathrm{op}}=B_i-\Gamma_b$ proves (iii).
\end{proof}

For the default choice $B^{\mathrm{op}}=B$, Proposition~\ref{prop:feasibility-levels} gives
\[
\Viol_{\mathrm{obs}}(T)=0,
\quad
\Viol_{\mathrm{cl}}(T)\le C_b(T)\le C_Z/Z.
\]

The last inequality follows because the per-round maximum defining $C_Z$ dominates $Z\|c_t^b(a_t)\|_\infty$. Hence $C_Z\le\Gamma$ also certifies $\Viol_{\mathrm{cl}}(T)\le\Gamma/Z$. A corruption buffer protects clean feasibility but leaves fewer resources for reward. Let $V^{\mathrm{UB}}(B')$ denote the population-LP value with budget vector $B'$. To apply the regret guarantee with a reduced operating budget, first use that budget's benchmark and a scale $Z\ge V^{\mathrm{UB}}(B^{\mathrm{op}})/B_{\min}^{\mathrm{op}}$, where $B_{\min}^{\mathrm{op}}:=\min_i B_i^{\mathrm{op}}$; the computable choice $Z=T/B_{\min}^{\mathrm{op}}$ is sufficient. All quantities that depend on $Z$, including the corruption allowance and confidence radii, use this same scale. Comparison with the original budget then adds $V^{\mathrm{UB}}(B)-V^{\mathrm{UB}}(B^{\mathrm{op}})$; for the buffered choice, $B^{\mathrm{op}}=B-\Gamma_b\vone$.

\begin{remark}[Conditional-mean feasibility]
Proposition~\ref{prop:feasibility-levels} controls clean realized consumptions $b_{t,i}^{\mathrm{cl}}(a_t)$, not their conditional means. To bound $\sum_t b_i^0(a_t,x_t)$, one must also account for clean stochastic noise. Under Assumption~\ref{ass:base-model}, a martingale concentration argument gives an additional buffer of order $\sqrt{T\log(m/\delta)}$, where $\delta\in(0,1)$ is the failure probability. Adding this concentration buffer gives the corresponding conditional-mean feasibility statement.
\end{remark}

\subsection{Notation Summary}
Table~\ref{tab:model-notation} collects the main notation; its final three rows preview quantities defined in Sections~\ref{sec:overview}--\ref{sec:unknown}.
\begin{table}[H]
\centering
\caption{Model notation and selected quantities introduced later.}
\label{tab:model-notation}
\label{tab:main-notation}
\small
\begin{tabularx}{\linewidth}{@{}lX@{}}
\toprule
Symbol & Meaning\\
\midrule
$T,K,d,m$ & Number of rounds, non-null actions, context coordinates, and resources.\\
$x_t,a_t$; $B$ & Observed context and selected action at round $t$; resource-budget vector.\\
$r^0,b^0$ & Clean conditional reward and consumption means used by the benchmark.\\
$r^{\mathrm{cl}},b^{\mathrm{cl}}$ & Clean noisy realizations before corruption.\\
$r,b$ & Clipped corrupted feedback observed for the played action.\\
$c^r,c^b$ & Changes from clean to observed outcomes, measured after clipping.\\
$\VUB,\Reg(T)$ & Clean population-LP value and clean expected regret.\\
$Z,\lambda,\Lambda$ & Price scale, normalized resource-price vector, and its admissible set.\\
$C_Z,\Gamma$ & Global effective corruption with a worst-action maximum on each round; supplied upper bound in the known-corruption setting.\\
$C_Z(t)$ & Effective corruption accumulated along the selected actions through round $t$.\\
$C_b(t)$ & Consumption corruption accumulated along the selected actions through round $t$.\\
$B^{\mathrm{op}},U_t$ & Operating budget and observed resource use before round $t$.\\
$R$; $s_0,s$ & Coefficient-norm bound; true sparsity and supplied calibration upper bound.\\
\midrule
\multicolumn{2}{@{}l}{\emph{Selected notation introduced in the estimation and algorithm sections}}\\
$\kappa_{\mathrm{FE}},\kappa_{\mathrm{op}}$ & Curvature bounds for forced-exploration and on-policy designs (Appendix~\ref{app:concrete-design-conditions}).\\
$A_{\mathrm{FE}},A_{\mathrm{op}}$; $n_0$ & Coefficients in corruption-free confidence widths; minimum retained-sample count per action (Section~\ref{sec:estimator-contract}).\\
$\chi,n_{\mathrm{cand}}$; $t_{\mathrm{init}}$ & Constants comparing recommendation counts with actual plays; safe initialization length (Section~\ref{sec:unknown}).\\
\bottomrule
\end{tabularx}
\end{table}

\section{Estimation and Optimistic Primal--Dual Learning}
\label{sec:overview}

The method has two components: an estimator predicts reward and resource consumption, and an allocation rule uses these predictions and their uncertainty to choose actions and update resource prices. We first describe a norm-constrained Lasso estimator, then state the confidence and coverage requirements that connect estimation to regret, and finally give the allocation and price updates. The allocation guarantees depend on these requirements, not on Lasso itself. Calibration, proofs, and certified optimization details are in Appendix~\ref{app:prediction-guarantees}.

\subsection{Sparse Estimation and Prediction Confidence}
\label{sec:concrete-sparse-estimator}

For each non-null action, we fit one reward model and one model for each resource. A \emph{retained observation} is an observation used in these fits. Forced exploration (FE) retains observations from context-independent uniform exploration; Shared-Grid also retains its fixed initialization samples. On-policy learning (OP) instead retains every non-null observation. All models for the same action use the same retained contexts.

Fix an action $a$ and one response, called a \emph{channel}: reward with parameter $\theta^\star=\mu_a^\star$, or consumption of resource $i$ with $\theta^\star=w_{a,i}^\star$. Let $\mathcal S_a^{\mathrm{est}}(t)=\{\tau<t:a_\tau=a\text{ and the sample is retained}\}$ be the retained round indices before $t$, and let $n=N_a^{\mathrm{est}}(t):=|\mathcal S_a^{\mathrm{est}}(t)|$. The matrix $X\in\R^{n\times d}$ has the retained contexts as rows, and $y\in\R^n$ contains the corresponding observed responses.

For analysis, write $y=X\theta^\star+\eta+\zeta$, where $\eta$ is clean stochastic noise and $\zeta$ collects the post-clipping corruptions. The learner observes $X$ and $y$, not the separate noise and corruption terms. Whether to retain a sample is decided before its feedback is revealed.

For regularization strength $\rho_n>0$, we use the norm-constrained Lasso fit \citep{tibshirani1996lasso}:
\begin{equation}
\label{eq:lasso-estimator}
\widehat\theta\in\argmin_{\|\theta\|_2\le R}
\left\{\frac{\|y-X\theta\|_2^2}{2n}+\rho_n\|\theta\|_1\right\},\quad n\ge1.
\end{equation}
The $\ell_1$ penalty encourages sparse estimates, while the $\ell_2$ constraint enforces the known coefficient bound $R$. Neither fixes the support of the estimate. Set $\widehat\theta=0$ when $n=0$.

The prediction guarantee uses a supplied sparsity bound $s\ge\max\{1,s_0\}$, a retained-sample threshold $n_0$, and an estimation failure level $\delta_{\mathrm{est}}\in(0,1)$. The bound $s$ calibrates confidence and numerical precision; it is not a support constraint in the fit. The retained design is the matrix of contexts used by that fit. Its restricted strong convexity (RSC) condition requires those contexts to distinguish changes in the coefficient vector, with an $\ell_1$ tolerance for high-dimensional data. The constant $\kappa>0$ measures the curvature in this condition; the precise inequality is \eqref{eq:empirical-rsc}.

Appendix~\ref{app:concrete-estimator} specifies $\rho_n,\kappa,n_0$, and Appendix~\ref{app:tractable-options} specifies how accurately to solve the objective. For a fixed retained history, the fitted coefficients and regularization do not depend on a proposed corruption bound $\Gamma$. Such a bound changes the confidence radii, not the fitting objective.

\begin{proposition}[Prediction confidence for the concrete estimator]
\label{prop:lasso-prediction-main}
Under Assumption~\ref{ass:base-model} and the calibration specified in Appendix~\ref{app:concrete-estimator}, there is a noise event of probability at least $1-\delta_{\mathrm{est}}$, simultaneous over actions, response channels, and obtained sample prefixes. On this event, whenever a retained prefix $n\ge n_0$ satisfies the calibrated RSC condition, any feasible fit meeting the certified optimization tolerance of Appendix~\ref{app:tractable-options} satisfies
\begin{equation}
\label{eq:lasso-prediction-main}
\sup_{\|x\|_2\le1}|\langle\widehat\theta-\theta^\star,x\rangle| \le
\min\left\{2R,\,\frac{7\rho_n\sqrt{s}}{\kappa}+\frac{4C_n}{\kappa n}\right\},
\quad C_n:=\|\zeta\|_1.
\end{equation}
For an exact minimizer, the coefficient $7$ can be replaced by $6$.
\end{proposition}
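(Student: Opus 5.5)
The plan is the standard deterministic Lasso analysis applied to the decomposition $y=X\theta^\star+\eta+\zeta$. Clean noise is handled by the usual dual-norm condition. Corruption is kept as a separate additive term, controlled only through $\|\zeta\|_1$ and the bound $\|x_\tau\|_2\le1$ on each row. Write $\Delta=\widehat\theta-\theta^\star$ and let $S$ be the support of $\theta^\star$, so $|S|\le s$. Since $\|x\|_2\le1$, we have $\sup_{\|x\|_2\le1}|\langle\Delta,x\rangle|=\|\Delta\|_2$. The bound $2R$ is then immediate from $\|\widehat\theta\|_2,\|\theta^\star\|_2\le R$ and the triangle inequality. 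It remains to prove $\|\Delta\|_2\le 7\rho_n\sqrt s/\kappa+4C_n/(\kappa n)$.

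\textbf{Step 1 (noise event).} I would define the event on which $\|X^\top\eta\|_\infty/n\le\rho_n/2$ for every action, channel, and retained prefix length $n$. Each coordinate $\sum_{\tau}x_{\tau j}\eta_\tau$ is a martingale with increments conditionally sub-Gaussian with parameter at most $\sigma$, because $|x_{\tau j}|\le1$ and retention is decided before feedback is revealed. Freedman or Azuma bounds, combined with a union bound over $j\le d$, the $K(m+1)$ channels, and $n\le T$, then fix the calibration $\rho_n\asymp\sigma\sqrt{\log(dKmT/\delta_{\mathrm{est}})/n}$ from Appendix~\ref{app:concrete-estimator}. The corruption term cannot be given a tail bound, so it is handled deterministically.

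\textbf{Step 2 (basic inequality).} Feasibility of $\theta^\star$ compared against the optimality of $\widehat\theta$, up to a certified tolerance $\epsilon_{\mathrm{opt}}$, gives
\[
\frac{\|X\Delta\|_2^2}{2n}\le \frac{\langle \eta+\zeta, X\Delta\rangle}{n}+\rho_n\bigl(\|\theta^\star\|_1-\|\widehat\theta\|_1\bigr)+\epsilon_{\mathrm{opt}}.
\]
The norm constraint costs nothing here, because $\theta^\star$ is feasible; a first-order variational inequality could be used instead. On the noise event, $\langle\eta,X\Delta\rangle/n\le(\rho_n/2)\|\Delta\|_1$. For the corruption term, $\langle\zeta,X\Delta\rangle\le\sum_\tau|\zeta_\tau||\langle x_\tau,\Delta\rangle|\le C_n\|\Delta\|_2$. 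The usual decomposition $\|\theta^\star\|_1-\|\widehat\theta\|_1\le\|\Delta_S\|_1-\|\Delta_{S^c}\|_1$ then yields
\[
\frac{\|X\Delta\|_2^2}{2n}+\frac{\rho_n}{2}\|\Delta_{S^c}\|_1\le \frac{3\rho_n}{2}\|\Delta_S\|_1+\frac{C_n}{n}\|\Delta\|_2+\epsilon_{\mathrm{opt}}.
\]

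\textbf{Step 3 (enlarged cone, the main obstacle).} Because of the corruption term, $\Delta$ need not lie in the standard cone $\|\Delta_{S^c}\|_1\le3\|\Delta_S\|_1$. Instead I obtain an enlarged cone:
\[
\|\Delta_{S^c}\|_1\le 3\|\Delta_S\|_1+\frac{2C_n}{\rho_n n}\|\Delta\|_2+\frac{2\epsilon_{\mathrm{opt}}}{\rho_n}.
\]
Consequently $\|\Delta\|_1\le4\sqrt s\|\Delta\|_2+2C_n\|\Delta\|_2/(\rho_nn)+2\epsilon_{\mathrm{opt}}/\rho_n$. The calibrated RSC condition \eqref{eq:empirical-rsc} has the form $\|X\Delta\|_2^2/n\ge\kappa\|\Delta\|_2^2-\tau_n\|\Delta\|_1^2$ (or a linear-$\ell_1$ variant). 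Substituting the enlarged-cone bound into the tolerance term brings in $(C_n/(\rho_n n))^2\|\Delta\|_2^2$, which cannot be absorbed into $\kappa\|\Delta\|_2^2$ when $C_n$ is large.

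My approach is to split into cases. If $C_n/n\ge\rho_n\sqrt s$, then the claimed radius $4C_n/(\kappa n)$ already exceeds a constant times $\sqrt s\rho_n/\kappa$, and I expect the calibration to make it exceed $2R$ as well, in which case the trivial branch of the minimum finishes the argument. Otherwise, $\|\Delta\|_1\le6\sqrt s\|\Delta\|_2+O(\epsilon_{\mathrm{opt}}/\rho_n)$. The calibrated choices $n\ge n_0$ and $\tau_n\cdot36s\le\kappa/2$ then let RSC deliver curvature $\kappa/2$, up to a constant absorbed into the definition of $\kappa$.

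\textbf{Step 4 (conclude).} Combining the curvature lower bound with Step 2 gives
\[
\frac{\kappa}{2}\|\Delta\|_2^2\le\frac{3\rho_n}{2}\sqrt s\|\Delta\|_2+\frac{C_n}{n}\|\Delta\|_2+\epsilon_{\mathrm{opt}}.
\]
Solving this quadratic inequality in $\|\Delta\|_2$ gives $\|\Delta\|_2\le(3\rho_n\sqrt s+2C_n/n)/\kappa$ plus a tolerance term. The paper's constants $6$ and $4$ come from the exact curvature normalization and the case-split threshold. The certified tolerance of Appendix~\ref{app:tractable-options} is chosen so that $\epsilon_{\mathrm{opt}}$ contributes at most $\rho_n\sqrt s/\kappa$, which raises the constant from $6$ to $7$. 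All steps are deterministic given the noise event, so the bound holds simultaneously over every prefix that satisfies RSC.
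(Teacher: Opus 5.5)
Your Steps 2 and 3 (the basic inequality, the bound $\langle\zeta,X\Delta\rangle\le C_n\|\Delta\|_2$, and the enlarged cone) match the paper's Lemma~\ref{lem:lasso-corruption}. The gap is in the case split you use to close Step 3. You split at $\gamma:=C_n/n\ge\rho_n\sqrt s$ and expect the calibration to give $4\gamma/\kappa\ge 2R$. That would require $\rho_n\sqrt s\ge R\kappa/2$, but $\rho_n\sqrt s$ decays like $n^{-1/2}$ while $R\kappa/2$ is fixed.

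Concretely, take $\gamma=10\rho_n\sqrt s$ with $n$ large. The target radius is then about $47\rho_n\sqrt s/\kappa\ll 2R$, so the trivial branch of the minimum does not apply. Your absorption argument also fails there if you only use $\omega_n\le\kappa/(256s)$. The enlarged cone gives $\|\Delta\|_1\le 24\sqrt s\|\Delta\|_2+\cdots$, so $\omega_n\|\Delta\|_1^2$ can reach $2.25\,\kappa\|\Delta\|_2^2$, which cannot be absorbed into $\kappa\|\Delta\|_2^2$. The regime $\rho_n\sqrt s<\gamma<R\kappa/2$ is therefore left unproved.

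Two ingredients are missing.
\begin{itemize}
\item \textbf{Threshold on the actual tolerance.} Place the split at $\gamma\le\rho_n\sqrt{\kappa/(64\omega_n)}$, using $\omega_n=\upsilon\ell/n$ itself rather than its upper bound. Then both $64s\,\omega_n$ and $16\omega_n\gamma^2/\rho_n^2$ are at most $\kappa/4$, so $Q(\Delta)\ge\kappa r^2/2-8\omega_n\varepsilon_{\mathrm{opt}}^2/\rho_n^2$. Combined with the basic inequality this gives $\kappa r^2/4\le(3\rho_n\sqrt s/2+\gamma)r+\varepsilon_{\mathrm{opt}}+4\omega_n\varepsilon_{\mathrm{opt}}^2/\rho_n^2$, which yields exactly $6\rho_n\sqrt s/\kappa+4\gamma/\kappa$ plus the tolerance term.
\item \textbf{Regularization floor for the other case.} In the complementary case, use $\rho_n\ge4R\sqrt{\kappa\omega_n}$. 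The calibration builds this in through the $R\sqrt{\kappa\upsilon}$ term in $\rho_n$. Your Step 1 instead treats $\rho_n$ as purely noise-driven, which would give $\rho_n=0$ when $\sigma=0$. The floor gives $\gamma>\rho_n\sqrt{\kappa/(64\omega_n)}\ge R\kappa/2$, hence $\|\Delta\|_2\le2R<4\gamma/\kappa$.
\end{itemize}
The ratio $\rho_n/\sqrt{\omega_n}$ does not depend on $n$. So this threshold stays of order at least $R\kappa$ for every $n$, which is what makes the two cases meet.

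Separately, your Step 4 drops a factor of two. The basic inequality has $\tfrac12 Q(\Delta)$ on the left, so curvature $\kappa r^2/2$ gives $\kappa r^2/4$, not $\kappa r^2/2$. That factor is where the constants $6$ and $4$ come from. Nothing can be ``absorbed into $\kappa$'', since $\kappa$ is a supplied calibration constant.
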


Here $C_n$ measures corruption only in the retained responses for this action and channel. The term $7\rho_n\sqrt{s}/\kappa$ is the clean estimation contribution: $\rho_n$ decreases as $n^{-1/2}$ under the stated calibration, $\sqrt{s}$ reflects sparsity, and $1/\kappa$ reflects design conditioning. The additional term $4C_n/(\kappa n)$ measures the effect of retained corruption. The cap $2R$ follows from the norm constraint, and the coefficient $7$, rather than $6$, allows the certified numerical error. Appendix~\ref{app:robust-sparse-estimators} gives the proof, including the large-corruption case.

\subsection{Coverage and the Estimator Interface}
\label{sec:estimator-contract}

The allocation rule needs predictions and confidence radii from estimator. Write $\widehat\mu_{a,t-1}$ and $\widehat w_{a,i,t-1}$ for the reward and resource fits using observations before round $t$. At the current context, their clipped predictions are $\widehat r_{a,t}=\clip(\langle\widehat\mu_{a,t-1},x_t\rangle,0,1)$ and $\widehat b_{a,i,t}=\clip(\langle\widehat w_{a,i,t-1},x_t\rangle,0,1)$. The interface imposes two requirements: the radii must cover prediction errors, and the uncertainty charged over the run must be controlled.

\xhdr{Pointwise prediction confidence.}
A corruption candidate $\Gamma$ is a proposed upper bound on $C_Z$; it is valid when $\Gamma\ge C_Z$. Let $\beta_{a,t}^{r,\Gamma}$ and $\beta_{a,i,t}^{b,\Gamma}$ be nonnegative error radii for the reward and resource-$i$ predictions. These radii bound errors in \emph{clean conditional means}, not errors relative to the noisy observed feedback. Predictions and radii use the observed history and $x_t$ and are fixed before the fresh action draw. On a common retained history, candidates share point estimates and differ only in their radii.

Let $\delta_{\mathrm{cov}}\in(0,1)$ be the failure level for the design/coverage event. On one joint event of failure at most $\delta_{\mathrm{est}}+\delta_{\mathrm{cov}}$, require simultaneously for every valid $\Gamma\ge C_Z$,
\begin{equation}
\label{eq:general-pointwise-confidence}
|\widehat r_{a,t}-r^0(a,x_t)|\le\beta_{a,t}^{r,\Gamma},\quad
|\widehat b_{a,i,t}-b_i^0(a,x_t)|\le\beta_{a,i,t}^{b,\Gamma}
\quad(a\in\Aset,\ i\in[m],\ t\in[T]).
\end{equation}
For Lasso, Proposition~\ref{prop:lasso-prediction-main} and $C_Z\le\Gamma$ give the radii in Appendix~\ref{app:concrete-radii}; use unit radii before $n_0$ and zero predictions and radii for $\Null$. The allocation rule compares reward minus resource cost. Its combined radius therefore adds reward uncertainty to resource uncertainty in the same units. Since $\|\lambda\|_1\le1$, the largest resource radius suffices. Write $h$ for the length of the clean-score interval $[-Z,1]$:
\begin{equation}
\label{eq:score-radius-interface}
\beta_{a,t}^{Z,\Gamma}:=\beta_{a,t}^{r,\Gamma}+Z\max_i\beta_{a,i,t}^{b,\Gamma},
\quad h:=1+Z.
\end{equation}
\xhdr{Cumulative uncertainty.}
Pointwise confidence alone is not enough for a regret bound: the widths charged over the run must also remain small. Let $\widehat a_t$ denote the optimistic recommendation defined in Section~\ref{sec:core-primal-dual-rule}, with the fixed corruption input suppressed in this notation. Let $I_t\in\{0,1\}$ indicate whether the pre-action safety test \eqref{eq:pre-action-safety} passes; it remains one on an active round even if $\Null$ is subsequently chosen. Require on the same event
\begin{equation}
\label{eq:cumulative-width-interface-main}
2\sum_{t=1}^T I_t\beta_{\widehat a_t,t}^{Z,\Gamma}
\le\Rest^0+\psi_{\mathrm{est}}(\Gamma).
\end{equation}

Here $\Rest^0$ bounds cumulative uncertainty from clean estimation, and $\psi_{\mathrm{est}}(\Gamma)$ bounds the additional width caused by retained corruption. Both are analysis quantities, not algorithm inputs. The optimistic comparison charges uncertainty to the recommended action, so no separate cumulative-width bound is needed along the population-LP policy. Shared-Grid needs the corresponding bound uniformly over valid candidates on one shared history; see \eqref{eq:uniform-grid-width-interface}. Any estimator meeting these confidence and cumulative-width requirements can replace Lasso without changing the allocation analysis.

\xhdr{Design conditions for the concrete rates.}
The conditions below establish the interface for Lasso; the general allocation theorem does not impose them on another estimator that already meets the interface. FE selects its retained actions independently of the current context on active rounds, allowing a population condition to control the retained designs. OP uses adaptively selected observations and instead requires a condition on its actual retained designs. Restricted-curvature conditions are standard in sparse estimation \citep{negahban2012unified,raskutti2010restricted}, but adaptive action selection must also be accounted for \citep{bastani2020online}. Appendix~\ref{app:concrete-design-conditions} gives the calibrations and proofs.

\begin{assumption}[Population design for the concrete estimator]
\label{ass:sparse-context}
For a generic context $x\sim\mathcal D$, known constants $\kappa_x,L_x>0$ satisfy
\begin{equation}
\label{eq:population-rsc-calibration}
\E[xx^\top]\succeq\kappa_x I_d,\quad
\E\exp\left(\frac{\langle u,x\rangle^2}{L_x^2\|u\|_2^2}\right)\le2
\quad (u\ne0).
\end{equation}
\end{assumption}
Here $\kappa_x$ lower-bounds the eigenvalues of the population second-moment matrix $\E[xx^\top]$, while $L_x$ controls the directional sub-Gaussian scale.

For OP, the relevant information is in the contexts actually retained for each action, rather than in the population distribution alone.

\begin{condition}[On-policy retained-design coverage]
\label{ass:on-policy-re}
For supplied design parameters $\kappa_{\mathrm{op}},\upsilon_{\mathrm{op}}>0$ and threshold $n_0$, the action-specific realized retained designs satisfy the calibrated RSC event defined in Appendix~\ref{app:concrete-design-conditions}, simultaneously after $n_0$ samples per action, with probability at least $1-\delta_{\mathrm{cov}}$. Here $\kappa_{\mathrm{op}}$ is the RSC curvature and $\upsilon_{\mathrm{op}}$ controls its $\ell_1$ tolerance.
\end{condition}

This condition concerns the algorithm's actual adaptive, corrupted, and potentially stopped trajectory; population richness alone does not imply it. Shared-Grid OP also needs the separate recommendation-coverage condition in Section~\ref{sec:unknown}, because a candidate may recommend an action that is not played.

\xhdr{Rate constants.}
The concrete regret bounds use $\ell:=\log\bigl(16eK(m+1)dT/\min\{\delta_{\mathrm{est}},\delta_{\mathrm{cov}}\}\bigr)$ for concentration and $H_T:=1+\log(1+T)$ for the harmonic factor arising when widths are summed. The retained-design curvatures are $\kappa_{\mathrm{FE}}:=\kappa_x/2$ for FE and $\kappa_{\mathrm{op}}$ from Condition~\ref{ass:on-policy-re} for OP. The coefficients $A_{\mathrm{FE}}$ and $A_{\mathrm{op}}$ multiply the corresponding corruption-free $n^{-1/2}$ score widths. They include model and design parameters, not just universal constants; Appendix~\ref{app:concrete-design-conditions} gives their full dependence.

\subsection{Optimistic Primal--Dual Rule}
\label{sec:core-primal-dual-rule}

The \emph{primal} step recommends an action using estimated reward minus resource cost, with an uncertainty bonus. The \emph{dual} step adjusts resource prices using the consumption actually observed.

At prices $\lambda_t\in\Lambda$, substitute the fitted reward and consumption means into the clean score $F^0(a,x_t;\lambda_t)$. This gives the estimated score $\widehat F_{a,t}$; candidate $\Gamma$ recommends the action with the largest estimate plus its confidence radius:
\begingroup
\interdisplaylinepenalty=10000
\begin{align}
\label{eq:estimated-lagrangian-score}
\widehat F_{a,t}(\lambda_t)
&:=
\widehat r_{a,t}-Z\sum_{i=1}^m\lambda_{t,i}\widehat b_{a,i,t},\\
\label{eq:optimistic-recommendation}
\widehat a_t^\Gamma
&\in
\argmax_{a\in\Asetzero}\{\widehat F_{a,t}(\lambda_t)+\beta_{a,t}^{Z,\Gamma}\},
\end{align}
\endgroup
with zero score and radius for $\Null$ and the fixed order $\Null,1,\ldots,K$ for ties. $\widehat a_t^\Gamma$ is the optimistic recommendation for candidate $\Gamma$, computed on rounds that pass the pre-action safety test \eqref{eq:pre-action-safety}. The actually played action $a_t$ may differ because of forced exploration or the Shared-Grid master; if the safety test fails, the learner plays $\Null$ from that round onward.

For the price update, represent $\lambda_t$ by the first $m$ coordinates of a probability vector $q_t\in\R_+^{m+1}$: $\lambda_t=q_{t,1:m}$ and $q_{t,m+1}=1-\|\lambda_t\|_1$. The last coordinate holds unused price mass; it is not another physical resource. Compare observed consumption with the per-round operating-budget target $B^{\mathrm{op}}/T$. For stepsize $\eta_q>0$, set $\widetilde g_t=(b_t(a_t)-B^{\mathrm{op}}/T,0)$, with zero in the slack coordinate, and update
\begin{equation}
\label{eq:dual-mw-update}
q_{t+1,j}\propto q_{t,j}e^{\eta_q\widetilde g_{t,j}},\quad \lambda_{t+1}=q_{t+1,1:m}.
\end{equation}
Normalize $q_{t+1}$ to sum to one. The update increases the relative weight of resources with larger excess use. We denote its cumulative price-learning penalty by
\begin{equation}
\label{eq:dual-control-term}
\DT
:=
Z\left[\frac{\log(m+1)}{\eta_q}+\eta_q\left(1+\frac{B_{\max}}{T}\right)^2T\right].
\end{equation}
Choosing $\eta_q \asymp \frac{\sqrt{\log(m+1)/T}}{1+B_{\max}/T}$ gives $\DT = O\left(Z\left(1+\frac{B_{\max}}{T}\right)\sqrt{T\log(m+1)}\right)$.

\section{Robust Learning with Known Corruption}
\label{sec:general-known}
\label{app:known-details}

\subsection{Algorithm and Inputs}

We now assume that the learner receives a deterministic bound $\Gamma\ge C_Z$ that is valid on every admissible realization. On each active round, \ROPD{} forms an optimistic recommendation, selects an action, and updates its estimates, resource prices, and observed-budget account from the played action's feedback. Forced exploration may replace the recommendation by a uniform non-null action; on-policy learning plays the recommendation.

Using the fitted score and radius from Section~\ref{sec:overview}, write the optimistic score as
\begin{equation}
\label{eq:known-optimistic-score}
U_{a,t}^{\Gamma}:=\widehat F_{a,t}(\lambda_t)+\beta_{a,t}^{Z,\Gamma},
\end{equation}
with $U_{\Null,t}^{\Gamma}=0$. Here $U_{a,t}^{\Gamma}$ is a scalar optimistic score for action $a$, distinct from the cumulative observed-consumption vector $U_t$.

In FE mode, $\epsilon_t$ is the probability of a uniform forced action at round $t$, and only forced observations are retained for estimation. OP retains every non-null observation and sets $\epsilon_t=0$. For a fixed retained history, $\Gamma$ changes confidence radii but not the fitting objective; it can still affect future data through action selection.

Table~\ref{tab:known-inputs} lists the inputs, and Algorithm~\ref{alg:known} gives the complete procedure. In the algorithm, $\mathcal H_{a,t}^{\mathrm{est}}$ denotes action $a$'s retained data through round $t$, so $|\mathcal H_{a,t-1}^{\mathrm{est}}|=N_a^{\mathrm{est}}(t)$. The concrete fits are specified in Section~\ref{sec:concrete-sparse-estimator} and Appendix~\ref{app:concrete-estimator}. Another estimator meeting the interface can replace these fits without changing the allocation, price, or safety updates.

\begin{table}[H]
\centering
\caption{Inputs to the complete known-corruption method.}
\label{tab:known-inputs}
\small
\begin{tabularx}{\linewidth}{@{}lX@{}}
\toprule
Input & Definition and role\\
\midrule
$B^{\mathrm{op}}$ & Operating budget used by the observed-consumption safety account; the default is $B$.\\
$\Gamma$ & Deterministic supplied bound satisfying $C_Z\le\Gamma$ on every admissible realization.\\
$Z$ & For $B^{\mathrm{op}}=B$, a positive scale satisfying Assumption~\ref{ass:Z}; $Z=T/B_{\min}$ suffices. The reduced-budget calibration is described below.\\
$\eta_q$ & Multiplicative-weights stepsize for the resource-price distribution.\\
$\epsilon_t$ & Probability of a uniform non-null forced-exploration override; it is zero in on-policy mode.\\
$n_0$ & Action-specific retained-sample threshold before nontrivial sparse confidence is used.\\
$(\kappa,\upsilon)$ & Supplied RSC calibration justified by \eqref{eq:forced-rsc-calibration} or Condition~\ref{ass:on-policy-re}.\\
$s,R,\sigma_r,\sigma_b$ & Supplied sparsity, parameter-norm, and noise bounds for Lasso confidence.\\
$\delta_{\mathrm{est}},\delta_{\mathrm{cov}}$ & Failure levels used in $\ell$ and design calibration.\\
\bottomrule
\end{tabularx}
\end{table}

\begin{algorithm}[t]
\caption{Robust Optimistic Primal--Dual Learning with Known Corruption (\ROPD)}
\label{alg:known}
\label{alg:known-main}
\begingroup
\SingleSpacedXI
\small
\algrenewcommand\algorithmicindent{1.15em}
\begin{algorithmic}[1]
\Require Horizon $T$; actions $\Aset=[K]$ and $\Asetzero=\Aset\cup\{\Null\}$; budgets $0<B^{\mathrm{op}}\le B$; corruption bound $\Gamma\ge C_Z$; scale $Z$; dual stepsize $\eta_q$.
\Statex \textbf{Coverage inputs:} $\mathsf{mode}\in\{\mathsf{forced},\mathsf{on\mbox{-}policy}\}$ and exploration probabilities $\epsilon_t\in[0,1]$ ($\epsilon_t=0$ in on-policy mode).
\Statex \textbf{Estimation module:} predictions and radii satisfying \eqref{eq:general-pointwise-confidence} and \eqref{eq:cumulative-width-interface-main}, with the module's required inputs.
\Statex \textbf{Estimator inputs:} $s,R,\sigma_r,\sigma_b$, calibrated $(\kappa,\upsilon,n_0)$, failure levels $\delta_{\mathrm{est}},\delta_{\mathrm{cov}}$, and precision $s\rho_n^2/(8\kappa)$.
\State Set $\mathcal H_{a,0}^{\mathrm{est}}=\varnothing$ for all $a\in\Aset$, $U_1=0$, $q_{1,j}=1/(m+1)$ for $j\in[m+1]$, and $\lambda_1=q_{1,1:m}$.
\For{$t=1,\ldots,T$}
  \If{$U_{t,i}>B_i^{\mathrm{op}}-1$ for some $i\in[m]$}
    \State Play $a_t=\Null$; keep $(U_{t+1},q_{t+1},\lambda_{t+1})=(U_t,q_t,\lambda_t)$ and $\mathcal H_{a,t}^{\mathrm{est}}=\mathcal H_{a,t-1}^{\mathrm{est}}$ for all $a$; \textbf{continue}.
  \EndIf
  \State Observe $x_t$.
  \For{$a\in\Aset$}
    \State Set $n=|\mathcal H_{a,t-1}^{\mathrm{est}}|$. If $n=0$ or no justified calibration is supplied, use zero coefficients; otherwise obtain the channel fits with Appendix~\ref{app:tractable-options}'s certified solver, reusing unchanged fits.
    \State Use the supplied calibration and certified Lasso precision. Use unit channel radii if $n<n_0$ or calibration is unavailable; otherwise use \eqref{eq:concrete-reward-radius}--\eqref{eq:concrete-consumption-radius}.
    \State Compute $\beta_{a,t}^{Z,\Gamma}$ and $U_{a,t}^{\Gamma}$ from \eqref{eq:known-optimistic-score}.
  \EndFor
  \State Set $U_{\Null,t}^{\Gamma}=0$ and choose $\wh a_t\in\argmax_{a\in\Asetzero}U_{a,t}^{\Gamma}$ using the fixed tie-breaking rule.
  \State Draw $e_t\sim\operatorname{Bernoulli}(\epsilon_t)$. If $e_t=1$, draw $a_t$ uniformly from $\Aset$; otherwise set $a_t=\wh a_t$.
  \If{$a_t\ne\Null$}
    \State Observe $r_t(a_t)$ and $b_t(a_t)$.
  \Else
    \State Set $r_t(\Null)=0$ and $b_t(\Null)=0$.
  \EndIf
  \State Set $\mathcal H_{a,t}^{\mathrm{est}}=\mathcal H_{a,t-1}^{\mathrm{est}}$ for every $a\in\Aset$.
  \If{$a_t\ne\Null$ and $[\mathsf{mode}=\mathsf{on\mbox{-}policy}\text{ or }e_t=1]$}
    \State Append $(x_t,r_t(a_t),b_t(a_t))$ to $\mathcal H_{a_t,t}^{\mathrm{est}}$.
  \EndIf
  \State Set $g_t=b_t(a_t)-B^{\mathrm{op}}/T$ and $\wt g_t=(g_t,0)\in\R^{m+1}$.
  \State Set $\displaystyle q_{t+1,j}=\frac{q_{t,j}\exp(\eta_q\wt g_{t,j})}{\sum_{\ell=1}^{m+1}q_{t,\ell}\exp(\eta_q\wt g_{t,\ell})}, \forall j\in[m+1]$; then set $\lambda_{t+1}=q_{t+1,1:m}$ and $U_{t+1}=U_t+b_t(a_t)$.
\EndFor
\end{algorithmic}
\endgroup
\end{algorithm}

The pre-action check leaves room for one more observed consumption, since $b_{t,i}(a_t)\le1$. It therefore gives exact observed feasibility, $\sum_{t=1}^T b_{t,i}(a_t)\le B_i^{\mathrm{op}}$. If a valid bound $C_b(T)\le\Gamma_b$ is known and $B_i>\Gamma_b$ for every resource, using $B^{\mathrm{op}}=B-\Gamma_b\vone$ also gives strict clean sample-path feasibility by Proposition~\ref{prop:feasibility-levels}.

The regret guarantees below use the default budget $B^{\mathrm{op}}=B$. For a reduced operating budget, first apply the analysis to the reduced-budget benchmark, with $Z\ge V^{\mathrm{UB}}(B^{\mathrm{op}})/B_{\min}^{\mathrm{op}}$, where $B_{\min}^{\mathrm{op}}:=\min_i B_i^{\mathrm{op}}$; $Z=T/B_{\min}^{\mathrm{op}}$ is sufficient. Evaluate all scale-dependent quantities, including $C_Z$ and its supplied bound $\Gamma$, with this same scale. Returning to the original benchmark then adds $V^{\mathrm{UB}}(B)-V^{\mathrm{UB}}(B^{\mathrm{op}})$, as explained in Appendix~\ref{app:known-analysis}. This scale qualification is needed for the regret analysis, not for the pathwise feasibility statement. For the default-budget price update, the stepsize choice from Section~\ref{sec:core-primal-dual-rule} gives
\begin{equation}
\label{eq:dual-stepsize-order}
\DT=O\left(Z(1+\frac{B_{\max}}{T})\sqrt{T\log(m+1)}\right).
\end{equation}

\subsection{Regret and Resource Guarantees}
The first guarantee separates estimation uncertainty from exploration, price learning, direct corruption, and stopping. Set $\delta_{\mathrm{tot}}:=\delta_{\mathrm{est}}+\delta_{\mathrm{cov}}$, combining estimator and coverage failure probabilities.

\begin{theorem}[Known corruption: regret and resource control]
\label{thm:known-general}
Suppose Assumptions~\ref{ass:base-model} and~\ref{ass:Z} hold and the estimator/coverage pair satisfies the interface in \eqref{eq:general-pointwise-confidence}--\eqref{eq:cumulative-width-interface-main} for a deterministic supplied bound $\Gamma$ with $C_Z\le\Gamma$ on every admissible realization. Algorithm~\ref{alg:known-main} with $B^{\mathrm{op}}=B$ satisfies the following expected-regret bound:
\begin{equation}
\label{eq:known-modular-bound}
\Reg(T)\le O\left(\Rest^0+h\sum_{t=1}^T\epsilon_t+\DT+\Gamma+\psi_{\mathrm{est}}(\Gamma)+Z\right)+h\delta_{\mathrm{tot}}T.
\end{equation}
Moreover, on every realization, $\Viol_{\mathrm{obs}}(T)=0$ and $\Viol_{\mathrm{cl}}(T)\le C_b(T)\le C_Z/Z \le \Gamma/Z$.
\end{theorem}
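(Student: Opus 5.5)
The feasibility part is immediate from Proposition~\ref{prop:feasibility-levels} with $B^{\mathrm{op}}=B$, together with $Z\|c_t^b(a_t)\|_\infty\le\max_a\{|c_t^r(a)|+Z\|c_t^b(a)\|_\infty\}$ summed over $t$. The real work is the regret bound. I would work on the joint good event $\mathcal E$ of \eqref{eq:general-pointwise-confidence}--\eqref{eq:cumulative-width-interface-main}, of probability at least $1-\delta_{\mathrm{tot}}$. Off $\mathcal E$, I would charge at most $h$ per round, which gives the $h\delta_{\mathrm{tot}}T$ term; in fact at most $1$ per round suffices since rewards lie in $[0,1]$. Let $\tau$ be the first round on which the safety test \eqref{eq:pre-action-safety} fails, with $\tau=T+1$ if it never fails. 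The regret then splits into the reward lost on active rounds $t<\tau$ and the reward lost after stopping.

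\emph{Active rounds.} On an active round where the recommendation is played ($e_t=0$), optimism and \eqref{eq:general-pointwise-confidence} give
\[
F^0(\widehat a_t,x_t;\lambda_t)\ge \widehat F_{\widehat a_t,t}(\lambda_t)+\beta^{Z,\Gamma}_{\widehat a_t,t}-2\beta^{Z,\Gamma}_{\widehat a_t,t}\ge \max_{a\in\Asetzero} F^0(a,x_t;\lambda_t)-2\beta^{Z,\Gamma}_{\widehat a_t,t}.
\]
The maximum on the right dominates $\sum_a y^\star(a\mid x_t)F^0(a,x_t;\lambda_t)$. Forced rounds lose at most $h$, which yields $h\sum_t\epsilon_t$. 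Because $\lambda_t$ and $y^\star$ are predictable and $x_t$ is fresh, taking expectations gives, per active round,
\[
\E[r^0(a_t,x_t)]\ge \VUB/T - Z\lambda_t^\top\bigl(B/T-\E b^0(a_t,x_t)\bigr)-2\beta-h\epsilon_t.
\]
This uses the LP constraint $\E_x\sum_a b^0 y^\star\le B/T$. Summing over $t<\tau$ and applying \eqref{eq:cumulative-width-interface-main} produces $\Rest^0+\psi_{\mathrm{est}}(\Gamma)$.

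\emph{Dual term.} Next I would replace $b^0(a_t,x_t)$ by the observed $b_t(a_t)$. The difference equals clean noise, a martingale difference whose expectation vanishes against the predictable $\lambda_t$, plus the corruption $-c_t^b(a_t)$. The corruption part is bounded by $Z\sum_t\|c_t^b(a_t)\|_\infty\le C_Z\le\Gamma$. This is where the direct $\Gamma$ term enters, rather than only through the estimator. What remains is $Z\sum_{t<\tau}\lambda_t^\top(b_t(a_t)-B/T)=Z\sum_t\langle q_t,\widetilde g_t\rangle$. The standard multiplicative-weights regret bound, using gains bounded by $1+B_{\max}/T$, gives for every comparator $q\in\Delta_{m+1}$
\[
Z\sum_{t<\tau}\langle q_t,\widetilde g_t\rangle\ge Z\sum_{t<\tau}\langle q,\widetilde g_t\rangle-\DT .
\]

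\emph{Stopping.} This is the step I expect to be the main obstacle. If $\tau\le T$, I would take $q=e_i$ for the resource $i$ that triggered the test, so that $U_{\tau,i}>B_i-1$. Then
\[
Z\sum_{t<\tau}\widetilde g_{t,i}=Z\bigl(U_{\tau,i}-(\tau-1)B_i/T\bigr)\ge Z\,B_i(T-\tau+1)/T-Z .
\]
By Assumption~\ref{ass:Z}, $ZB_i\ge \VUB$, so this is at least $(T-\tau+1)\VUB/T-Z$, which exactly offsets the benchmark reward $(T-\tau+1)\VUB/T$ forfeited after stopping. This accounts for the $+Z$ term. If $\tau=T+1$, I would take $q=e_{m+1}$, the slack coordinate, which contributes $0$.

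The delicate points are twofold. First, the comparator is chosen after the fact, but the MW bound holds for all $q$ pathwise, so this is fine. Second, $U_\tau$ is built from observed rather than clean consumption, which is why the corruption must be moved onto the observed side before invoking the dual bound; this movement is the $\Gamma$ charge above. Combining the active-round, dual, and stopping contributions on $\mathcal E$, plus the failure term off $\mathcal E$, gives \eqref{eq:known-modular-bound}.
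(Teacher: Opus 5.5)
Your proposal follows the paper's proof essentially step for step. The shared steps are: optimism against $y^\star$ extended to $\Null$, insertion of the population LP using predictability of $I_t\lambda_t$, replacing $b^0$ by observed consumption (clean noise averages out and corruption costs at most $C_Z\le\Gamma$), the multiplicative-weights bound, and stopping cancellation through the triggering resource and $ZB_i\ge\VUB$. Plugging the comparator $e_i$ directly into the pathwise MW bound is a slightly cleaner version of the paper's $\max_{\lambda\in\Lambda}\lambda^\top G\ge\max\{0,G_i\}$ step, because it removes the case split on the sign of $G_i$.

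The step that does not work as written is how you combine the good event with expectations. You combine all contributions ``on $\mathcal E$'', add a failure term off $\mathcal E$, and remark that $1$ per round would suffice there. That amounts to splitting the regret itself by $\mathcal E$. But $\mathcal E$ is not predictable: it depends on contexts and noise over the whole run, including round $t$. Restricted to $\mathcal E$, the facts you rely on are no longer available: $\E_x[\sum_a y^\star(a\mid x_t)r^0(a,x_t)]=\VUB/T$, the averaged LP constraint, and $\E[I_t\lambda_t^\top\eta_t^b(a_t)]=0$. Likewise, your per-round display takes a conditional expectation of the optimism inequality, which is only valid where that inequality holds.

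The paper avoids this by applying the event split only to the cumulative clean Lagrangian gap $\mathcal E_{\mathrm{opt}}=\sum_tI_t[\sum_a y^\star(a\mid x_t)F^0(a,x_t;\lambda_t)-F^0(a_t,x_t;\lambda_t)]$. On $\mathcal E$ this gap is at most $2\sum_tI_t\beta_{\widehat a_t,t}^{Z,\Gamma}+h\sum_tI_te_t$. Off $\mathcal E$ each term is at most $h$, not $1$, since $F^0\in[-Z,1]$. The LP insertion and noise cancellation are done under the full expectation with the original filtration, while the corruption-transfer, dual, and stopping steps are pathwise. With that reorganization your argument yields exactly \eqref{eq:known-modular-bound}.
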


\xhdr{Proof overview.}
In \eqref{eq:known-modular-bound}, $\Rest^0$ measures clean estimation uncertainty, $h\sum_t\epsilon_t$ the cost of forced exploration, and $\DT$ the cost of learning resource prices. Corruption enters in two distinct ways: $\Gamma$ accounts for transferring corrupted scores and resource use to their clean counterparts, while $\psi_{\mathrm{est}}(\Gamma)$ accounts for its accumulated effect on prediction widths. The term $Z$ is the stopping boundary, and $h\delta_{\mathrm{tot}}T$ is the failure-event cost. Confidence controls the clean-score gap, the price update controls observed resource imbalance, and $ZB_i\ge\VUB$ offsets benchmark reward lost after stopping. Appendix~\ref{app:known-analysis} gives the full argument.

\subsection{Concrete Rates}
The next corollary instantiates the theorem with Lasso. FE collects context-independent exploration samples and pays an exploration cost. OP uses all non-null observations but requires the realized-design condition; its sharper rate is not asserted under the FE assumptions alone.

\begin{corollary}[Known corruption: concrete rates]
\label{cor:known-rates-main}
\label{cor:known-on-policy-concrete}
Under Assumptions~\ref{ass:base-model}--\ref{ass:Z}, run Algorithm~\ref{alg:known-main} with $B^{\mathrm{op}}=B$, a deterministic valid bound $\Gamma\ge C_Z$, and the estimator and certified precision of Proposition~\ref{prop:lasso-prediction-main}. For a universal $C$, with the rate constants defined in Section~\ref{sec:estimator-contract}:
\begin{enumerate}[label=(\roman*),leftmargin=2em,nosep]
\item \textbf{Forced exploration.} Under the population-design Assumption~\ref{ass:sparse-context} (calibration in Appendix~\ref{app:concrete-design-conditions}), use \eqref{eq:forced-rsc-calibration} and a constant forced-exploration probability $\epsilon_t\equiv\epsilon$ chosen by \eqref{eq:lasso-exploration-choice}. Then
\begin{equation}
\label{eq:known-fe-lasso-rate}
\begin{aligned}
\Reg(T)\le C\biggl[&h^{1/3}A_{\mathrm{FE}}^{2/3}K^{1/3}T^{2/3}
+h\sqrt{K(n_0+\ell)T}\\
&+\left(1+\frac{KH_T}{\kappa_{\mathrm{FE}}\epsilon}\right)\Gamma
+\DT+Z\biggr]+h\delta_{\mathrm{tot}}T.
\end{aligned}
\end{equation}
\item \textbf{On-policy learning.} Under the retained-design Condition~\ref{ass:on-policy-re}, with $\epsilon_t=0$, use the supplied OP calibration and the same certified Lasso rule on all non-null samples. Then
\begin{equation}
\label{eq:known-op-lasso-rate}
\begin{aligned}
\Reg(T)\le C\biggl[&A_{\mathrm{op}}\sqrt{KT}+hKn_0 +\left(1+\frac{KH_T}{\kappa_{\mathrm{op}}}\right)\Gamma +\DT+Z\biggr]+h\delta_{\mathrm{tot}}T.
\end{aligned}
\end{equation}
\end{enumerate}
\end{corollary}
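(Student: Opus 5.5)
The plan is to instantiate Theorem~\ref{thm:known-general} by verifying the two interface requirements \eqref{eq:general-pointwise-confidence} and \eqref{eq:cumulative-width-interface-main} for the Lasso module. Once those hold, the remaining terms in \eqref{eq:known-modular-bound}—$h\sum_t\epsilon_t$, $\DT$, $\Gamma$, $Z$ and $h\delta_{\mathrm{tot}}T$—carry over unchanged. So all the work is in bounding $\Rest^0$ and $\psi_{\mathrm{est}}(\Gamma)$ in each mode.

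\textbf{Pointwise confidence.} I would first establish \eqref{eq:general-pointwise-confidence}. For each action and channel, apply Proposition~\ref{prop:lasso-prediction-main} on the intersection of its noise event with the design event. For FE, the design event comes from Assumption~\ref{ass:sparse-context} through \eqref{eq:forced-rsc-calibration}: forced samples are context-independent, so the retained contexts are i.i.d.\ from $\mathcal D$ and a standard sub-Gaussian RSC bound yields curvature $\kappa_{\mathrm{FE}}=\kappa_x/2$ after $n_0$ samples, uniformly over prefixes. For OP, the design event is supplied directly by Condition~\ref{ass:on-policy-re}. The corruption $C_n$ in a retained channel is at most the retained total $C^r_{a,\mathrm{est}}(t)$ or $C^b_{a,i,\mathrm{est}}(t)$. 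In score units, the reward part plus $Z$ times the largest consumption part is bounded by $C_Z\le\Gamma$. This gives a score radius of the form
\[
\beta^{Z,\Gamma}_{a,t}\le \min\Bigl\{h,\ \frac{A}{\sqrt{n}}+\frac{4\Gamma_{a}(t)}{\kappa n}\Bigr\},\qquad n=N_a^{\mathrm{est}}(t)\ge n_0,
\]
where $\Gamma_a(t)$ is the $Z$-weighted retained corruption of action $a$, and unit (hence $\le h$) radii apply when $n<n_0$. One caveat: the radii must use $\Gamma$, not the unknown retained corruption. I would therefore take $4\Gamma/(\kappa n)$ as the radius and keep the finer $\Gamma_a(t)$ only in the analysis, in the way the proof charges corruption.

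\textbf{Cumulative width.} For OP, every non-null recommendation is played and retained, so the recommendations of action $a$ increment $N_a^{\mathrm{est}}$. The clean part then sums to $A_{\mathrm{op}}\sum_a\sqrt{N_a}\le A_{\mathrm{op}}\sqrt{KT}$, and the burn-in rounds contribute $hKn_0$. For FE, recommendations are not retained, so I would take expectations. Conditional on the past, each round adds a retained sample to action $a$ with probability $\epsilon/K$. The expected sum of $N_a^{-1/2}$ over $T$ rounds is then of order $\sqrt{KT/\epsilon}$ plus a burn-in/concentration term of order $h\sqrt{K(n_0+\ell)T}$; a multiplicative Chernoff or Freedman bound on $N_a(t)\gtrsim\epsilon t/K$ handles this. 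Optimizing $h\epsilon T+A_{\mathrm{FE}}\sqrt{KT/\epsilon}$ via \eqref{eq:lasso-exploration-choice} gives $h^{1/3}A_{\mathrm{FE}}^{2/3}K^{1/3}T^{2/3}$.

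\textbf{Corruption term (main obstacle).} The hard step is $\psi_{\mathrm{est}}(\Gamma)$. Naively summing $\Gamma/(\kappa n)$ over the rounds would multiply $\Gamma$ by a growing factor. Instead, charge each corrupted retained observation $\tau$ of action $a$ through the later inverse counts $1/N_a^{\mathrm{est}}(t)$ at which it remains in the sample. In OP this is the harmonic sum $\sum_{n}1/n\le H_T$ for each action, so the total is $O(KH_T\Gamma/\kappa_{\mathrm{op}})$. In FE, each retained sample is followed by about $K/\epsilon$ rounds per count increment, which gives $KH_T\Gamma/(\kappa_{\mathrm{FE}}\epsilon)$. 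To use the finer per-observation bound rather than the worst case, I would define the radius through $\Gamma$ but bound the regret contribution from the optimism gap using the actual retained corruption, via the clip $\min\{h,\cdot\}$ and the interface's allowance of analysis-only $\psi$. If that fails, I would fall back on bounding $\sum_t\Gamma/N_a(t)$ directly by $\Gamma H_T$ per action, which still yields the stated $KH_T\Gamma$ form. Finally, substitute these bounds into \eqref{eq:known-modular-bound} and absorb constants into $C$.
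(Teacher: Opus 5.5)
Your route matches the paper's. You verify pointwise confidence through Proposition~\ref{prop:lasso-prediction-main}, using Lemma~\ref{lem:forced-re} for FE and Condition~\ref{ass:on-policy-re} for OP. You then bound the cumulative widths as in Lemma~\ref{lem:comp-width-forced} and Proposition~\ref{prop:lasso-widths-on-policy}, substitute into Theorem~\ref{thm:known-general}, and balance $\epsilon$. Two points need correcting.

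\textbf{The corruption term.} The ``main obstacle'' you identify is not one, and your proposed refinement cannot work. The statement already has $H_T$ multiplying $\Gamma$, and the paper obtains it exactly by your fallback. It sums the $\Gamma$-based envelope $8\Gamma/(\kappa n)$ of \eqref{eq:radius-decomposition}:
\begin{itemize}
\item in OP, over each action's successive counts $n\ge n_0$ (Lemma~\ref{lem:learner-width-adaptive});
\item in FE, over rounds $t\ge t_{\mathrm{cov}}$, using $N_a^{\mathrm{FE}}(t)\ge\epsilon(t-1)/(2K)$, which gives $16K\Gamma/[\kappa_{\mathrm{FE}}\epsilon(t-1)]$ per round.
\end{itemize}
Charging through the actual retained corruption cannot improve this. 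The optimism gap in Lemma~\ref{lem:optimistic-comparison}, and the sum in \eqref{eq:cumulative-width-interface-main}, always contain the bonus $\beta^{Z,\Gamma}_{\widehat a_t,t}$ added to the recommended action's score. Its corruption part is $\Gamma$-based by construction. The actual retained corruption only governs whether the radii are \emph{valid}, not their size.

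Relatedly, the combined envelope is $8\Gamma/(\kappa n)$, not $4\Gamma/(\kappa n)$. The learner cannot know how $C_Z$ splits between channels, so it must use $\Gamma$ in the reward radius and $\Gamma/Z$ in each consumption radius.

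\textbf{The capped regime in FE.} Your balancing step covers only the uncapped regime of \eqref{eq:lasso-exploration-choice}. Write $r_1=(A_{\mathrm{FE}}^2K/(h^2T))^{1/3}$ and $r_2=\sqrt{K(n_0+\ell)/T}$, so the stated clean terms are $hTr_1$ and $hTr_2$. The unoptimized bound contains $hK(n_0+\ell)/\epsilon$ and $A_{\mathrm{FE}}\sqrt{KT/\epsilon}$. When the cap forces $\epsilon=1/2$ because $r_2>1/2$ or $r_1>1/2$, these equal $2hTr_2^2$ and $\sqrt2\,hTr_1^{3/2}$. They are then not dominated by $hTr_2$ and $hTr_1$.

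The paper closes this case in Appendix~\ref{app:exploration-balance} with the range bound $\Reg(T)\le\VUB\le T$. When the cap binds, the stated clean terms are at least $hT/2\ge T/2$, so the inequality holds with $C\ge2$. With this case added, your proposal is complete.
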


The terms involving $A_{\mathrm{FE}}$ or $A_{\mathrm{op}}$ measure clean learning, and the $n_0$ terms account for the initial period before informative confidence is available. The terms proportional to $\Gamma$ measure corruption's additional effect under the corresponding design conditioning. Price learning, stopping, and failure events contribute the same $\DT+Z+h\delta_{\mathrm{tot}}T$ terms as in the theorem.

With proportional budgets, fixed remaining parameters, dominated burn-in, the stepsize following \eqref{eq:dual-control-term}, and $\delta_{\mathrm{tot}}\le T^{-2}$, the FE and OP bounds simplify to $\widetilde O(T^{2/3}+\Gamma T^{1/3})$ and $\widetilde O(\sqrt T+\Gamma)$, respectively. Here $\widetilde O$ suppresses polylogarithmic factors in $d,K,m,T$ and inverse failure levels. Appendix~\ref{app:exploration-balance} gives full analysis.

\xhdr{Comparison of rates.}
Reward-corrupted linear bandits without resource constraints achieve near-optimal or minimax corruption dependence \citep{he2022nearly,liu2024corruption}, while clean sparse CBwK exhibits $T^{2/3}$-type learning costs and, under stronger coverage, $\sqrt T$ behavior \citep{ma2023high}. Our results extend these CBwK regimes to joint reward and consumption corruption, giving $\widetilde O(T^{2/3}+\Gamma T^{1/3})$ under FE and $\widetilde O(\sqrt T+\Gamma)$ under OP, together with observed-budget feasibility and control of clean resource violation. The rates are not directly comparable across settings and no matching lower bound establishes their optimality.

\subsection{Exploration Tradeoff and the Clean Case}
The preceding FE rate uses a calibrated exploration probability. The next result keeps $\epsilon$ explicit, showing the tradeoff between collecting informative samples and the reward cost of exploration.

\begin{corollary}[Known FE with a calibrated estimator]
\label{cor:known-forced-concrete}
Under Assumptions~\ref{ass:base-model},~\ref{ass:Z}, and~\ref{ass:sparse-context}, run Algorithm~\ref{alg:known} with $B^{\mathrm{op}}=B$, a deterministic pathwise bound $C_Z\le\Gamma$, constant $0<\epsilon\le1/2$, and \eqref{eq:forced-rsc-calibration}. Use the regularization, trivial early radii, and certified numerical precision of Appendix~\ref{app:concrete-estimator}. For a universal $C$ and $\delta_{\mathrm{tot}}=\delta_{\mathrm{est}}+\delta_{\mathrm{cov}}$,
\begin{equation}
\label{eq:known-fe-unoptimized-rate}
\begin{aligned}
\Reg(T)\le C\biggl[&\frac{hK(n_0+\ell)}{\epsilon}
+A_{\mathrm{FE}}\sqrt{\frac{KT}{\epsilon}}+h\epsilon T\\
&+\DT+\Gamma+\frac{K\Gamma H_T}{\kappa_{\mathrm{FE}}\epsilon}+Z\biggr]+h\delta_{\mathrm{tot}}T.
\end{aligned}
\end{equation}
The bound holds for every $T\ge1$, even if coverage is not reached. The choice \eqref{eq:lasso-exploration-choice} gives the main-text bound \eqref{eq:known-fe-lasso-rate}, by the calculation in Appendix~\ref{app:exploration-balance}.
\end{corollary}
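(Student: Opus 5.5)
The plan is to obtain Corollary~\ref{cor:known-forced-concrete} by applying Theorem~\ref{thm:known-general} with $\epsilon_t\equiv\epsilon$, so that $h\sum_t\epsilon_t=h\epsilon T$. What remains is to verify the two interface requirements for Lasso under forced exploration and to identify $\Rest^0$ and $\psi_{\mathrm{est}}(\Gamma)$.

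\textbf{Pointwise confidence.} On each active round every action is forced with probability $\epsilon/K$, independently of $x_t$, so the retained contexts for action $a$ are i.i.d.\ draws from $\mathcal D$. Under Assumption~\ref{ass:sparse-context}, the calibration \eqref{eq:forced-rsc-calibration} yields the RSC event with curvature $\kappa_{\mathrm{FE}}=\kappa_x/2$ for all prefixes $n\ge n_0$, with failure probability at most $\delta_{\mathrm{cov}}$. Proposition~\ref{prop:lasso-prediction-main} then bounds each channel error by $7\rho_n\sqrt s/\kappa_{\mathrm{FE}}+4C_n/(\kappa_{\mathrm{FE}}n)$. Clipping the prediction to $[0,1]$ does not increase the error, since the clean means lie in $[0,1]$. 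The retained corruption satisfies $C_n\le C^r_{a,\mathrm{est}}(t)$ for reward and $C_n\le C^b_{a,i,\mathrm{est}}(t)$ for consumption. Hence the score radius
\[
\beta^{Z,\Gamma}_{a,t}=\frac{c\,A_{\mathrm{FE}}}{\sqrt{N_a^{\mathrm{est}}(t)}}+\frac{4\Gamma_a(t)}{\kappa_{\mathrm{FE}}N_a^{\mathrm{est}}(t)}
\]
covers the error, where $\Gamma_a(t)$ is the $Z$-weighted retained corruption, which is at most $\Gamma$. The concrete radii of Appendix~\ref{app:concrete-radii} use $\Gamma$ in place of this unknown quantity. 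For $n<n_0$ we use unit radii, which are valid trivially. This gives \eqref{eq:general-pointwise-confidence} on an event of probability at least $1-\delta_{\mathrm{tot}}$.

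\textbf{Cumulative width: burn-in and clean term.} We split $\sum_t I_t\beta^{Z,\Gamma}_{\wh a_t,t}$ into three parts.

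First, rounds with $N^{\mathrm{est}}_{\wh a_t}(t)<n_0$ cost $h$ each. A Freedman/Bernstein bound on the Bernoulli$(\epsilon/K)$ forced draws shows that, on the coverage event, each action reaches $n_0$ retained samples within $O(K(n_0+\ell)/\epsilon)$ active rounds. This yields the term $hK(n_0+\ell)/\epsilon$. The same concentration gives $N^{\mathrm{est}}_a(t)\gtrsim \epsilon t/K$ thereafter.

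Second, the clean part is bounded by
\[
\sum_t A_{\mathrm{FE}}\sqrt{K/(\epsilon t)}\lesssim A_{\mathrm{FE}}\sqrt{KT/\epsilon},
\]
which is $\Rest^0$.

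\textbf{Cumulative width: corruption term (main obstacle).} The naive bound $\Gamma/(\kappa_{\mathrm{FE}}N)$ summed over $t$ gives $\Gamma K H_T/(\kappa_{\mathrm{FE}}\epsilon)$, which is already the desired $\psi_{\mathrm{est}}(\Gamma)$. So the deterministic bound suffices, provided the radius uses the full $\Gamma$ and the count lower bound $N\gtrsim\epsilon t/K$ holds uniformly in $t$ on the coverage event. The delicate point is making this count bound uniform over all $t$ and all actions, including rounds before stopping. The approach is a union bound over $t\le T$ inside $\ell$, with the $n_0$ burn-in absorbing small $t$.

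If the paper's radius instead uses the realized retained corruption, the plan is to charge each corrupted retained sample $\tau$ through the later inverse counts $\sum_{t>\tau}1/N_a(t)\lesssim KH_T/\epsilon$. This gives the same term.

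\textbf{Assembly.} Substituting into \eqref{eq:known-modular-bound}, with $\DT$, $\Gamma$, $Z$, and $h\delta_{\mathrm{tot}}T$ carried over unchanged, gives \eqref{eq:known-fe-unoptimized-rate}. The bound holds for every $T\ge1$: if coverage is never reached, all radii stay at one and the burn-in term $hK(n_0+\ell)/\epsilon$ already dominates $hT$. The choice \eqref{eq:lasso-exploration-choice} balances $A_{\mathrm{FE}}\sqrt{KT/\epsilon}$ against $h\epsilon T$, which produces the $h^{1/3}A_{\mathrm{FE}}^{2/3}K^{1/3}T^{2/3}$ term in \eqref{eq:known-fe-lasso-rate}.
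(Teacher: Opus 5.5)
Your proposal follows essentially the same route as the paper. You apply Theorem~\ref{thm:known-general} with $\sum_t\epsilon_t=\epsilon T$, get pointwise confidence from Lemma~\ref{lem:forced-re} and Proposition~\ref{prop:lasso-prediction-main}, and obtain the cumulative-width bound \eqref{eq:forced-width-constant} exactly as in Lemma~\ref{lem:comp-width-forced}: a forced-count lower bound $N_a^{\mathrm{FE}}(t)\ge\epsilon(t-1)/(2K)$ unioned over $(a,t)$ (the paper uses latent coins to handle stopping), a charge of $h$ per round before a deterministic coverage time, and the $\Gamma$-envelope summed as $\sum_t 1/t$.

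The one gap is in your closing remark on the choice \eqref{eq:lasso-exploration-choice}. You must also balance the burn-in term $hK(n_0+\ell)/\epsilon$ against $h\epsilon T$, which gives the $h\sqrt{K(n_0+\ell)T}$ term. When the cap $\epsilon=1/2$ binds, direct substitution does not yield \eqref{eq:known-fe-lasso-rate}. The paper closes that case with the range bound $\Reg(T)\le T$, because the right-hand side is then already at least $hT/2$.
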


In \eqref{eq:known-fe-unoptimized-rate}, smaller $\epsilon$ reduces the exploration cost $h\epsilon T$ but increases the initial coverage and estimation costs, including the corruption-dependent width. The choice \eqref{eq:lasso-exploration-choice} balances the clean terms and does not depend on $\Gamma$; the corruption term is evaluated at that choice. The bound remains valid at short horizons, but need not be informative before enough samples are collected.

OP has no forced-exploration parameter. Setting both the actual corruption and the supplied allowance to zero gives the following special case of its bound.

\begin{corollary}[Clean known-corruption case]
\label{cor:known-clean}
Under the on-policy conditions of Corollary~\ref{cor:known-rates-main} with $C_Z=\Gamma=0$, the $\Gamma$-dependent contribution in \eqref{eq:known-op-lasso-rate} vanishes. The burn-in, numerical/statistical width, dual term, stopping boundary, and $h\delta_{\mathrm{tot}}T$ remain.
\end{corollary}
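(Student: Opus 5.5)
The statement is Corollary~\ref{cor:known-clean}. It is a direct specialization of the on-policy bound \eqref{eq:known-op-lasso-rate} in Corollary~\ref{cor:known-rates-main}(ii). The plan is to check that every hypothesis of that corollary survives at $C_Z=\Gamma=0$ and then read off which terms vanish.

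\textbf{Step 1: the hypotheses hold at $\Gamma=0$.} Validity of the supplied bound is immediate, since $\Gamma=0\ge C_Z=0$ on every admissible realization. Because $C_Z$ dominates the played-action quantities $C_Z(t)$ and the retained totals $C^r_{a,\mathrm{est}}(t)$ and $C^b_{a,i,\mathrm{est}}(t)$, every retained corruption vector $\zeta$ is zero. Hence $C_n=0$ in Proposition~\ref{prop:lasso-prediction-main}. That proposition therefore yields the pointwise confidence \eqref{eq:general-pointwise-confidence} with radii containing only the clean term $7\rho_n\sqrt s/\kappa$ and the cap $2R$. Condition~\ref{ass:on-policy-re} does not involve $\Gamma$, so it continues to hold with probability at least $1-\delta_{\mathrm{cov}}$.

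\textbf{Step 2: the $\Gamma$-dependent pieces vanish.} Since $\Gamma=0$, the term $(1+KH_T/\kappa_{\mathrm{op}})\Gamma$ in \eqref{eq:known-op-lasso-rate} is identically zero. In the modular bound \eqref{eq:known-modular-bound} of Theorem~\ref{thm:known-general}, the corresponding sources are the direct transfer term $\Gamma$ and the width penalty $\psi_{\mathrm{est}}(\Gamma)$. For the concrete estimator, $\psi_{\mathrm{est}}$ comes from the $4C_n/(\kappa n)$ part of the radii, summed along recommendations; it is proportional to $\Gamma$ and hence zero.

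\textbf{Step 3: the surviving terms.} What remains is
\[
C\bigl[A_{\mathrm{op}}\sqrt{KT}+hKn_0+\DT+Z\bigr]+h\delta_{\mathrm{tot}}T.
\]
These are, respectively, the statistical and numerical width (the certified precision sits inside $A_{\mathrm{op}}$), the burn-in, the dual term, the stopping boundary, and the failure cost. None of these depends on $\Gamma$, which is exactly the assertion of the corollary.

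\textbf{Main obstacle.} The only point needing care is that the radii formulas \eqref{eq:concrete-reward-radius}--\eqref{eq:concrete-consumption-radius}, evaluated at $\Gamma=0$, contain no additive constant term in $\Gamma$ beyond a linear one. Put differently, $\psi_{\mathrm{est}}(0)=0$ must hold, not merely $\psi_{\mathrm{est}}(0)=O(1)$. I would verify this by inspecting the appendix radii, where the corruption contribution enters as a multiple of $\Gamma$ times inverse counts. If a residual $O(1)$ term did appear, it would be absorbed into the $Z$ or $hKn_0$ terms, and the conclusion would be unchanged.
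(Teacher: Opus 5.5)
Your proposal is correct and matches the paper, which proves this corollary simply by setting $\Gamma=0$ after substituting Proposition~\ref{prop:lasso-widths-on-policy} into Theorem~\ref{thm:known-general}; there, both the direct transfer term $\Gamma$ and $\psi_{\mathrm{est}}(\Gamma)\le CK\Gamma H_T/\kappa_{\mathrm{op}}$ vanish, so your hedge about a residual $O(1)$ term is unnecessary. One small precision: Condition~\ref{ass:on-policy-re} holds because it is assumed for the $\Gamma=0$ run itself, not because it is independent of $\Gamma$, since $\Gamma$ affects the realized trajectory through action selection.
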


These claims follow by substituting Lemma~\ref{lem:comp-width-forced} or Proposition~\ref{prop:lasso-widths-on-policy} into the modular known-corruption theorem; Appendix~\ref{app:known-analysis} proves the allocation step. In particular, no sparsity-constrained optimizer or empirical sparse-eigenvalue oracle is used.

\section{Adaptation to Unknown Corruption}
\label{sec:unknown}
\label{app:unknown-details}
\phantomsection
\label{sec:unknown-obstacle}
\label{sec:shared-grid}

Without a supplied corruption bound, Shared-Grid considers several possible bounds, called \emph{candidates}, and uses a \emph{master} to combine their recommendations into one action distribution. Candidates share the retained data, fitted coefficients, resource prices, and budget account; only their confidence radii and recommendations differ. They are not separate runs of \ROPD{}: each round produces one played action and at most one retained observation.

\subsection{Candidates and Shared Estimation}
\label{sec:unknown-candidates}
\label{app:unknown-algorithm}

\xhdr{Corruption candidates.}
A candidate $\Gamma$ uses the optimistic recommendation \eqref{eq:optimistic-recommendation} with the common fits and its own radius $\beta_{a,t}^{Z,\Gamma}$. The post-clipping bounds imply $C_Z\le C_{\max,Z}:=T(1+Z)$, so zero and successive powers of two cover all admissible corruption levels:

\begin{equation}
\label{eq:unknown-grid}
\mathcal G:=\{0\}\cup\{2^j:j=0,\ldots,\lceil\log_2 C_{\max,Z}\rceil\}.
\end{equation}

The learner uses every candidate without knowing which are valid. For analysis, define
\begin{equation}
\label{eq:valid-grid-point}
\Gamma^\star:=\min\{\Gamma\in\mathcal G:\Gamma\ge C_Z\}.
\end{equation}

It satisfies
\begin{equation}
\label{eq:appendix-grid-approximation}
\Gamma^\star=0\ \text{if }C_Z=0,
\qquad
C_Z\le\Gamma^\star\le\max\{1,2C_Z\}\ \text{otherwise}.
\end{equation}

The comparator $\Gamma^\star$ is determined by the realized path, not supplied to or computed by the learner. Its recommendations use the common realized history, not a counterfactual history from running that candidate alone.

\xhdr{Initialization and retained samples.}
Begin with a prescribed cyclic schedule over the $K$ non-null actions, of length $t_{\mathrm{init}}\le T$, a nonnegative multiple of $K$. The schedule is fixed before the current context and feedback, and every round must pass the safety test \eqref{eq:pre-action-safety}; failure stops non-null play and freezes all states. Initialization observations are retained and update prices and the budget account, but not master weights.

After initialization, forced-exploration (FE) mode overrides the master with probability $\epsilon_t$ and selects uniformly from the non-null actions. Let $e_t=1$ mark this override; in on-policy (OP) mode, $\epsilon_t=e_t=0$. Recall that $I_t=1$ means the safety test passes, even if $\Null$ is then chosen. The shared retained set for action $a$ before round $t$ is

\begin{equation}
\label{eq:unknown-estimator-sample-set}
\mathcal I_{a,t}^{\mathrm{est}}=
\begin{cases}
\{\tau<t:I_\tau=1,\ a_\tau=a,\ [\tau\le t_{\mathrm{init}}\ \text{or}\ e_\tau=1]\},
&\mathsf{mode}=\mathsf{forced},\\
\{\tau<t:I_\tau=1,\ a_\tau=a\},
&\mathsf{mode}=\mathsf{on\mbox{-}policy}.
\end{cases}
\end{equation}

Thus FE retains initialization and forced samples, whereas OP retains every non-null observation. Eligibility is decided before feedback. Each eligible observation is appended once, and all candidates use the same regularization, numerical precision, supplied calibration, and fits.

\subsection{Master and Complete Procedure}
\label{sec:master-specification}
\label{sec:unknown-procedure}

\xhdr{Advice and sampling.}
The EXP4.P master \citep{beygelzimer2011contextual} combines \emph{experts}, each supplying an action distribution called its advice. Set $K_0:=|\Asetzero|=K+1$ and $\mathcal E_{\mathrm M}:=\mathcal G\cup\{\mathrm{unif}\}$, with $M:=|\mathcal G|+1$. Candidate $\Gamma$ gives advice $\xi_{t,\Gamma}(a)=\1\{\widehat a_t^\Gamma=a\}$; the extra expert gives $\xi_{t,\mathrm{unif}}(a)=1/K_0$, as required by \citet[Theorem~2]{beygelzimer2011contextual}. Advice $\xi_{t,j}$ is distinct from corruption; the extra expert requires no fit.

Expert weights $w_{t,j}$ start at one. Their normalized values $w_{t,j}/W_t$, where $W_t:=\sum_jw_{t,j}$, weight the advice in the master's action distribution $p_t$. For failure level $\delta_{\mathrm{master}}$, use probability floor $p_{\min}$ and confidence-bonus coefficient $\alpha_m$:

\begin{equation}
\label{eq:exp4p-parameters}
p_{\min}:=\min\left\{\frac1{2K_0},\sqrt{\frac{\log M}{K_0T}}\right\},
\quad
\alpha_m:=\sqrt{\frac{\log(3M/\delta_{\mathrm{master}})}{K_0T}}.
\end{equation}

The uniform expert supplies advice, smoothing ensures $p_t(a)\ge p_{\min}$, and forced exploration supplies independent FE samples. The distribution $p_t$ applies only on the non-forced branch.

\xhdr{Payoffs and updates.}
The master evaluates reward net of current resource cost and rescales this score to $[0,1]$:

\begin{equation}
\label{eq:normalized-master-payoff}
F_t^{\mathrm{obs}}(a;\lambda_t):=r_t(a)-Z\lambda_t^\top b_t(a),
\quad
Y_t(a):=\frac{F_t^{\mathrm{obs}}(a;\lambda_t)+Z}{1+Z},
\quad a\in\Asetzero.
\end{equation}

Since $F_t^{\mathrm{obs}}\in[-Z,1]$, $Y_t\in[0,1]$. Only the played action's feedback is observed. On a non-forced active round after initialization, \emph{importance weighting} divides its payoff by $p_t(a_t)$ to correct for unequal sampling probabilities. Algorithm~\ref{alg:unknown} defines the action-payoff estimate $\widehat Y_t(a)$, its advice-weighted expert estimate $\widehat y_{t,j}$, and the inverse-probability bonus term $\widehat v_{t,j}$.

The complete procedure below uses the shared estimator, including trivial early radii. Expert weights $w_{t,j}$ combine advice; price weights $q_t$ govern resources. Initialization and overrides freeze expert weights, but prices and accounting update from observed consumption on every active round.

\begin{algorithm}[tp]
\caption{Complete Shared-Grid Procedure for Unknown Corruption}
\label{alg:unknown}
\label{alg:shared-grid-summary}
\begingroup
\SingleSpacedXI
\small
\algrenewcommand\algorithmicindent{1.05em}
\begin{algorithmic}[1]
\Require Horizon $T$; initialization $t_{\mathrm{init}}$; actions $\Asetzero$; operating budget $B^{\mathrm{op}}$; scale $Z=T/B_{\min}$; coverage $\mathsf{mode}\in\{\mathsf{forced},\mathsf{on\mbox{-}policy}\}$; rates $0\le\epsilon_t\le1/2$ with $\epsilon_t=0$ on-policy; stepsize $\eta_q$; estimator inputs $(s,R,\sigma_r,\sigma_b,\kappa,\upsilon,n_0)$; failure levels $\delta_{\mathrm{est}},\delta_{\mathrm{cov}},\delta_{\mathrm{master}}$ and the certified Lasso precision.
\Statex \textbf{Estimation module:} shared predictions and radii satisfying \eqref{eq:general-pointwise-confidence} and \eqref{eq:uniform-grid-width-interface}, with the module's required inputs.
\State Form $\mathcal G$ and $\mathcal E_{\mathrm M}$ as above; set $w_{1,j}=1$ for $j\in\mathcal E_{\mathrm M}$, $q_{1,j}=1/(m+1)$, $\lambda_1=q_{1,1:m}$, $U_1=0$, and initialize one shared estimator history per action.
\For{$t=1,\ldots,T$}
  \If{$U_{t,i}>B_i^{\mathrm{op}}-1$ for some $i\in[m]$}
    \State Play $a_t=\Null$; set $I_t=e_t=0$; freeze all states; \textbf{continue}.
  \EndIf
  \State Set $I_t=1$ and observe $x_t$.
  \If{$t\le t_{\mathrm{init}}$}
    \State Play the next non-null action in the fixed cyclic schedule; set $e_t=0$ and keep the master weights fixed.
  \Else
    \State For each action, set $n=|\mathcal I_{a,t}^{\mathrm{est}}|$. Use zero coefficients if $n=0$ or no justified calibration is supplied; otherwise fit or reuse the certified channel estimates. All candidates share these fits.
    \For{$\Gamma\in\mathcal G$}
      \State Compute $\beta_{a,t}^{Z,\Gamma}$ and $\widehat a_t^\Gamma\in\argmax_{a\in\Asetzero}\{\widehat F_{a,t}(\lambda_t)+\beta_{a,t}^{Z,\Gamma}\}$.
    \EndFor
    \State Set $W_t=\sum_{j\in\mathcal E_{\mathrm M}}w_{t,j}$ and $p_t(a)=(1-K_0p_{\min})\sum_{j\in\mathcal E_{\mathrm M}}(w_{t,j}/W_t)\xi_{t,j}(a)+p_{\min}$ for $a\in\Asetzero$.
    \State In forced mode draw $e_t\sim\operatorname{Bernoulli}(\epsilon_t)$; otherwise set $e_t=0$. If $e_t=1$, draw $a_t$ uniformly from $\Aset$; otherwise draw $a_t\sim p_t$.
  \EndIf
  \State Observe $(r_t(a_t),b_t(a_t))$ for $a_t\ne\Null$; for $a_t=\Null$, set both to zero.
  \If{$a_t\ne\Null$ and [$t\le t_{\mathrm{init}}$ or $\mathsf{mode}=\mathsf{on\mbox{-}policy}$ or $e_t=1$]}
    \State Append $(x_t,r_t(a_t),b_t(a_t))$ once to the shared history for $a_t$.
  \EndIf
  \If{$t>t_{\mathrm{init}}$ and $e_t=0$}
    \State Set $Y_t(a_t)=[r_t(a_t)-Z\lambda_t^\top b_t(a_t)+Z]/(1+Z)$ and $\widehat Y_t(a)=\1\{a=a_t\}Y_t(a_t)/p_t(a_t)$.
    \For{$j\in\mathcal E_{\mathrm M}$}
      \State Set $\widehat y_{t,j}=\sum_{a\in\Asetzero}\xi_{t,j}(a)\widehat Y_t(a)$ and $\widehat v_{t,j}=\sum_{a\in\Asetzero}\xi_{t,j}(a)/p_t(a)$.
      \State Update $w_{t+1,j}=w_{t,j}\exp\{(p_{\min}/2)(\widehat y_{t,j}+\alpha_m\widehat v_{t,j})\}$.
    \EndFor
  \Else
    \State Keep every master weight unchanged.
  \EndIf
  \State Update the observed budget account and resource prices: $U_{t+1}=U_t+b_t(a_t)$ and $(q_{t+1},\lambda_{t+1})$ by \eqref{eq:dual-mw-update}.
\EndFor
\end{algorithmic}
\endgroup
\end{algorithm}

\subsection{Confidence and Recommendation Coverage}
\label{sec:unknown-coverage}

\xhdr{A common confidence event.}
Because $\Gamma^\star$ depends on the realized path, the analysis needs one event covering every valid candidate. Shared fits, the time-uniform noise event, and calibrated retained designs give simultaneous pointwise confidence; the cumulative requirement is

\begin{equation}
\label{eq:uniform-grid-width-interface}
2\sum_{t=t_{\mathrm{init}}+1}^T I_t\beta_{\widehat a_t^\Gamma,t}^{Z,\Gamma}
\le\Rshared^0+\psi_{\mathrm{est}}(\Gamma).
\end{equation}

This inequality must hold for all valid $\Gamma\in\mathcal G$ on the same event. Here $\Rshared^0$ bounds corruption-free recommendation widths after initialization, and $\psi_{\mathrm{est}}(\Gamma)$ is their additional corruption cost. The common event and radii monotone in $\Gamma$ permit substitution of $\Gamma^\star$ after the run; Appendix~\ref{app:unknown-analysis} gives the argument.

\xhdr{Why OP needs recommendation coverage.}
A candidate may repeatedly recommend an action that the master rarely plays. Retained-design coverage controls the quality of that action's data, but does not ensure enough observations for its recommendations. Let $N_a(t):=\sum_{\tau<t}\1\{a_\tau=a\}$ count actual plays, including initialization, and let $N_a^{\mathrm{rec},\Gamma}(t):=\sum_{\tau=t_{\mathrm{init}}+1}^{t-1}I_\tau\1\{\widehat a_\tau^\Gamma=a\}$ count post-initialization active recommendations. The following condition relates the two counts.

\begin{condition}[Covered recommendations]
\label{ass:candidate-coverage}
For a recommendation-to-play ratio $\chi\ge1$, additive count slack $n_{\mathrm{cand}}\ge0$, and failure level $\delta_{\mathrm{cand}}\in(0,1)$,
\begin{equation}
\label{eq:event-candidate-coverage}
\sP\left(N_a^{\mathrm{rec},\Gamma}(t)\le n_{\mathrm{cand}}+\chi N_a(t)\ \text{for all $\Gamma\in\mathcal G$, $a\in\Aset$, and $t\le T+1$}\right)\ge1-\delta_{\mathrm{cand}}.
\end{equation}
\end{condition}

The ratio $\chi$ and slack $n_{\mathrm{cand}}$ ensure that frequent recommendations are supported by actual plays. This complements the OP retained-design Condition~\ref{ass:on-policy-re}; FE instead obtains candidate-independent samples through uniform exploration and does not need this extra condition. Both OP conditions concern the actual shared trajectory, not a post-run diagnostic. The next lemma gives sufficient sampling probabilities.

\begin{lemma}[Recommendation-aligned sampling implies candidate coverage]
\label{lem:candidate-coverage-sufficient}
Suppose that on every post-initialization active round in on-policy mode, a candidate-recommended action receives conditional sampling probability at least $q_{\mathrm{cov}}>0$: whenever $\widehat a_t^\Gamma=a$,
\[
\sP(a_t=a\mid\mathcal Q_t)\ge q_{\mathrm{cov}},
\]
where $\mathcal Q_t$ contains the history, current context, shared estimates, prices, and all candidate recommendations before the fresh action draw. Then, with probability at least $1-\delta_{\mathrm{cand}}$, simultaneously for every $\Gamma\in\mathcal G$, $a\in\Aset$, and $t\le T+1$,
\begin{equation}
\label{eq:candidate-coverage-sufficient}
N_a^{\mathrm{rec},\Gamma}(t)
\le
\frac{2}{q_{\mathrm{cov}}}N_a(t)+\frac{8}{q_{\mathrm{cov}}}\log\frac{2|\mathcal G|KT}{\delta_{\mathrm{cand}}}.
\end{equation}
Consequently, Condition~\ref{ass:candidate-coverage} holds with
\[
\chi=\frac{2}{q_{\mathrm{cov}}},
\quad
n_{\mathrm{cand}}
=\frac{8}{q_{\mathrm{cov}}}\log\frac{2|\mathcal G|KT}{\delta_{\mathrm{cand}}}.
\]
\end{lemma}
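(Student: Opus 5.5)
Fix a candidate $\Gamma\in\mathcal G$ and an action $a\in\Aset$, and restrict attention to post-initialization active rounds. Let $R_\tau:=I_\tau\1\{\tau>t_{\mathrm{init}},\ \widehat a_\tau^\Gamma=a\}$ mark a recommendation, and $P_\tau:=R_\tau\1\{a_\tau=a\}$ mark a recommendation that is actually played. The variable $R_\tau$ is $\mathcal Q_\tau$-measurable, because the recommendation is fixed before the fresh draw. By hypothesis, $\E[P_\tau\mid\mathcal Q_\tau]=R_\tau\sP(a_\tau=a\mid\mathcal Q_\tau)\ge q_{\mathrm{cov}}R_\tau$. Since $P_\tau\le\1\{a_\tau=a\}$, we have
\[
\sum_{\tau<t}P_\tau\le N_a(t).
\]
It therefore suffices to show, with high probability, that $\sum_{\tau<t}P_\tau$ is at least about $\tfrac{q_{\mathrm{cov}}}{2}\sum_{\tau<t}R_\tau=\tfrac{q_{\mathrm{cov}}}{2}N_a^{\mathrm{rec},\Gamma}(t)$, minus a logarithmic term.

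The key step is a multiplicative lower-tail (Freedman/Bernstein-type) inequality for sums of $[0,1]$-valued variables under a filtration. Set $\mu_\tau:=\E[P_\tau\mid\mathcal Q_\tau]$. The variables $P_\tau$ are bounded and adapted, since $\mathcal Q_{\tau+1}$ contains round $\tau$'s action. The standard exponential supermartingale $\exp\{-\sum(P_\tau-(1-e^{-1})\mu_\tau)\}$ then gives, for each fixed $t$ and with probability $1-\delta'$,
\[
\sum_{\tau<t}\mu_\tau\le 2\sum_{\tau<t}P_\tau+c\log(1/\delta')
\]
with a universal $c$ (e.g.\ $c\le 4$ suffices). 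Equivalently, I would use the form $\sum\mu_\tau-\sum P_\tau\le\tfrac12\sum\mu_\tau+2\log(1/\delta')$, obtained from $\E[e^{-\lambda(P-\mu)}\mid\mathcal Q]\le e^{(e^\lambda-1-\lambda)\mu}$ at $\lambda=1$ (or via Freedman with variance bounded by $\sum\mu_\tau$ and an AM--GM step).

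Combining this with $\mu_\tau\ge q_{\mathrm{cov}}R_\tau$ yields
\[
q_{\mathrm{cov}}N_a^{\mathrm{rec},\Gamma}(t)\le 2N_a(t)+8\log(1/\delta').
\]
Dividing by $q_{\mathrm{cov}}$ gives \eqref{eq:candidate-coverage-sufficient}. A union bound over $\Gamma\in\mathcal G$, $a\in\Aset$, and $t\le T+1$ (at most $2T$ values for $T\ge1$) uses $\delta'=\delta_{\mathrm{cand}}/(2|\mathcal G|KT)$, which matches the stated logarithm. Alternatively, a time-uniform maximal inequality removes the need to union over $t$ entirely. The consequence for Condition~\ref{ass:candidate-coverage} is then immediate: read off $\chi=2/q_{\mathrm{cov}}\ge1$ (since $q_{\mathrm{cov}}\le1$) and the stated $n_{\mathrm{cand}}$.

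The main obstacle is getting the constants $2$ and $8$ with a clean multiplicative bound, rather than an additive $\sqrt{T}$ Azuma bound, which would be too weak. I would verify the exponential-moment calculation with $\lambda=1$ explicitly, using $e^{-1}-1+1\le 1/2\cdot$ slack so that the multiplicative factor on $\sum\mu_\tau$ is at most $1/2$.

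The hypothesis is only required when $\widehat a_t^\Gamma=a$, but the factor $R_\tau$ handles exactly this. On inactive or initialization rounds, $R_\tau=0$. The frozen states after stopping contribute nothing, so the filtration argument is unaffected by stopping.
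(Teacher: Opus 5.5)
Your proof is correct and follows essentially the paper's route: the same recommended-and-played indicators with $\E[P_\tau\mid\mathcal Q_\tau]\ge q_{\mathrm{cov}}R_\tau$, a multiplicative martingale lower-tail bound, $\sum_{\tau<t}P_\tau\le N_a(t)$, and a union bound over $(\Gamma,a,t)$. The only difference is that you use the linear supermartingale form $(1-e^{-1})\sum_{\tau<t}\mu_\tau\le\sum_{\tau<t}P_\tau+\log(1/\delta')$, whereas the paper bounds the event $\{\sum_{\tau<t}X_\tau<\tfrac{q_{\mathrm{cov}}}{2}\sum_{\tau<t}Q_\tau,\ \sum_{\tau<t}Q_\tau\ge u\}$ by $e^{-q_{\mathrm{cov}}u/8}$ and then splits on whether $N_a^{\mathrm{rec},\Gamma}(t)<u$; your form avoids that case analysis and gives slightly better constants.

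There is one slip in how you justify the factor $\tfrac12$. The exponent $e^{\lambda}-1-\lambda$ at $\lambda=1$ equals $e-2>\tfrac12$, so it does not give that factor. Either use the sharper lower-tail exponent $e^{-\lambda}-1+\lambda=e^{-1}$, which is the supermartingale $\exp\{\sum_\tau((1-e^{-1})\mu_\tau-P_\tau)\}$ you already wrote, or take $\lambda=\tfrac12$, which reproduces your displayed inequality exactly.
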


The proof is in Appendix~\ref{app:candidate-coverage-proof}. Default smoothing gives $q_{\mathrm{cov}}=p_{\min}$ and hence finite-horizon coverage, but $p_{\min}$ decreases with $T$: it does not establish the horizon-independent ratio used in the sharper OP summary. A horizon-independent lower bound on recommendation-aligned sampling instead gives constant $\chi$ and logarithmic slack. Appendix~\ref{app:primitive-coverage-example} illustrates nonempty design and count conditions, without asserting them for every adaptive OP trajectory. Table~\ref{tab:shared-grid-notation} summarizes the additional notation.

\begin{table}[H]
\centering
\caption{Additional notation for Shared-Grid and recommendation coverage.}
\label{tab:shared-grid-notation}
\small
\begin{tabularx}{\linewidth}{@{}lX@{}}
\toprule
Symbol & Definition and role\\
\midrule
$t_{\mathrm{init}}$ & Prescribed cyclic initialization length, divisible by $K$ and subject to budget safety.\\
$|\mathcal G|$ & Number of corruption candidates in $\mathcal G$.\\
$K_0$ & Number of actions including the null action, $K_0=|\Asetzero|=K+1$.\\
$p_{\min}$ & Probability floor for each action on the master-controlled branch.\\
$\alpha_m$ & EXP4.P confidence-bonus coefficient.\\
$N_a^{\mathrm{rec},\Gamma}(t)$ & Post-initialization active-round count of candidate $\Gamma$ recommending action $a$ before $t$.\\
$n_{\mathrm{cand}}$ & Additive slack when comparing recommendation counts with actual plays.\\
$\chi$ & Multiplicative bound on recommendations relative to actual plays.\\
\bottomrule
\end{tabularx}
\end{table}

\subsection{Regret and Resource Guarantees}
\label{sec:unknown-guarantee}
\label{sec:unknown-unoptimized}

The master-comparison cost includes override concentration and stochastic observed-to-clean score transfer:

\begin{equation}
\label{eq:unknown-master-complexity}
\Reg_{\mathrm{mst}}(T):=20h\sqrt{TK_0\log(3M/\delta_{\mathrm{master}})}.
\end{equation}

Here $h=1+Z$; direct corruption is charged separately. When the nontrivial-horizon condition for EXP4.P fails, the score-range bound gives the same order. The failure totals are

\begin{equation}
\label{eq:unknown-delta-total}
\delta_{\mathrm{tot}}^{\mathrm{U,FE}}:=\delta_{\mathrm{est}}+\delta_{\mathrm{cov}}+\delta_{\mathrm{master}},\quad \delta_{\mathrm{tot}}^{\mathrm{U,OP}}:=\delta_{\mathrm{tot}}^{\mathrm{U,FE}}+\delta_{\mathrm{cand}}.
\end{equation}

Below, $\delta_{\mathrm{tot}}$ is the FE or OP total, according to the mode; OP additionally includes the recommendation-coverage event.

\begin{theorem}[Unknown corruption: regret and resource control]
\label{thm:unknown-general}
Run Algorithm~\ref{alg:shared-grid-summary} with $B^{\mathrm{op}}=B$ and the master specification of Section~\ref{sec:master-specification}. Suppose Assumptions~\ref{ass:base-model} and~\ref{ass:Z} hold, and the pointwise and grid-uniform width interfaces \eqref{eq:general-pointwise-confidence} and \eqref{eq:uniform-grid-width-interface} hold on the corresponding estimator-and-coverage event. Then
\begin{equation}
\label{eq:unknown-modular-bound}
\begin{aligned}
\Reg(T)\le O\biggl(&\Rshared^0+ht_{\mathrm{init}}+h\sum_{t=1}^T\epsilon_t+\Reg_{\mathrm{mst}}(T)+\DT+Z\\[-1mm]
&+\E[3\Gamma^\star+\psi_{\mathrm{est}}(\Gamma^\star)]\biggr)+h\delta_{\mathrm{tot}}T.
\end{aligned}
\end{equation}
On every realization, $\Viol_{\mathrm{obs}}(T)=0$ and $\Viol_{\mathrm{cl}}(T)\le C_b(T)\le C_Z/Z\le\Gamma^\star/Z$. Consequently, $\E[\Viol_{\mathrm{cl}}(T)]\le\E[\Gamma^\star]/Z$.
\end{theorem}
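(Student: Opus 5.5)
The plan is to reduce Theorem~\ref{thm:unknown-general} to the known-corruption allocation argument behind Theorem~\ref{thm:known-general}, run with the path-dependent comparator $\Gamma^\star$ of \eqref{eq:valid-grid-point}. The difference is that actions are chosen by the EXP4.P master rather than by $\widehat a_t^{\Gamma^\star}$, so we add a master-comparison step.

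The feasibility claims are immediate. Algorithm~\ref{alg:shared-grid-summary} uses the same pre-action test \eqref{eq:pre-action-safety}, including during initialization, so Proposition~\ref{prop:feasibility-levels} gives $\Viol_{\mathrm{obs}}(T)=0$ and $\Viol_{\mathrm{cl}}(T)\le C_b(T)$ on every path. Since $Z\|c_t^b(a_t)\|_\infty$ is dominated by the per-round maximum in \eqref{eq:effective-corruption}, we get $C_b(T)\le C_Z/Z$. The inequality $C_Z\le\Gamma^\star$ holds by definition of $\Gamma^\star$, and taking expectations gives the last statement.

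For regret, work on the intersection of the estimator/coverage event with the EXP4.P high-probability event, and charge $h\delta_{\mathrm{tot}}T$ off this event using $|F^0|\le h$. Let $\tau_s$ be the first round at which the safety test fails, with $\tau_s=T+1$ if it never fails. Split the rounds into three groups.
\begin{itemize}
\item Initialization rounds cost at most $h t_{\mathrm{init}}$.
\item Forced rounds cost $h\sum_t\epsilon_t$ in expectation, since each has conditional probability $\epsilon_t$ of being forced.
\item The remaining master rounds in $(t_{\mathrm{init}},\tau_s)$ are compared with $\widehat a_t^{\Gamma^\star}$.
\end{itemize}

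On the master rounds, apply the EXP4.P guarantee \citep[Theorem~2]{beygelzimer2011contextual} to the normalized payoffs $Y_t$ in \eqref{eq:normalized-master-payoff}. This bounds $\sum_t Y_t(\widehat a_t^{\Gamma})-\sum_t Y_t(a_t)$ by order $\sqrt{TK_0\log(3M/\delta_{\mathrm{master}})}$ simultaneously for every fixed expert $\Gamma\in\mathcal G$. Rescaling by $h$ and including concentration for the skipped forced rounds gives $\Reg_{\mathrm{mst}}(T)$. Because the bound holds for all experts at once, it holds for the random index $\Gamma^\star$.

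Next, convert observed scores to clean ones. Replacing $F_t^{\mathrm{obs}}$ by $r_t^{\mathrm{cl}}-Z\lambda_t^\top b_t^{\mathrm{cl}}$, using \eqref{eq:lagrangian-corruption-bound}, costs at most $C_Z\le\Gamma^\star$ for the recommended action and $C_Z(T)\le\Gamma^\star$ for the played action. Passing from clean realizations to the clean means $F^0$ is a martingale step absorbed into $\Reg_{\mathrm{mst}}$; this needs fresh randomness and non-anticipating corruption.

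For $\widehat a_t^{\Gamma^\star}$ we then reproduce the known-corruption chain. Optimism under \eqref{eq:general-pointwise-confidence} gives
\[
F^0(y^\star(\cdot\mid x_t),x_t;\lambda_t)\le F^0(\widehat a_t^{\Gamma^\star},x_t;\lambda_t)+2\beta_{\widehat a_t^{\Gamma^\star},t}^{Z,\Gamma^\star}.
\]
Summing the widths via \eqref{eq:uniform-grid-width-interface} at $\Gamma^\star$ gives $\Rshared^0+\psi_{\mathrm{est}}(\Gamma^\star)$. This substitution is allowed because the width interface holds uniformly over valid grid points on one event and $\Gamma^\star$ is valid.

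The price side is handled as in the known case. Apply the multiplicative-weights regret on $\widetilde g_t$, which yields $\DT$, using observed consumption. Converting the observed resource imbalance in the dual regret to clean consumption costs $Z\cdot C_b(T)\le\Gamma^\star$, the third copy of $\Gamma^\star$. Combining population-LP feasibility of $y^\star$, i.i.d.\ contexts, and the stopping argument with $ZB_{\min}\ge\VUB$ from Assumption~\ref{ass:Z} then handles rounds after $\tau_s$, leaving the $Z$ boundary term.

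The main obstacle is step three: interchanging the EXP4.P comparison with the random index $\Gamma^\star$ and with quantities that depend on realized prices and stopping. The expert advice $\widehat a_t^{\Gamma}$ is adaptive through the shared history, so I would state the EXP4.P bound pathwise for every expert and only then plug in $\Gamma^\star$, with all widths already controlled uniformly. A second difficulty is that the master is compared on observed payoffs $Y_t$, which are corrupted and depend on $\lambda_t$. I would check carefully that the clean transfer uses the per-round worst-action maximum inside $C_Z$, since it is evaluated at an unplayed recommended action.
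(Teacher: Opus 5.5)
Your proposal is correct and follows essentially the same route as the paper. You split $\mathcal E_{\mathrm{opt}}^{\mathrm{sh}}$ into the same pieces:
\begin{itemize}
\item initialization, charged $ht_{\mathrm{init}}$;
\item a population-LP-to-$\widehat a_t^{\Gamma^\star}$ optimism step, paid for by the grid-uniform width interface;
\item a $\widehat a_t^{\Gamma^\star}$-to-played comparison via EXP4.P plus two observed-to-clean score transfers;
\item reuse of the known-corruption conversion, dual, and stopping lemmas, which give the third $C_Z$, $\DT$, and $Z$.
\end{itemize}
Charging forced rounds separately in expectation is a harmless variant of the concentration bound the paper uses inside Lemma~\ref{lem:master-regret}.

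One point should be made explicit. The clean-noise martingale step must be proved for each fixed $\Gamma\in\mathcal G$ with a union bound before you substitute $\Gamma^\star$, just as for the EXP4.P bound. This is needed because $\Gamma^\star$ depends on the whole corruption path, which may correlate with the noise; it is exactly the paper's Lemma~\ref{lem:master-clean-transfer}.
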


\xhdr{Interpretation and proof overview.}
Relative to known corruption, the bound uses shared widths, adds initialization $ht_{\mathrm{init}}$ and master cost $\Reg_{\mathrm{mst}}(T)$, and compares with $\Gamma^\star$. The term $3\Gamma^\star$ accounts for two observed-to-clean score transfers and one resource transfer, separately from the prediction-width cost $\psi_{\mathrm{est}}(\Gamma^\star)$. The comparator is path-dependent, so these costs are averaged; feasibility remains pathwise. Price learning and stopping use the same argument as before. Appendix~\ref{app:unknown-analysis} gives the proof.

\xhdr{Concrete Lasso rates.}
Substituting the shared FE or OP widths gives the following rates, with $A_{\mathrm{FE}},A_{\mathrm{op}},n_0,H_T$ defined in Section~\ref{sec:estimator-contract}.

\begin{corollary}[Unknown corruption: concrete shared-data rates]
\label{cor:unknown-rates-main}
\label{cor:unknown-on-policy-concrete}
Under Assumptions~\ref{ass:base-model}--\ref{ass:Z}, run Algorithm~\ref{alg:shared-grid-summary} with $B^{\mathrm{op}}=B$, the shared estimator and precision of Proposition~\ref{prop:lasso-prediction-main}, and the EXP4.P specification of Section~\ref{sec:master-specification}. With a universal $C$ and the corresponding failure total in \eqref{eq:unknown-delta-total}:
\begin{enumerate}[label=(\roman*),leftmargin=2em,nosep]
\item \textbf{Forced exploration.} Under the population-design Assumption~\ref{ass:sparse-context} (calibration in Appendix~\ref{app:concrete-design-conditions}), use \eqref{eq:forced-rsc-calibration} and a constant forced-exploration probability $\epsilon_t\equiv\epsilon$ chosen by \eqref{eq:lasso-exploration-choice}. Then
\begin{equation}
\label{eq:unknown-fe-lasso-rate}
\begin{aligned}
\Reg(T) & \le C\biggl[h^{1/3}A_{\mathrm{FE}}^{2/3}K^{1/3}T^{2/3}
+h\sqrt{K(n_0+\ell)T}+ht_{\mathrm{init}}\\
&+\left(3+\frac{KH_T}{\kappa_{\mathrm{FE}}\epsilon}\right)\E[\Gamma^\star]
+\Reg_{\mathrm{mst}}(T)+\DT+Z\biggr]+h\delta_{\mathrm{tot}}T.
\end{aligned}
\end{equation}
\item \textbf{On-policy learning.} Under Conditions~\ref{ass:on-policy-re} and~\ref{ass:candidate-coverage}, with $\epsilon_t=0$, use the stated initialization, complete grid, and uniform-advice expert. Fit every realized non-null sample using the supplied OP calibration and the same certified Lasso rule. Then
\begin{equation}
\label{eq:unknown-op-lasso-rate}
\begin{aligned}
\Reg(T)&\le C\biggl[\chi A_{\mathrm{op}}\sqrt{KT}
+\Reg_{\mathrm{mst}}(T)+h\{t_{\mathrm{init}}+K(n_{\mathrm{cand}}+\chi n_0)\}\\
&+\left(3+\frac{K\{\chi H_T+n_{\mathrm{cand}}\}}{\kappa_{\mathrm{op}}}\right)\E[\Gamma^\star]+\DT+Z\biggr]
+h\delta_{\mathrm{tot}}T.
\end{aligned}
\end{equation}
\end{enumerate}
\end{corollary}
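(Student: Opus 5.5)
The plan is to obtain Corollary~\ref{cor:unknown-rates-main} by substituting concrete shared-history width bounds into Theorem~\ref{thm:unknown-general}. That theorem already isolates $ht_{\mathrm{init}}$, $h\sum_t\epsilon_t$, $\Reg_{\mathrm{mst}}(T)$, $\DT$, $Z$, $\E[3\Gamma^\star]$ and the failure cost. So the work reduces to verifying the pointwise interface \eqref{eq:general-pointwise-confidence} and the grid-uniform cumulative interface \eqref{eq:uniform-grid-width-interface} for the Lasso radii, and then identifying $\Rshared^0$ and $\psi_{\mathrm{est}}(\Gamma)$ in each mode.

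\textbf{Pointwise confidence.} This comes from Proposition~\ref{prop:lasso-prediction-main} applied to the shared fits, which do not depend on $\Gamma$. On the time-uniform noise event intersected with the design event, each retained corruption total satisfies $C_n\le\Gamma$ for every valid candidate. For the reward channel this is immediate from \eqref{eq:effective-corruption}. For a consumption channel, $C_n\le\Gamma/Z$, which is exactly compensated by the factor $Z$ in \eqref{eq:score-radius-interface}. Because the radii are monotone in $\Gamma$, one event serves all valid grid points simultaneously, including the path-dependent $\Gamma^\star$.

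\textbf{Cumulative widths, FE mode.} Retained samples come from initialization and uniform overrides, so they are candidate-independent. Under Assumption~\ref{ass:sparse-context} with \eqref{eq:forced-rsc-calibration}, the counts satisfy $N_a^{\mathrm{est}}(t)\gtrsim\epsilon t/K$ after an $O(K(n_0+\ell)/\epsilon)$ burn-in. Summing the clean part $A_{\mathrm{FE}}/\sqrt{N}$ over recommended actions gives $A_{\mathrm{FE}}\sqrt{KT/\epsilon}$, uniformly in $\Gamma$. The corruption part $4\Gamma/(\kappa_{\mathrm{FE}}N)$ sums harmonically to $KH_T\Gamma/(\kappa_{\mathrm{FE}}\epsilon)$; this is Lemma~\ref{lem:comp-width-forced}, applied verbatim to the shared history. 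Balancing $h\epsilon T$ against $A_{\mathrm{FE}}\sqrt{KT/\epsilon}$ via \eqref{eq:lasso-exploration-choice}, exactly as in Corollary~\ref{cor:known-forced-concrete} and Appendix~\ref{app:exploration-balance}, then gives \eqref{eq:unknown-fe-lasso-rate} once $\Gamma$ is replaced by $\Gamma^\star$ and expectations are taken.

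\textbf{Cumulative widths, OP mode (main obstacle).} The difficulty is that the widths are charged to candidate $\Gamma^\star$'s recommendations $\widehat a_t^{\Gamma^\star}$, which the master need not play. Proposition~\ref{prop:lasso-widths-on-policy} bounds widths indexed by actual play counts. I will convert using Condition~\ref{ass:candidate-coverage}, working on its event.

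For each action $a$, enumerate the post-initialization rounds at which $\Gamma$ recommends $a$. At the $k$-th such round,
\[
N_a(t)\ \ge\ \frac{k-n_{\mathrm{cand}}}{\chi}.
\]
The first $n_{\mathrm{cand}}+\chi n_0$ recommendations are charged at the trivial radius $h$, which gives the $hK(n_{\mathrm{cand}}+\chi n_0)$ term. For the remaining recommendations:
\begin{itemize}
\item the clean radius sums to $\sum_k A_{\mathrm{op}}\sqrt{\chi/k}$, hence at most $\chi A_{\mathrm{op}}\sqrt{KT}$ over all actions after Cauchy--Schwarz;
\item the corruption term sums as $\sum_k 4\Gamma\chi/(\kappa_{\mathrm{op}}k)$, hence at most $K\chi H_T\Gamma/\kappa_{\mathrm{op}}$.
\end{itemize}
There is also an additive $Kn_{\mathrm{cand}}\Gamma/\kappa_{\mathrm{op}}$ contribution. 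It arises because at counts near the slack the $1/N$ factor is only bounded by a constant; if needed, I will simply charge these slack rounds at the capped corruption radius.

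Since the recommendation counts are controlled jointly over all $\Gamma\in\mathcal G$ by Condition~\ref{ass:candidate-coverage}, the bound is uniform on the grid, as \eqref{eq:uniform-grid-width-interface} requires. Substituting into \eqref{eq:unknown-modular-bound} and adding $\delta_{\mathrm{cand}}$ to the failure total yields \eqref{eq:unknown-op-lasso-rate}.
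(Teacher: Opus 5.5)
Your proof is correct, and its structure matches the paper's. You substitute the concrete shared widths into Theorem~\ref{thm:unknown-general}. For FE you use Lemma~\ref{lem:comp-width-forced} with post-initialization counts and the calibration of Appendix~\ref{app:exploration-balance}.

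The one genuine difference is the OP recommendation-width step. The paper handles it inside Proposition~\ref{prop:lasso-widths-on-policy}. It groups the recommendations of $a$ by the pre-round play count $n$, which gives $\sum_{j\le n}A_j\le n_{\mathrm{cand}}+\chi(n+1)$. It then applies the summation-by-parts inequality \eqref{eq:recommendation-summation} separately to the clean envelope $f$ and to the dominating corruption envelope $1/(n\vee1)$.

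You instead invert the coverage inequality at each individual recommendation. This is more direct and slightly sharper:
\begin{itemize}
\item Since $\Gamma$ makes at most $T$ recommendations in total, Cauchy--Schwarz gives $\sqrt{\chi}\,A_{\mathrm{op}}\sqrt{KT}$ rather than $\chi A_{\mathrm{op}}\sqrt{KT}$.
\item You charge the first $n_{\mathrm{cand}}+\chi n_0$ recommendations at the capped total radius $h$, and every channel radius is clipped at one. So the additive $Kn_{\mathrm{cand}}\Gamma/\kappa_{\mathrm{op}}$ term is never needed. The paper pays it only because its unclipped corruption envelope enters through the $bf(0)$ term of the summation lemma.
\end{itemize}
Your remark that this term ``arises'' is therefore unnecessary in your own scheme, though harmless, since the stated bound contains it. The paper's route buys modularity: one summation lemma handles any nonincreasing envelope and keeps the clean and corruption charges separate.

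There are a few small slips, all absorbed into $C$:
\begin{itemize}
\item At the $k$-th recommendation round $t_k$ you have $N_a^{\mathrm{rec},\Gamma}(t_k)=k-1$. The condition therefore gives $N_a(t_k)\ge(k-1-n_{\mathrm{cand}})/\chi$, not $(k-n_{\mathrm{cand}})/\chi$.
\item Consequently your sums should be $\sum_k\sqrt{\chi/(k-1-n_{\mathrm{cand}})}$ and $\sum_k\chi/(k-1-n_{\mathrm{cand}})$ over $k>1+n_{\mathrm{cand}}+\chi n_0$. Reindexing gives the same bounds, with one extra burn-in recommendation per action.
\item The combined corruption envelope in \eqref{eq:radius-decomposition} is $8\Gamma/(\kappa n)$, not $4\Gamma/(\kappa n)$.
\end{itemize}
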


Compared with Corollary~\ref{cor:known-rates-main}, both modes add initialization and master costs; OP also pays the recommendation-coverage factors $\chi,n_{\mathrm{cand}}$. Under the same proportional-budget and stepsize regime, fixed remaining parameters, dominated initialization and burn-in, and $\delta_{\mathrm{tot}}\le T^{-2}$, the FE and OP summaries are $\widetilde O(T^{2/3}+\E[\Gamma^\star]T^{1/3})$ and $\widetilde O(\sqrt T+\E[\Gamma^\star])$. For OP, default smoothing alone gives horizon-dependent coverage parameters; those factors must remain in the explicit bound.

\xhdr{Arbitrary exploration rate.}
As in Section~\ref{sec:general-known}, the unoptimized FE bound exposes the dependence on a constant exploration probability:

\begin{corollary}[Unknown FE with shared estimation]
\label{cor:unknown-forced-concrete}
Under Assumptions~\ref{ass:base-model},~\ref{ass:Z}, and~\ref{ass:sparse-context}, run Algorithm~\ref{alg:unknown} with $B^{\mathrm{op}}=B$, fixed safe cyclic initialization length $t_{\mathrm{init}}$, constant $0<\epsilon\le1/2$, and \eqref{eq:forced-rsc-calibration}. Use the complete grid, uniform-advice master expert, and Appendix~\ref{app:concrete-estimator}'s common regularization and certified precision on the initialization and later forced samples. For a universal $C$ and $\delta_{\mathrm{tot}}=\delta_{\mathrm{tot}}^{\mathrm{U,FE}}$,
\begin{equation}
\label{eq:unknown-fe-unoptimized-rate}
\begin{aligned}
\Reg(T)\le C\biggl[&ht_{\mathrm{init}}+\frac{hK(n_0+\ell)}{\epsilon}
+A_{\mathrm{FE}}\sqrt{\frac{KT}{\epsilon}}+h\epsilon T\\
&+\Reg_{\mathrm{mst}}(T)+\DT+Z
+\left(3+\frac{KH_T}{\kappa_{\mathrm{FE}}\epsilon}\right)\E[\Gamma^\star]\biggr]
+h\delta_{\mathrm{tot}}T.
\end{aligned}
\end{equation}
Choosing \eqref{eq:lasso-exploration-choice} gives \eqref{eq:unknown-fe-lasso-rate}, including its capped regime, by the calculation in Appendix~\ref{app:exploration-balance}.
\end{corollary}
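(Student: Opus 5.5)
The plan is to obtain Corollary~\ref{cor:unknown-forced-concrete} by substituting the shared forced-exploration widths into Theorem~\ref{thm:unknown-general}. Only the terms $\Rshared^0$ and $\psi_{\mathrm{est}}(\Gamma)$ in \eqref{eq:unknown-modular-bound} then need to be made explicit. Everything else carries over directly: $ht_{\mathrm{init}}$, $h\sum_t\epsilon_t=h\epsilon T$ for constant $\epsilon$, $\Reg_{\mathrm{mst}}(T)$, $\DT$, $Z$, and the failure term. So the work reduces to three steps. First, verify that the shared Lasso radii satisfy the pointwise interface \eqref{eq:general-pointwise-confidence}. Second, verify the grid-uniform cumulative bound \eqref{eq:uniform-grid-width-interface} with
\[
\Rshared^0\lesssim \frac{hK(n_0+\ell)}{\epsilon}+A_{\mathrm{FE}}\sqrt{\frac{KT}{\epsilon}},\qquad \psi_{\mathrm{est}}(\Gamma)\lesssim \frac{K\Gamma H_T}{\kappa_{\mathrm{FE}}\epsilon}.
\]
Third, take expectations at $\Gamma=\Gamma^\star$.

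For the pointwise part, I would invoke Proposition~\ref{prop:lasso-prediction-main} on the shared retained sets \eqref{eq:unknown-estimator-sample-set} in forced mode. Eligibility of a sample (initialization, or $e_\tau=1$) is decided before its context is used and independently of it. Hence the retained contexts for each action are i.i.d.\ draws from $\mathcal D$, and Assumption~\ref{ass:sparse-context} together with calibration \eqref{eq:forced-rsc-calibration} gives the RSC event with curvature $\kappa_{\mathrm{FE}}$ on $\delta_{\mathrm{cov}}$, uniformly over prefixes $n\ge n_0$. The fitted coefficients do not depend on $\Gamma$. Each retained corruption satisfies $C_n\le C_Z/\min\{1,Z\}$ up to the $Z$-scaling built into the radii, so one event covers every valid $\Gamma\ge C_Z$ simultaneously. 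The radii are monotone in $\Gamma$, which is what allows $\Gamma^\star$ to be substituted after the run.

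The cumulative bound is the step I expect to be the main obstacle. The sum is over recommendations $\widehat a_t^\Gamma$, but the width of an action is governed by its retained count $N_a^{\mathrm{est}}(t)$, which grows only through forced plays. I would first use a multiplicative Chernoff/Freedman argument: on every active post-initialization round each action is forced with probability $\epsilon/K$, so with high probability $N_a^{\mathrm{est}}(t)\ge \epsilon(t-t_{\mathrm{init}})/(2K)-O(\ell)$ for all $t$. I would include this in the coverage event and charge rounds before $N_a^{\mathrm{est}}\ge n_0$ with unit radii. That early phase lasts $O(K(n_0+\ell)/\epsilon)$ rounds per action and costs $h$ each. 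After it, each recommended width is at most
\[
A_{\mathrm{FE}}/\sqrt{N}+\frac{4KC}{\kappa_{\mathrm{FE}}N}\cdot(\text{scale}),\qquad N\gtrsim \epsilon t/K,
\]
and this bound is uniform over actions, so the identity of the recommended action does not matter. Summing over $t$ gives $A_{\mathrm{FE}}\sqrt{KT/\epsilon}$ for the clean part and $(K\Gamma/\kappa_{\mathrm{FE}}\epsilon)H_T$ for the corruption part, using $\sum_t 1/t\le H_T$. One subtlety remains: after stopping, $I_t=0$ and the count lower bound is not needed. Rounds where the safety test passes but the count bound fails are absorbed into the failure probability.

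Finally, I would plug these bounds into \eqref{eq:unknown-modular-bound}. I would combine $3\E[\Gamma^\star]$ with $\E[\psi_{\mathrm{est}}(\Gamma^\star)]$, which is linear in $\Gamma^\star$, to get the coefficient $3+KH_T/(\kappa_{\mathrm{FE}}\epsilon)$, and absorb constants into $C$. The specialization to \eqref{eq:lasso-exploration-choice}, including the capped regime $\epsilon=1/2$, is the balancing calculation of $h\epsilon T$ against $A_{\mathrm{FE}}\sqrt{KT/\epsilon}$ and $hK(n_0+\ell)/\epsilon$. This is the same calculation used for Corollary~\ref{cor:known-forced-concrete}, since $ht_{\mathrm{init}}$, $\Reg_{\mathrm{mst}}$, and the $\Gamma^\star$ term are simply carried along.
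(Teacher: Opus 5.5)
Your proposal is correct and follows the paper's route: the common confidence event for all valid grid points (Lemma~\ref{lem:shared-lasso-prediction}) and the recommendation-independent forced-count widths using post-initialization elapsed time (Lemma~\ref{lem:comp-width-forced}) give \eqref{eq:forced-width-constant}, and substituting into Theorem~\ref{thm:unknown-general}, with $\psi_{\mathrm{est}}$ linear in $\Gamma^\star$, yields \eqref{eq:unknown-fe-unoptimized-rate}. One correction on the last step: in the capped regime $\epsilon=1/2$ you cannot simply balance terms (for instance, $A_{\mathrm{FE}}\sqrt{2KT}$ can exceed $h^{1/3}A_{\mathrm{FE}}^{2/3}K^{1/3}T^{2/3}$), so Appendix~\ref{app:exploration-balance} instead observes that the clean terms of the target bound are then at least $T/2$ and invokes the range bound $\Reg(T)\le T$.
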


The calibration \eqref{eq:lasso-exploration-choice} balances clean learning, burn-in, and exploration without using the unknown corruption level. Appendix~\ref{app:exploration-balance} gives the calculation, including the capped regime.

\FloatBarrier

\section{Numerical Experiments}
\label{sec:experiments}
\label{app:numerical}
\label{app:coverage-diagnostic-details}

This section gives finite-sample illustrations of the four mechanisms emphasized by the theory: robustness with a supplied corruption bound, adaptation when the corruption level is unknown, the distinct roles of reward and consumption corruption, and the pathwise resource-accounting guarantee. The experiments use the same clean population-LP benchmark as the analysis and evaluate reward with the clean conditional means, so corrupted feedback affects the learner but not the benchmark. We focus on robustness as the corruption level varies rather than fitting an empirical asymptotic exponent; the latter would mix finite-horizon initialization, master, design, and logarithmic terms that are kept explicit in Corollaries~\ref{cor:known-rates-main} and~\ref{cor:unknown-rates-main}.

\subsection{Common environment and evaluation protocol}
\label{app:exp-setup}

\xhdr{Three-specialist instance.}
The main experiments use $K=3$ non-null actions, $m=2$ resources, dimension $d=20$, true and calibration sparsity $s_0=s=3$, and parameter-radius bound $R=1.2$. Budgets are $B=(0.5T,0.5T)$, hence $Z=2$. Draw $q_1,q_2$ independently and uniformly from $\{-1,-0.5,0.5,1\}$ and $q_3,z_5,\ldots,z_{20}$ independently and uniformly from $\{-1,+1\}$, and set
\begin{equation}
\label{eq:exp-context}
x=(0.50,\,0.40q_1,\,0.40q_2,\,0.20q_3,\,0.15z_5,\ldots,0.15z_{20}).
\end{equation}
Then $\|x\|_2^2\le0.97$. The clean reward means are
\begin{equation}
\label{eq:exp-reward-means}
\begin{aligned}
r_1^0(x)&=0.55+0.08q_1+0.02q_3,\\
r_2^0(x)&=0.55-0.08q_1+0.02q_3,\\
r_3^0(x)&=0.57+0.08q_2+0.02q_3,
\end{aligned}
\end{equation}
and the clean mean consumptions are
\begin{equation}
\label{eq:exp-consumption-means}
b_1^0=(0.60,0.40)^\top,\qquad b_2^0=(0.40,0.60)^\top,\qquad b_3^0=(0.50,0.50)^\top.
\end{equation}

Independent reward and consumption noises are uniform on $[-0.02,0.02]$. The exact finite-support population LP has value $\VUB=0.6225T$, action masses $(5/16,5/16,6/16)$, and resource use $(0.5,0.5)$; thus all three actions are clean-LP relevant and both resources bind in expectation. The minimum clean reward gap between the best and second-best action over the supported $(q_1,q_2)$ states is $0.02$.

\xhdr{Algorithms and practical calibration.}
All headline policy comparisons use the on-policy mode from Sections~\ref{sec:core-primal-dual-rule} and~\ref{sec:unknown}. We use the norm-constrained Lasso implementation in Section~\ref{sec:concrete-sparse-estimator} with fixed practical coefficients $c_\rho=0.25$, $c_\beta=0.75$, $c_C=1$, unit dual multiplier, start count $20$, and certified optimization gap $10^{-8}$. Shared-Grid uses the complete dyadic grid, the explicit uniform-advice expert, the EXP4.P specification of Section~\ref{app:unknown-details}, and $201$ safe cyclic initialization rounds (67 per non-null action). These settings are held fixed across the reported comparisons.

We compare \ROPD{} with a valid supplied bound, a Reward-Only Robust ablation that protects reward confidence but not the consumption channel, and Non-Robust PD with zero corruption allowance. Shared-Grid receives no corruption bound. Attack maps are functions of the potential contexts and are fixed before the learner's fresh action draw; compared methods receive the same contexts, clean potential outcomes, and exogenous attack map within each paired seed.

For a policy $\pi$, define the reported percentage Relative Regret by
\begin{equation}
\label{eq:exp-relative-regret}
\operatorname{RelReg}(\pi)
:=100\,\frac{\VUB-\sum_{t=1}^T r^0(a_t,x_t)}{\VUB},
\end{equation}
where the null action contributes zero, including after stopping.  Each reported cell uses 20 paired replications. Error bars and bands are 95\% percentile bootstrap intervals from 2,000 whole-replication resamples; paired differences and corrupted-minus-clean increments are formed within seed before resampling, and rounds are never treated as independent replicates. The final runs used in this section have no excluded seeds, no clipping, and no solver failures.

\begin{table}[t]
\centering
\caption{Theory-to-experiment map. The experiments are finite-sample illustrations of the mechanisms in the corresponding guarantees; empirical design and master diagnostics do not replace the stated probabilistic coverage conditions.}
\label{tab:exp-theory-map}
\footnotesize
\setlength{\tabcolsep}{4pt}
\begin{tabularx}{\linewidth}{@{}p{0.12\linewidth}p{0.3\linewidth}X@{}}
\toprule
Experiment & Theory connection & Finite-sample question \\
\midrule
Figure~\ref{fig:exp-known} & Theorem~\ref{thm:known-general}, Corollary~\ref{cor:known-rates-main} & How absolute regret changes as a valid known corruption level increases. \\
Figure~\ref{fig:exp-unknown} & Theorem~\ref{thm:unknown-general}, Corollary~\ref{cor:unknown-rates-main} & How much additional regret unknown corruption causes after accounting for each method's clean finite-horizon cost. \\
Figure~\ref{fig:exp-channels} & Joint reward--consumption model and the Reward-Only ablation & How the location of a fixed corruption budget across reward and resource channels changes performance. \\
Figure~\ref{fig:exp-accounting} & Proposition~\ref{prop:feasibility-levels} and the pathwise parts of Theorems~\ref{thm:known-general}--\ref{thm:unknown-general} & How over- and under-reported consumption affect stopping and clean resource use. \\
\bottomrule
\end{tabularx}
\end{table}

\subsection{Known corruption: performance as \texorpdfstring{$C_Z$}{C-Z} increases}
\label{app:exp-known}

We first study the setting of Theorem~\ref{thm:known-general}, where the learner is given a deterministic valid bound $\Gamma\ge C_Z$.  Fix $T=20{,}000$. Corruption targets Action~1 only on contexts where it is uniquely clean-preferred and applies the joint shift
\[
(\Delta r_1,\Delta b_{1,1},\Delta b_{1,2})=(-0.20,+0.10,+0.10).
\]

With $Z=2$, each attacked opportunity contributes $0.40$ units to $C_Z$. We vary $C_Z\in\{0,100,200,300,400\}$ by attacking $0,250,500,750,1000$ spread eligible opportunities.

Figure~\ref{fig:exp-known} reports absolute Relative Regret. All three methods coincide at $C_Z=0$ (mean $0.756\%$). ROPD has the lowest mean regret at every positive tested corruption level. At $C_Z=400$, mean Relative Regret is $1.333\%$ for ROPD, $1.383\%$ for Reward-Only Robust, and $2.008\%$ for Non-Robust PD. Thus the separation from the non-robust baseline becomes substantial as corruption grows, while the Reward-Only ablation remains between the full robust method and the non-robust policy throughout the main sweep. This is consistent with the corruption-dependent terms in Corollary~\ref{cor:known-rates-main}: the corruption-aware policies trade wider confidence sets for reduced sensitivity to contaminated feedback, and protecting both channels is useful when resource feedback can also be corrupted.

\begin{figure}[t]
\centering
\includegraphics[width=0.8\linewidth]{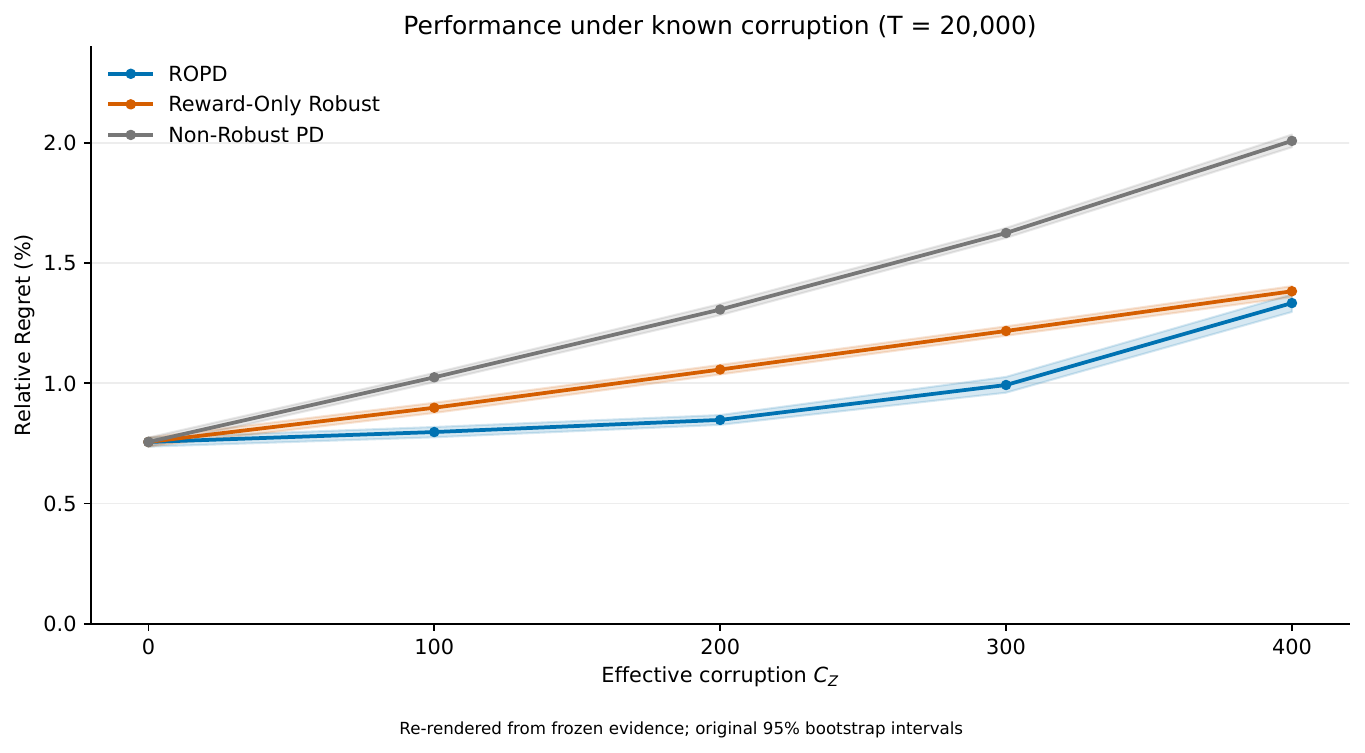}
\caption{\textbf{Known-corruption robustness.} At fixed $T=20{,}000$, the effective corruption level $C_Z$ increases along the horizontal axis. The vertical axis is absolute Relative Regret against the clean population-LP benchmark. Bands are the frozen 95\% whole-replication bootstrap intervals.}
\label{fig:exp-known}
\end{figure}

\subsection{Unknown corruption: adaptation on one shared history}
\label{app:exp-unknown}

We next remove the corruption-bound input and evaluate Shared-Grid from Algorithm~\ref{alg:shared-grid-summary}. Because Shared-Grid pays its initialization and master-comparison costs even on clean data, the primary quantity here is the \emph{corruption-induced increase}
\begin{equation}
\label{eq:exp-corruption-increment}
\Delta\operatorname{RelReg}_M(C_Z)
:=\operatorname{RelReg}_M(C_Z)-\operatorname{RelReg}_M(0)
\end{equation}
for method $M$, formed within paired seeds. This isolates sensitivity to corruption from the method's clean finite-horizon cost. Shared-Grid may still have larger absolute finite-horizon regret because it also pays initialization and master-comparison costs.

Fix $T=80{,}000$ and vary the unknown corruption budget over $C_Z\in\{0,1000,2000,3000,4000\}$. The same joint shift $(-0.20,+0.10,+0.10)$ targets clean-preferred Action~1 opportunities; Shared-Grid is never given $C_Z$.  Figure~\ref{fig:exp-unknown} shows that Shared-Grid incurs a smaller corruption-induced increase at every positive severity. At $C_Z=4000$, the increase is $1.313$ percentage points for Shared-Grid versus $2.574$ for Non-Robust PD. The paired Shared-minus-Non-Robust differences are $-0.262$, $-0.770$, $-0.916$, and $-1.261$ percentage points at $C_Z=1000,2000,3000,4000$, respectively; each pointwise paired 95\% interval lies below zero.

These results illustrate the role of the adaptive grid in Theorem~\ref{thm:unknown-general}: even without a supplied corruption level, the shared-data master is markedly less sensitive to increasing corruption on this instance. The result should be read together with the explicit master term in Corollary~\ref{cor:unknown-rates-main}. In absolute terms Shared-Grid still pays a visible clean finite-horizon master cost; the experiment therefore supports adaptation to corruption rather than an absolute dominance claim over the non-robust policy.

\begin{figure}[t]
\centering
\includegraphics[width=0.8\linewidth]{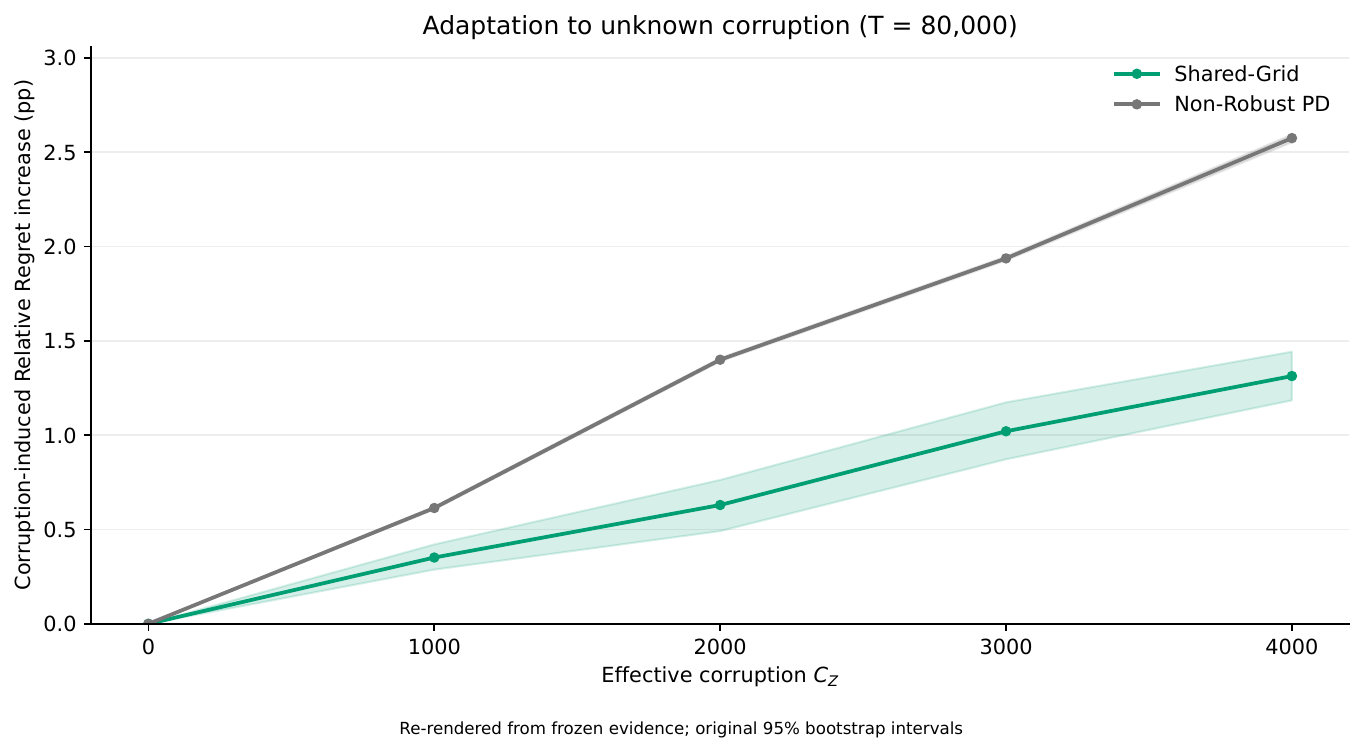}
\caption{\textbf{Unknown-corruption adaptation.} At fixed $T=80{,}000$, the horizontal axis is the unknown effective corruption level $C_Z$, and the vertical axis is the paired corruption-induced increase in Relative Regret. Bands are 95\% paired whole-replication bootstrap intervals.}
\label{fig:exp-unknown}
\end{figure}

\subsection{Reward, consumption, and joint corruption}
\label{app:exp-channels}

The preceding experiments vary the amount of corruption. We now fix its effective budget and vary \emph{where} it enters the feedback.  Set $T=40{,}000$ and attack the same 1,200 clean-preferred Action~1 opportunities in every corrupted condition. The shifts are
\[
\begin{array}{ll}
\text{Reward:}&(-0.20,0,0),\\
\text{Consumption:}&(0,+0.10,+0.10),\\
\text{Joint:}&(-0.10,+0.05,+0.05).
\end{array}
\]
Each attacked opportunity contributes $0.20$ to $C_Z$, so all three conditions have exactly $C_Z=240$, with the same target, attacked contexts, count, and timing within a paired seed.

Figure~\ref{fig:exp-channels} reports absolute Relative Regret under the corrupted environments. ROPD has the lowest mean regret in all three channels: $0.416\%$ under reward corruption, $0.907\%$ under consumption corruption, and $0.655\%$ under joint corruption. Reward-Only Robust attains $0.501\%$, $0.989\%$, and $0.738\%$, respectively, while Non-Robust PD attains $0.617\%$, $1.086\%$, and $0.840\%$. The ordering is consistent across the three equal-$C_Z$ corrupted environments and illustrates why consumption corruption is a separate modeling issue: it changes not only a regression target but also the resource-price and budget-accounting path. ROPD has the lowest absolute finite-horizon Relative Regret in each tested channel; the matched-clean incremental comparison is reported separately below and is more nuanced.

\begin{figure}[t]
\centering
\includegraphics[width=0.8\linewidth]{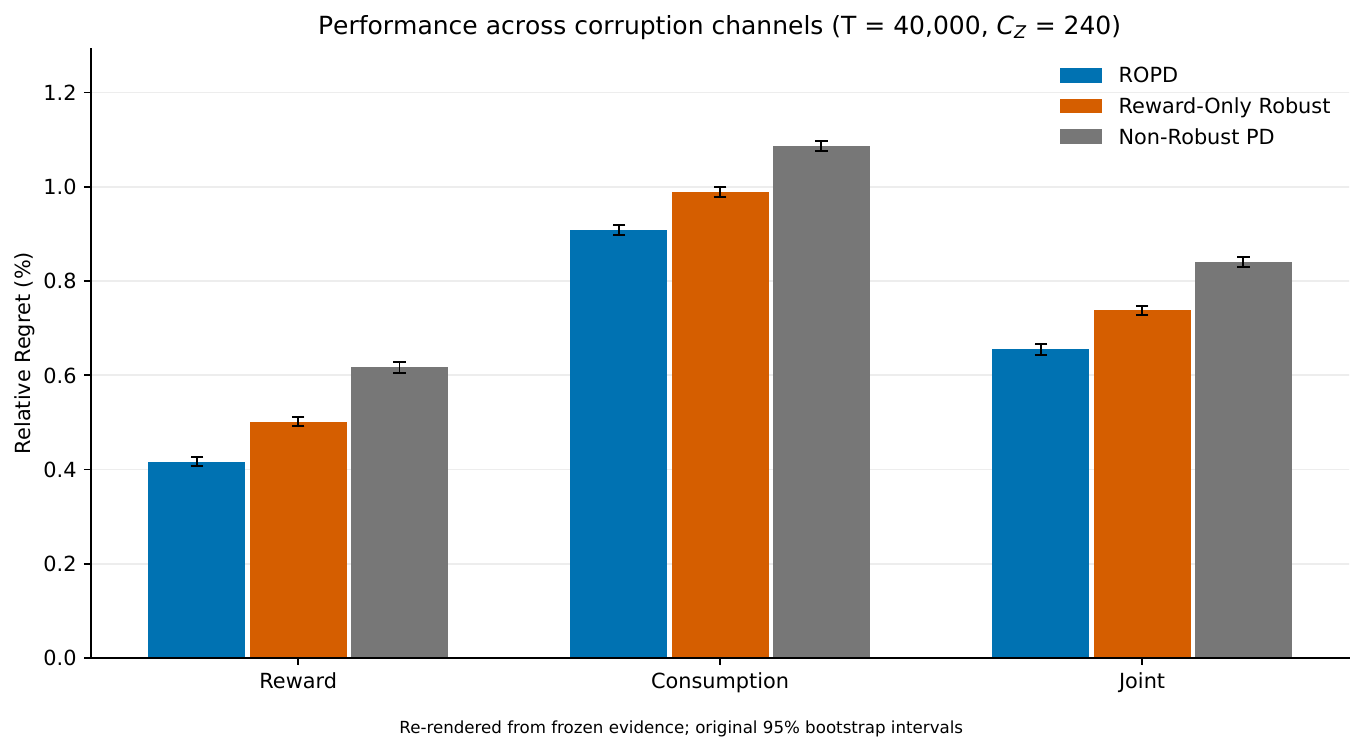}
\caption{\textbf{Corruption channels.} Reward, consumption, and joint attacks use the same 1,200 targeted opportunities and the same effective corruption budget $C_Z=240$ at $T=40{,}000$. Bars show absolute Relative Regret. Error bars are the frozen 95\% whole-replication bootstrap intervals.}
\label{fig:exp-channels}
\end{figure}

\subsection{Pathwise resource accounting under corrupted consumption}
\label{app:exp-accounting}

Finally, we isolate the resource-accounting statement of Proposition~\ref{prop:feasibility-levels}. This controlled stress test uses the same problem scale ($K=3$, $m=2$, $d=20$, $s=3$) and perturbs one frequently used action--resource pair by $\pm0.10$. The goal is not a policy ranking: it is to make the two pathwise effects of consumption corruption visible.

When consumption is over-reported, the observed ledger is depleted too quickly and the safety rule can stop allocation early. Across increasing severities, mean lost rounds rise from zero to $341$ at the largest tested perturbation. When consumption is under-reported, the observed ledger remains feasible while clean resource use can exceed the nominal budget. At the largest severity, mean clean resource-1 excess is $361.243$ and mean played consumption corruption is $C_b=363.665$. More importantly, every trajectory satisfies
\begin{equation}
\label{eq:exp-resource-bound}
\Viol_{\mathrm{obs}}(T)=0,
\qquad
\Viol_{\mathrm{cl}}(T)\le C_b(T)\le C_Z/Z,
\end{equation}
with no additional additive unit. Thus Figure~\ref{fig:exp-accounting} directly illustrates the distinction between observed feasibility and clean feasibility that motivates the resource component of the theory.

\begin{figure}[t]
\centering
\includegraphics[width=0.96\linewidth]{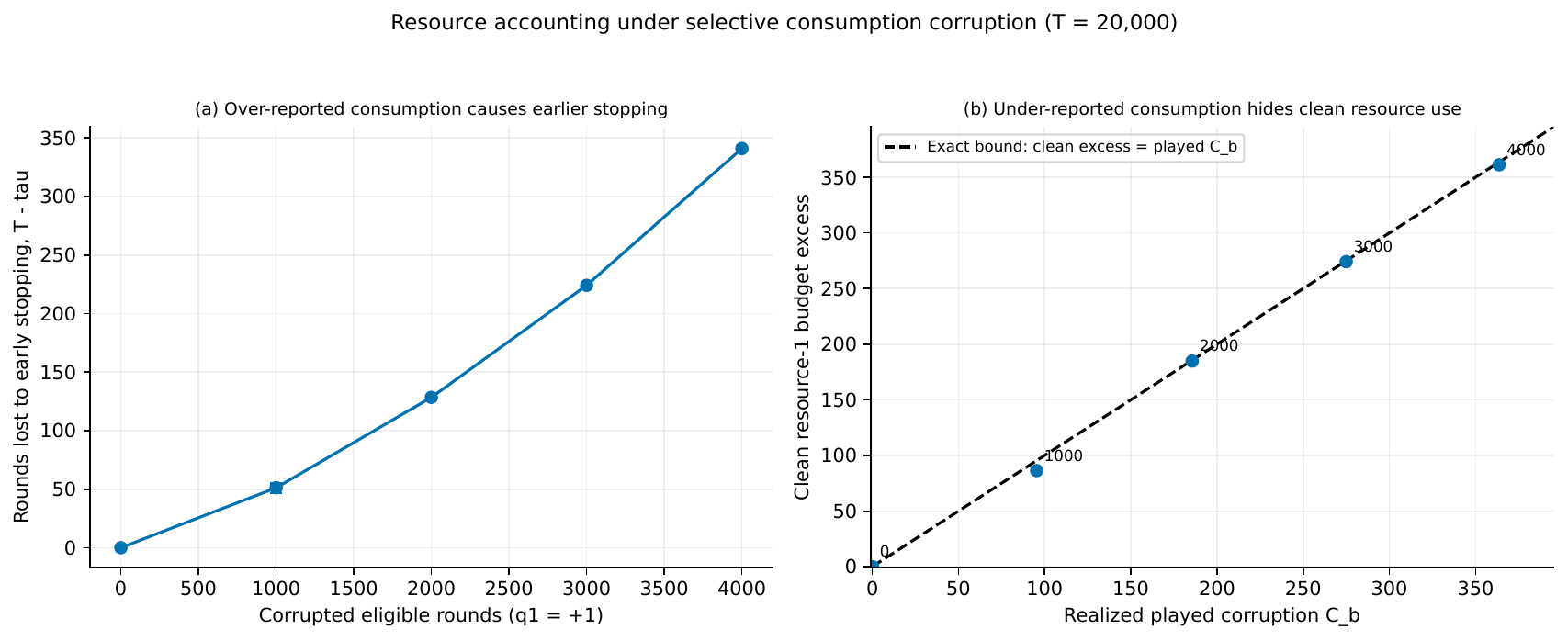}
\caption{\textbf{Resource accounting under consumption corruption.} Over-reported consumption advances stopping (left), while under-reported consumption can hide clean resource use (right). The diagonal in the right panel is the pathwise upper bound $\Viol_{\mathrm{cl}}(T)\le C_b(T)$.  Observed-budget violation is zero on every reported trajectory.}
\label{fig:exp-accounting}
\end{figure}

\subsection{Additional checks and interpretation}
\label{app:exp-controls}

The main experiments above measure finite-horizon performance under known corruption, unknown corruption, different feedback channels, and corrupted resource accounting.  We report three additional checks to clarify the interpretation of these results.

\xhdr{Matched clean controls.}
For the known-corruption algorithms, supplying a positive bound $\Gamma$ changes the confidence widths even when the observed feedback is clean. We therefore also compare each corrupted run with a clean run using the same supplied bound.  This separates the effect of corrupted feedback from the finite-sample effect of the corruption-aware confidence allowance.

For the sweep in Figure~\ref{fig:exp-known}, both robust methods have smaller matched corruption increments than Non-Robust PD at the two largest tested levels, $C_Z=300$ and $400$.  At $C_Z=400$, the increments are $1.153$ percentage points for ROPD, $1.113$ for Reward-Only Robust, and $1.252$ for Non-Robust PD.  The difference between ROPD and Reward-Only is small in this matched comparison, so Figure~\ref{fig:exp-known} is interpreted primarily as an absolute finite-horizon performance comparison.

The same control helps interpret Figure~\ref{fig:exp-channels}.  With clean feedback and the same supplied $\Gamma=240$, Relative Regret is $0.353\%$ for ROPD and $0.434\%$ for Reward-Only Robust, compared with $0.546\%$ in the ordinary $\Gamma=0$ clean setting.  Table~\ref{tab:exp-matched-channel} reports the resulting channel-specific increments.

\begin{table}[t]
\centering
\caption{Increase in Relative Regret relative to matched clean controls (percentage points).  ROPD and Reward-Only Robust use clean controls with the same supplied $\Gamma=240$; Non-Robust PD has no corruption-bound input.}
\label{tab:exp-matched-channel}
\small
\setlength{\tabcolsep}{6pt}
\begin{tabular}{@{}lccc@{}}
\toprule
Channel & ROPD & Reward-Only Robust & Non-Robust PD \\
\midrule
Reward      & 0.0631 & 0.0673 & 0.0709 \\
Consumption & 0.5543 & 0.5551 & 0.5403 \\
Joint       & 0.3019 & 0.3034 & 0.2939 \\
\bottomrule
\end{tabular}
\end{table}

The matched-control comparison gives a more refined view of the channel effects. In the consumption and joint conditions, the three incremental regrets are numerically close; Non-Robust PD has slightly smaller increments than ROPD by $0.0140$ and $0.0080$ percentage points, respectively, in the paired comparisons.  This does not conflict with the absolute comparison in Figure~\ref{fig:exp-channels}.  The robust methods also have lower matched clean regret when supplied with $\Gamma=240$, and under the corrupted environments themselves ROPD attains the lowest Relative Regret in all three channels. Thus, Figure~\ref{fig:exp-channels} summarizes finite-horizon performance under corruption, while the matched controls isolate the additional effect of corrupted feedback relative to each method's corresponding clean baseline.

\xhdr{Horizon and design checks.}
A separate frozen horizon experiment uses $T\in\{40{,}000,80{,}000,160{,}000\}$ with corruption of order $T^{2/3}$. The corruption-induced increase for Shared-Grid decreases from $1.612$ to $0.920$ percentage points over these horizons, while the corresponding increase for Non-Robust PD decreases from $3.530$ to $1.851$. Clean and corrupted Relative Regret also decrease with $T$. These results provide a finite-horizon consistency check for the sublinear behavior in Corollaries~\ref{cor:known-rates-main} and \ref{cor:unknown-rates-main}, rather than an empirical estimate of their asymptotic exponents.

The experiment also has nondegenerate action-specific designs: all three specialist actions have positive clean LP mass, and their population second moments on the true reward supports are strictly positive. For Shared-Grid, candidate recommendations become meaningfully different under corruption; at $C_Z=4000$, grid candidates disagree on about $61\%$ of active post-initialization rounds.  These diagnostics confirm that the finite-sample experiment exercises the learning and adaptation mechanisms studied in the theory.

\xhdr{Summary.}
The experiments illustrate four distinct parts of the analysis.  With a supplied corruption bound, ROPD has lower absolute Relative Regret than the compared baselines throughout the tested corruption sweep.  Without such a bound, Shared-Grid incurs substantially less corruption-induced degradation than Non-Robust PD across the tested unknown-corruption levels, while retaining its finite-horizon master cost.  Across reward, consumption, and joint corruption, ROPD has the lowest absolute Relative Regret in the tested corrupted environments; the matched-clean incremental comparisons are more nuanced and are interpreted separately above.  Finally, the accounting experiment directly illustrates exact observed feasibility and the corruption-controlled clean resource bound.


\section{Conclusion and Limitations}
\label{sec:conclusion}
We develop corruption-robust contextual allocation methods that account for both contaminated predictions and errors in resource accounting, pricing, and stopping. \ROPD{} uses a supplied corruption bound, while Shared-Grid adapts confidence radii on one realized history. Both preserve observed budgets pathwise and control clean resource violation by consumption corruption. The estimator interface separates these allocation guarantees from the choice of sparse fitting method. Forced exploration provides one coverage route; the sharper on-policy rates require informative realized designs, with the additional recommendation-coverage condition for Shared-Grid. Obtaining less conservative calibration and weaker trajectory conditions remains a direction for further work.

\bibliographystyle{plainnat}
\bibliography{references}

\clearpage
\begin{APPENDICES}
\renewcommand*{\theHsection}{appendix.\arabic{section}}

\section{Lasso Confidence, Design Calibration, and Cumulative Widths}
\label{app:prediction-guarantees}

\subsection{Retained Histories and the Confidence Interface}
\label{app:sparse-prediction}
\label{app:estimator-interface}
The regression sample set, not the recommendation count, determines each fit. With $e_\tau$ the independent forced-exploration indicator, the three retained histories are
\begin{equation}
\label{eq:estimator-sample-sets}
\begin{aligned}
\mathcal S_a^{\mathrm{FE}}(t)&=\{\tau<t:I_\tau e_\tau=1,\ a_\tau=a\},\\
\mathcal S_a^{\mathrm{OP}}(t)&=\{\tau<t:a_\tau=a\},\\
\mathcal S_a^{\mathrm{sh,FE}}(t)&=\{\tau\le t_{\mathrm{init}}:\tau<t,\ I_\tau=1,\ a_\tau=a\}
\cup\{t_{\mathrm{init}}<\tau<t:I_\tau e_\tau=1,\ a_\tau=a\}.
\end{aligned}
\end{equation}
These sets are defined for $a\in\Aset$; let $N_a^{\mathrm{FE}}(t):=|\mathcal S_a^{\mathrm{FE}}(t)|$ and $N_a^{\mathrm{est}}(t):=|\mathcal S_a^{\mathrm{est}}(t)|$ for the selected mode. In OP it equals the realized non-null action count $N_a(t)$; in FE it generally does not. The candidate-recommendation count $N_a^{\mathrm{rec},\Gamma}(t)$ is a different object from either retained count.

\subsection{Regularization and Prediction Calibration}
\label{app:concrete-estimator}

Section~\ref{sec:concrete-sparse-estimator} defines the norm-constrained objective and states Proposition~\ref{prop:lasso-prediction-main}. Here we specify its inputs and calibration. Supply $R>0$, an integer $\max\{1,s_0\}\le s\le d$, and noise bounds $\sigma_r,\sigma_b\ge0$. Bounded clean outcomes permit $\sigma_r=\sigma_b=1/2$ by Hoeffding's lemma. The fit does not require a support constraint; $s$ is used for confidence and numerical-precision calibration. For one action and channel, $n$, $X$, $y$, and $\theta^\star$ have the local meanings given in Section~\ref{sec:concrete-sparse-estimator}; the corresponding estimate $\widehat\theta$ is $\widehat\theta_{a,t-1}$. At zero samples set it to zero. Exact fits select the minimum-Euclidean-norm minimizer, while approximate fits use the deterministic certified solver in Appendix~\ref{app:tractable-options}.

A design calibration consists of supplied $\kappa,\upsilon>0$ and an integer $n_0$. For failure levels $\delta_{\mathrm{est}},\delta_{\mathrm{cov}}\in(0,1)$ and $n\ge1$, set
\begin{equation}
\label{eq:lasso-calibration}
\ell=\log\frac{16eK(m+1)dT}{\min\{\delta_{\mathrm{est}},\delta_{\mathrm{cov}}\}},\quad
\omega_n=\frac{\upsilon\ell}{n},\quad
\rho_n=4(\sigma+R\sqrt{\kappa\upsilon})\sqrt{\frac{2\ell}{n}},\quad
n_0\ge\left\lceil\frac{256s\upsilon\ell}{\kappa}\right\rceil.
\end{equation}
Use $\sigma=\sigma_r$ or $\sigma_b$ for the respective channel, writing $\rho_n^r$ or $\rho_n^b$ accordingly. The required restricted strong convexity (RSC) event is
\begin{equation}
\label{eq:empirical-rsc}
\frac{1}{n}\|Xu\|_2^2\ge\kappa\|u\|_2^2-\omega_n\|u\|_1^2
\quad\text{for every }u\in\R^d\text{ and every retained prefix }n\ge n_0.
\end{equation}
The tolerance is essential when $n<d$. The positive regularization floor in \eqref{eq:lasso-calibration} controls corruption even if clean noise is zero. It depends on the design calibration, not on a candidate corruption bound.

Under this calibration, Proposition~\ref{prop:lasso-prediction-main} gives simultaneous prediction on every unit-ball context. Its bound holds for all $C_n\ge0$, with no corruption-dependent sample threshold, and permits an objective gap of $s\rho_n^2/(8\kappa)$. Exact optimization replaces the coefficient $7$ by $6$.

Appendix~\ref{app:robust-sparse-estimators} proves the sequential noise bound, enlarged-cone argument, and large-corruption case. Appendix~\ref{app:concrete-radii} gives the resulting reward and resource radii. All candidates share the retained data, regularization, numerical-precision rule, and fitted coefficients.

\xhdr{Computation and information.}
Appendix~\ref{app:tractable-options} gives a constrained proximal solver with a computable optimality-gap certificate for exactly \eqref{eq:lasso-estimator}. Calibration is a separate requirement: the learner uses model-supplied bounds justified in Appendix~\ref{app:concrete-design-conditions}, not an oracle for an unknown empirical RE constant. The on-policy guarantee remains conditional on supplied trajectory bounds. Convex fitting alone does not certify informative confidence for an arbitrary design. At most $(m+1)T$ changed fits are needed; the grid adds candidate scoring, not candidate-specific fits.

\subsection{Population and On-Policy Design Conditions}
\label{app:concrete-design-conditions}

Assumption~\ref{ass:sparse-context} and Condition~\ref{ass:on-policy-re} instantiate the main-text interface for the estimator in Appendix~\ref{app:concrete-estimator}; they are not imposed on another estimator that already meets that interface. Recall $h=1+Z$. For a calibrated pair $(\kappa,\upsilon)$, define the clean-width coefficient
\begin{equation}
\label{eq:lasso-clean-coefficient}
A(\kappa,\upsilon):=\frac{28\sqrt{2s\ell}}{\kappa}
\left[\sigma_r+Z\sigma_b+hR\sqrt{\kappa\upsilon}\right],\qquad H_T:=1+\log(1+T).
\end{equation}
This coefficient includes noise, sparsity, norm, design, and numerical-accuracy costs. Horizon summaries use the price stepsize specified after \eqref{eq:dual-control-term}.

Assumption~\ref{ass:sparse-context} is stated in Section~\ref{sec:estimator-contract}.

The condition imposes a lower bound in all population directions, rather than only in $2s$-sparse directions. Under context-independent forced selection, Lemma~\ref{lem:forced-re} proves \eqref{eq:empirical-rsc} simultaneously for retained prefixes with the conservative calibration
\begin{equation}
\label{eq:forced-rsc-calibration}
D_x=4096L_x^4/\kappa_x^2,\quad
\kappa_{\mathrm{FE}}=\kappa_x/2,\quad
\upsilon_{\mathrm{FE}}=16\kappa_{\mathrm{FE}}D_x,\quad
n_0=\lceil4096sD_x\ell\rceil.
\end{equation}
Write $A_{\mathrm{FE}}=A(\kappa_{\mathrm{FE}},\upsilon_{\mathrm{FE}})$. These formulas do not require $n\ge d$, but their constants can be conservative. Since $\|x\|_2\le1$, necessarily $\kappa_x\le1/d$; dimension dependence through curvature must not be suppressed. Appendix~\ref{app:primitive-coverage-example} gives a bounded Rademacher-feature instance with $D_x$ independent of $d$ and explains its compatible nonnegative feedback model.

\xhdr{On-policy coverage.}
Retain all non-null observations and write $N_a(t)=\sum_{\tau<t}\1\{a_\tau=a\}$. Let $\mathcal E_{\mathrm{RSC}}(\kappa_{\mathrm{op}},\upsilon_{\mathrm{op}},n_0)$ denote \eqref{eq:empirical-rsc} for every action-specific realized design with $N_a(t)\ge n_0$, using $\omega_n=\upsilon_{\mathrm{op}}\ell/n$. Condition~\ref{ass:on-policy-re} in Section~\ref{sec:estimator-contract} states the required probability bound for this event.

Set $A_{\mathrm{op}}=A(\kappa_{\mathrm{op}},\upsilon_{\mathrm{op}})$. Population calibration does not imply this condition for adaptive actions: selection, budget stopping, and corruption can change each action's realized design. A post-run diagnostic is not a learner-supplied lower bound, and a selected-support eigenvalue does not certify \eqref{eq:empirical-rsc}. Context-independent uniform actions with sufficient budget give a nonempty illustrative trajectory class, not a guarantee that our on-policy algorithm behaves that way (Appendix~\ref{app:primitive-coverage-example}). Candidate-recommendation coverage remains the separate Condition~\ref{ass:candidate-coverage} for Shared-Grid OP.

\xhdr{Exploration calibration.}
For the concrete forced-exploration implementation, choose a constant probability
\begin{equation}
\label{eq:lasso-exploration-choice}
\epsilon=\min\left\{\frac12,\max\left\{
\left(\frac{A_{\mathrm{FE}}^2 K}{h^2T}\right)^{1/3},
\sqrt{\frac{K(n_0+\ell)}{T}}\right\}\right\}.
\end{equation}
This balances clean statistical, burn-in, and exploration costs using supplied calibration only; it is independent of the corruption bound and is also used by Shared-Grid. The full bounds for arbitrary constant $0<\epsilon\le1/2$ appear in Sections~\ref{app:known-details} and~\ref{app:unknown-details}.

\subsection{Channel Radii and Score Confidence}
\label{app:concrete-radii}

All point estimates use \eqref{eq:lasso-estimator} and \eqref{eq:lasso-calibration}. At zero samples the estimate is zero and both channel radii are one. For a supplied, justified calibration and $n=N_a^{\mathrm{est}}(t)\ge n_0$, the concrete radii are
\begin{align}
\label{eq:concrete-reward-radius}
\beta_{a,t}^{r,\Gamma}&=\min\{1,7\rho_n^r\sqrt{s}/\kappa+4\Gamma/(\kappa n)\},\\
\label{eq:concrete-consumption-radius}
\beta_{a,i,t}^{b,\Gamma}&=\min\{1,7\rho_n^b\sqrt{s}/\kappa+4\Gamma/(Z\kappa n)\}.
\end{align}
Otherwise they are one. Null-action estimates and radii are zero. These prescriptions are defined even when a calibration event fails; confidence is asserted on that event, not by a data-dependent claim that an uncomputed RSC constant is positive.

Let $\beta^{r,0}$ and $\beta^{b,0}$ be the same formulas with $\Gamma=0$. The clean combined width and its corruption envelope obey
\begin{equation}
\label{eq:radius-decomposition}
\begin{aligned}
\beta_{a,t}^{Z,\Gamma}&\le f(n)+\frac{8\Gamma}{\kappa(n\vee1)}\1\{n\ge n_0\},\\
f(n)&:=\begin{cases}h,&n<n_0,\\ \min\{h,A(\kappa,\upsilon)/\sqrt n\},&n\ge n_0.\end{cases}
\end{aligned}
\end{equation}
Here $f$ is nonincreasing, $h=1+Z$, and $A$ is defined in \eqref{eq:lasso-clean-coefficient}. The bound follows from $\min\{1,u+v\}\le\min\{1,u\}+v$ and holds on any realized history, independently of confidence validity. Before burn-in, the entire radius is charged to the clean part. The nonnegative envelope $8\Gamma/(\kappa n)$ need not itself be clipped.

\begin{lemma}[From channel prediction to score confidence]
\label{lem:regression-to-score}
On the noise-and-design event of Proposition~\ref{prop:lasso-prediction-main}, \eqref{eq:general-pointwise-confidence} holds simultaneously for all valid $\Gamma$, and for every $\lambda\in\Lambda$,
\begin{equation}
\label{eq:appendix-score-confidence}
|\widehat F_{a,t}(\lambda)-F^0(a,x_t;\lambda)|\le\beta_{a,t}^{Z,\Gamma}.
\end{equation}
\end{lemma}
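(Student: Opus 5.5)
The plan is to first establish the channel-wise inequalities in \eqref{eq:general-pointwise-confidence}, and then combine them linearly to obtain the score bound \eqref{eq:appendix-score-confidence}. Fix the event of Proposition~\ref{prop:lasso-prediction-main}, an action $a\in\Aset$, a round $t$, and a valid $\Gamma\ge C_Z$. Let $n=N_a^{\mathrm{est}}(t)$. If $n<n_0$, or if calibration is unavailable, the radii equal one. Both the clipped prediction and the clean mean lie in $[0,1]$, so their difference is at most one and the bound is trivial. For the null action, prediction, mean, and radius are all zero.

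Now take $n\ge n_0$. Clipping to $[0,1]$ is $1$-Lipschitz and leaves the clean mean unchanged, since that mean lies in $[0,1]$ by Assumption~\ref{ass:base-model}. Hence $|\widehat r_{a,t}-r^0(a,x_t)|\le|\langle\widehat\mu_{a,t-1}-\mu_a^\star,x_t\rangle|$, and $\|x_t\|_2\le1$ lets us apply \eqref{eq:lasso-prediction-main}. The key step is to bound the retained corruption $C_n$ by the supplied $\Gamma$. For the reward channel, $C_n=C^r_{a,\mathrm{est}}(t)\le\sum_{\tau}|c^r_\tau(a_\tau)|\le C_Z(t)\le C_Z\le\Gamma$, because the retained rounds are distinct rounds at which $a$ was played and each summand is dominated by the per-round maximum in \eqref{eq:effective-corruption}. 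For resource $i$, $C_n=C^b_{a,i,\mathrm{est}}(t)\le\sum_\tau\|c^b_\tau(a_\tau)\|_\infty\le C_Z/Z\le\Gamma/Z$. This is exactly where the factor $1/Z$ in \eqref{eq:concrete-consumption-radius} comes from.

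Combining these bounds, the error is at most $\min\{1,\,7\rho_n\sqrt s/\kappa+4\Gamma/(\kappa n)\}$ in the reward channel, and the analogous expression with $\Gamma/Z$ in each resource channel. The cap at $1$ holds because the error between two points of $[0,1]$ is at most $1$; it does not need the $2R$ cap. This yields \eqref{eq:general-pointwise-confidence}, which holds simultaneously for all valid $\Gamma$, actions, channels, and rounds, because the event in Proposition~\ref{prop:lasso-prediction-main} is uniform over actions, channels, and prefixes and does not involve $\Gamma$.

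For the score, write
\[
\widehat F_{a,t}(\lambda)-F^0(a,x_t;\lambda)=(\widehat r_{a,t}-r^0)-Z\sum_i\lambda_i(\widehat b_{a,i,t}-b_i^0).
\]
The triangle inequality together with $\lambda\ge0$ and $\|\lambda\|_1\le1$ gives the bound $\beta^{r,\Gamma}_{a,t}+Z\max_i\beta^{b,\Gamma}_{a,i,t}=\beta^{Z,\Gamma}_{a,t}$.

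The main point needing care is the corruption bound on $C_n$. One must check that $\zeta$ in Proposition~\ref{prop:lasso-prediction-main} is the post-clipping deviation $c$, which holds by the decomposition $y=X\theta^\star+\eta+\zeta$ with clean realizations $r^{\mathrm{cl}}$. One must also check that retained sets contain each round at most once, so that the sum over retained rounds is dominated by the sum over all rounds.
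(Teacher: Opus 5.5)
Your proposal is correct and follows essentially the same route as the paper's proof. Clipping cannot increase the error because the clean means lie in $[0,1]$; the retained post-clipping corruption is bounded by $C_Z\le\Gamma$ for reward and $C_Z/Z\le\Gamma/Z$ for each resource; Proposition~\ref{prop:lasso-prediction-main} (with the trivial early radii) is applied per channel, and $\|\lambda\|_1\le1$ combines the channels, with simultaneity over valid $\Gamma$ coming from the $\Gamma$-free event.
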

\begin{proof}
The clean means are in $[0,1]$ because the clean outcomes are in that interval and their noise is conditionally mean-zero. Projection onto $[0,1]$ cannot increase distance to a clean mean. On a retained history,
\[
\sum_{\tau\in\mathcal S_a^{\mathrm{est}}(t)}|c_\tau^r(a)|\le C_Z\le\Gamma,
\qquad
\sum_{\tau\in\mathcal S_a^{\mathrm{est}}(t)}|c_{\tau,i}^b(a)|\le C_Z/Z\le\Gamma/Z.
\]
These inequalities concern observed-minus-clean deviations after clipping, not the raw perturbations. Apply Proposition~\ref{prop:lasso-prediction-main} to each channel, including the trivial early radii, and use $\|\lambda\|_1\le1$. The same fits and noise event work for every valid $\Gamma$, including a path-dependent grid choice. No union bound over corruption values is needed.
\end{proof}

\subsection{Sequential Noise and the Corrupted Lasso Basic Inequality}
\label{app:robust-sparse-estimators}

\begin{lemma}[Coordinate noise control for retained prefixes]
\label{lem:lasso-noise-score}
For the pre-feedback retention rules in \eqref{eq:estimator-sample-sets}, with probability at least $1-\delta_{\mathrm{est}}$, every action, response channel, and obtained prefix of size $1\le n\le T$ satisfies
\begin{equation}
\label{eq:lasso-noise-event}
\|X^\top\eta/n\|_\infty\le\sigma\sqrt{2\ell/n}\le\rho_n/2.
\end{equation}
The claim remains valid for adaptively selected and budget-stopped OP histories.
\end{lemma}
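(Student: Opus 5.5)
The plan is to fix an action $a$, a response channel, and a coordinate $j\in[d]$, and to treat the retained noise sum as a martingale indexed by round rather than by sample. Set
\[
M_t^{(a,j)}:=\sum_{\tau\le t}\1\{\tau\in\mathcal S_a\}\,x_{\tau,j}\,\eta_\tau,
\]
where $\eta_\tau$ is the channel's clean noise for action $a$. The retention indicator is determined before feedback: $I_\tau$, $a_\tau$ and $e_\tau$ use only $\mathcal F_{\tau-1}$, $x_\tau$, and the fresh seed, which is independent of the potential outcomes. Together with $|x_{\tau,j}|\le1$ and Assumption~\ref{ass:base-model}(ii), this makes each increment conditionally mean zero and $\sigma$-sub-Gaussian with variance proxy $\sigma^2\1\{\tau\in\mathcal S_a\}x_{\tau,j}^2$. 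Budget stopping only freezes the indicator at zero, so stopped OP histories need no separate treatment. The same holds for $\mathcal S^{\mathrm{sh,FE}}$, since the cyclic initialization is fixed in advance.

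For each fixed sample size $n$, I would use the optional-sampling trick. Let $\tau_n$ be the round of the $n$-th retained sample for $(a,\cdot)$; it is a stopping time, or $\infty$ if there is no such sample. Reindexing by retained count, $S_k:=\sum_{i\le k}x_{\tau_i,j}\eta_{\tau_i}$ is a martingale with respect to $\mathcal F_{\tau_k}$. Its increments are conditionally $\sigma$-sub-Gaussian, because $x_{\tau_k}$ and the retention decision at $\tau_k$ are measurable before $\eta_{\tau_k}$. The exponential supermartingale $\exp(\lambda S_k-\lambda^2\sigma^2k/2)$ then gives $\sP(|S_n|\ge \sigma\sqrt{2n\ell},\ \tau_n<\infty)\le 2e^{-\ell}$. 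The case where fewer than $n$ samples are obtained is handled by defining the event only on obtained prefixes, or by padding with independent auxiliary noise.

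Next I would take a union bound over $a\in[K]$, the $m+1$ channels, $j\in[d]$, and $n\le T$. This gives total failure at most $2K(m+1)dT e^{-\ell}$. With $\ell=\log(16eK(m+1)dT/\min\{\delta_{\mathrm{est}},\delta_{\mathrm{cov}}\})$, that is at most $\delta_{\mathrm{est}}/(8e)\le\delta_{\mathrm{est}}$. On this event, $\|X^\top\eta/n\|_\infty=\max_j|S_n|/n\le\sigma\sqrt{2\ell/n}$. The final inequality $\sigma\sqrt{2\ell/n}\le\rho_n/2$ is immediate from \eqref{eq:lasso-calibration}, since $\rho_n/2=2(\sigma+R\sqrt{\kappa\upsilon})\sqrt{2\ell/n}\ge\sigma\sqrt{2\ell/n}$.

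The main obstacle is the measurability bookkeeping that makes the sub-Gaussian step legitimate under adaptive retention, in two respects. First, the noise must be sub-Gaussian conditional on $\mathcal F_{\tau-1}$, $x_\tau$, and the retention decision. This holds because the learner's seed is independent of the potential outcomes given $(\mathcal F_{\tau-1},x_\tau)$. Second, corruption, although adaptive, does not enter $\eta$ at all, since $\eta$ is defined from the clean realizations. I would write out this conditional-independence step explicitly, and then note that the argument uses no property of how $a_\tau$ is chosen beyond pre-feedback measurability. That observation gives the OP and Shared-Grid cases for free.
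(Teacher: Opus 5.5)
Your proposal is correct and follows the paper's proof. Both use an exponential supermartingale for the retained coordinate-noise sum, justified by pre-feedback measurability of the retention indicator and the fresh seed, stopped at the $n$-th retained observation, then Markov's inequality for both signs and a union bound over $dK(m+1)T$ choices; your reindexing by retained count versus the paper's calendar-time product with $Q_t$ in the exponent is cosmetic. The one detail you omit, which the paper treats explicitly, is $\sigma=0$: there the tuning $u=b/(n\sigma^2)$ degenerates, and your non-strict failure event $\{|S_n|\ge\sigma\sqrt{2n\ell},\,\tau_n<\infty\}$ reduces to $\{\tau_n<\infty\}$, so the bound $2e^{-\ell}$ fails as written; use a strict inequality or note that $\eta_t=0$ almost surely.
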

\begin{proof}
Fix an action, channel, and coordinate $j$. Let $Q_t$ indicate that this action's observation is retained. Conditional on the past, current context, and the learner's fresh action/retention seed, $Q_t x_{t,j}$ is determined and the clean noise still has its stated conditional sub-Gaussian bound: the fresh seed is conditionally independent of the potential outcomes and corruption. The retention rule cannot inspect the current feedback. Thus, for every real $u$,
\[
\E\left[\exp\{uQ_t x_{t,j}\eta_t-u^2\sigma^2Q_t/2\}\mid\text{past, context, seed}\right]\le1.
\]
Iterated conditioning makes the product an exponential supermartingale in calendar time. Stop it at the $n$th retained observation or at $T$, whichever comes first. On the event that the $n$th observation occurs and its score exceeds $b$, its value is at least $\exp(ub-u^2\sigma^2n/2)$. Optional stopping and Markov's inequality, with $u=b/(n\sigma^2)$, give a tail at most $\exp[-b^2/(2n\sigma^2)]$ on this event. Apply both signs and take $b=\sigma\sqrt{2n\ell}$. Union over $dK(m+1)T$ choices gives failure at most $2dK(m+1)T e^{-\ell}\le\delta_{\mathrm{est}}$. When $\sigma=0$, the conditional sub-Gaussian property implies $\eta_t=0$ almost surely and the assertion is immediate. The supermartingale argument retains adaptive selection and stopping throughout.
\end{proof}

\begin{lemma}[Shared-regularization Lasso under response corruption]
\label{lem:lasso-corruption}
Consider $y=X\theta^\star+\eta+\zeta$, with row norms at most one, $\|\theta^\star\|_2\le R$, and $|J|\le s$ for $J=\supp(\theta^\star)$. Let $\widehat\theta$ be feasible for \eqref{eq:lasso-estimator} and have objective suboptimality at most $\varepsilon_{\mathrm{opt}}\ge0$. Suppose
\[
\begin{gathered}
Q(u):=\|Xu\|_2^2/n\ge\kappa\|u\|_2^2-\omega\|u\|_1^2\quad\text{for every }u,\\
\|X^\top\eta/n\|_\infty\le\rho/2,\qquad
0<\omega\le\kappa/(256s),\qquad \rho\ge4R\sqrt{\kappa\omega}.
\end{gathered}
\]
Writing $C_n=\|\zeta\|_1$, one has
\begin{equation}
\label{eq:lasso-general-gap-bound}
\|\widehat\theta-\theta^\star\|_2\le
\min\left\{2R,
\frac{6\rho\sqrt{s}}{\kappa}+\frac{4C_n}{\kappa n}
+2\sqrt{\frac{\varepsilon_{\mathrm{opt}}+4\omega\varepsilon_{\mathrm{opt}}^2/\rho^2}{\kappa}}\right\}.
\end{equation}
In particular, $\varepsilon_{\mathrm{opt}}\le s\rho^2/(8\kappa)$ gives \eqref{eq:lasso-prediction-main}. The error vector may be dense; the proof controls its $\ell_1$ norm through the penalized basic inequality.
\end{lemma}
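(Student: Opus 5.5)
The plan is to run the standard penalized basic-inequality argument, with the corruption handled through an $\ell_1$--$\ell_\infty$ bound and the $\ell_2$ constraint supplying the cap. The cap $2R$ needs no work: $\widehat\theta$ and $\theta^\star$ both lie in the $R$-ball, so $\|\widehat\theta-\theta^\star\|_2\le2R$. For the other branch, set $u=\widehat\theta-\theta^\star$. Because $\theta^\star$ is feasible, near-optimality gives
\[
\frac{\|y-X\widehat\theta\|_2^2}{2n}+\rho\|\widehat\theta\|_1\le\frac{\|y-X\theta^\star\|_2^2}{2n}+\rho\|\theta^\star\|_1+\varepsilon_{\mathrm{opt}}.
\]
Substituting $y=X\theta^\star+\eta+\zeta$ and expanding yields
\[
\tfrac12Q(u)\le\langle X^\top\eta/n,u\rangle+\langle X^\top\zeta/n,u\rangle+\rho(\|\theta^\star\|_1-\|\widehat\theta\|_1)+\varepsilon_{\mathrm{opt}}.
\]

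The noise term is at most $(\rho/2)\|u\|_1$ by the assumed score bound. For the corruption term, I will not pass through $\|u\|_1$. Since the rows have norm at most one, $|\langle x_\tau,u\rangle|\le\|u\|_2$, so the term is at most $C_n\|u\|_2/n$. Decomposability over $J$ gives $\|\theta^\star\|_1-\|\widehat\theta\|_1\le\|u_J\|_1-\|u_{J^c}\|_1$. Combining,
\[
\tfrac12Q(u)+\tfrac{\rho}{2}\|u_{J^c}\|_1\le\tfrac{3\rho}{2}\|u_J\|_1+\frac{C_n}{n}\|u\|_2+\varepsilon_{\mathrm{opt}}.
\]
Since $Q\ge0$, this is an enlarged cone: $\|u_{J^c}\|_1\le3\|u_J\|_1+2C_n\|u\|_2/(\rho n)+2\varepsilon_{\mathrm{opt}}/\rho$. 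Consequently
\[
\|u\|_1\le4\sqrt s\|u\|_2+\frac{2C_n}{\rho n}\|u\|_2+\frac{2\varepsilon_{\mathrm{opt}}}{\rho}.
\]

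The main obstacle is feeding this into RSC. The $\omega\|u\|_1^2$ tolerance now contains a $C_n^2$ term that could overwhelm the curvature $\kappa\|u\|_2^2$ when corruption is large. I will split into two cases.

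\emph{Case 1: $C_n/(\rho n)\le\sqrt s$.} Then $\|u\|_1\le6\sqrt s\|u\|_2+2\varepsilon_{\mathrm{opt}}/\rho$. Using $\omega\le\kappa/(256s)$, the square gives $\omega\|u\|_1^2\le\tfrac{\kappa}{2}\|u\|_2^2+8\omega\varepsilon_{\mathrm{opt}}^2/\rho^2$, since $72\omega s\le\kappa/2$. Hence $Q(u)\ge\tfrac{\kappa}{2}\|u\|_2^2-8\omega\varepsilon_{\mathrm{opt}}^2/\rho^2$. Inserting this into the basic inequality with $\|u_J\|_1\le\sqrt s\|u\|_2$ produces a quadratic inequality in $\|u\|_2$. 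Solving it via $x^2\le bx+c\Rightarrow x\le b+\sqrt c$ gives the three terms $6\rho\sqrt s/\kappa$, $4C_n/(\kappa n)$, and the square-root optimization term, with constants to be tuned.

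\emph{Case 2: $C_n/(\rho n)>\sqrt s$.} Here I use the norm bound directly. We have $\|u\|_2\le2R$, and $\rho\ge4R\sqrt{\kappa\omega}$ gives $2R\le\rho/(2\sqrt{\kappa\omega})$. The aim is to show that $4C_n/(\kappa n)>4\rho\sqrt s/\kappa$ already dominates $2R$; this is where the floor condition $\rho\ge4R\sqrt{\kappa\omega}$ and $\omega\le\kappa/(256s)$ must combine. If that exact comparison fails, I would instead bound $\omega\|u\|_1^2\le\omega d\,\|u\|_2^2$. Failing that, I would use $\omega\|u\|_1^2\le\omega\|u\|_1\cdot\|u\|_1$ with one factor bounded through $2R\sqrt{\cdot}$ and the floor on $\rho$, reducing the $C_n$ contribution to something linear in $\|u\|_2$. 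This case is where I expect the delicate constant-chasing.

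Finally, substituting $\varepsilon_{\mathrm{opt}}\le s\rho^2/(8\kappa)$ makes $\sqrt{(\varepsilon_{\mathrm{opt}}+4\omega\varepsilon_{\mathrm{opt}}^2/\rho^2)/\kappa}\le\rho\sqrt s/(2\kappa)$, since $4\omega\varepsilon_{\mathrm{opt}}/\rho^2\le1$ under the bound on $\omega$. The $\ell_2$ error is then at most $7\rho\sqrt s/\kappa+4C_n/(\kappa n)$. Taking the supremum over unit-norm $x$ converts this to the prediction bound \eqref{eq:lasso-prediction-main}.
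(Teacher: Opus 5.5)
Your basic inequality, the bound $\langle X^\top\zeta/n,u\rangle\le\gamma\|u\|_2$ with $\gamma=C_n/n$, the enlarged cone, and your Case~1 are all correct and match the paper. Case~1 does yield exactly $6\rho\sqrt s/\kappa+4\gamma/\kappa+2\sqrt{(\varepsilon_{\mathrm{opt}}+4\omega\varepsilon_{\mathrm{opt}}^2/\rho^2)/\kappa}$, and your final substitution giving $7\rho\sqrt s/\kappa$ is right.

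\textbf{The gap is Case~2.} Splitting at $\gamma>\rho\sqrt s$, you need $2R\le4\gamma/\kappa$, i.e.\ $\gamma\ge R\kappa/2$. But the hypotheses give only $\gamma>\rho\sqrt s\ge4R\sqrt{\kappa\omega s}$. Since $\omega s$ is bounded only from above ($\omega s\le\kappa/256$), this lower bound is at most $R\kappa/4$ and can be arbitrarily small compared with $R\kappa$.

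Concretely, take zero clean noise, $\rho=4R\sqrt{\kappa\omega}$, $\omega s=10^{-6}\kappa/256$ and $\varepsilon_{\mathrm{opt}}=0$. Then $\rho\sqrt s=R\kappa/4000$. Taking $\gamma=R\kappa/2000$ puts you in your Case~2, yet the target right-hand side is $6R/4000+4R/2000=0.0035R$, far below the cap $2R$. So this regime needs a genuine curvature argument, and your fallbacks do not supply one:
\begin{itemize}
\item The bound $\|u\|_1^2\le d\|u\|_2^2$ needs $\omega d\lesssim\kappa$. That is not assumed, and it fails in exactly the high-dimensional regime the $\ell_1$ tolerance is designed for.
\item The third fallback is not carried out. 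Carried out, it yields a term of order $\gamma^2\sqrt{\omega/\kappa}/(\kappa\rho)$, which is $\lesssim\gamma/\kappa$ only when $\gamma\lesssim\rho\sqrt{\kappa/\omega}$.
\end{itemize}

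\textbf{The missing idea} is to place the split where RSC absorption actually breaks, not at $\sqrt s$. When squaring the cone bound, keep $\gamma/\rho$ as a separate coefficient:
\[
\|u\|_1^2\le(64s+16\gamma^2/\rho^2)\|u\|_2^2+8\varepsilon_{\mathrm{opt}}^2/\rho^2.
\]
The first coefficient is absorbed by $\omega\le\kappa/(256s)$. The second satisfies $16\omega\gamma^2/\rho^2\le\kappa/4$ precisely when $\gamma\le\rho\sqrt{\kappa/(64\omega)}$. Since $\sqrt{\kappa/(64\omega)}\ge2\sqrt s$, this case contains your Case~1 and gives the same inequality $\kappa r^2/4\le(3\rho\sqrt s/2+\gamma)r+\varepsilon_{\mathrm{opt}}+4\omega\varepsilon_{\mathrm{opt}}^2/\rho^2$, with $r=\|u\|_2$.

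In the complementary case, the $\omega^{-1/2}$ in the threshold cancels the $\omega^{1/2}$ in the regularization floor:
\[
\gamma>\rho\sqrt{\kappa/(64\omega)}\ge4R\sqrt{\kappa\omega}\cdot\sqrt{\kappa/(64\omega)}=R\kappa/2 .
\]
Hence $\|u\|_2\le2R<4\gamma/\kappa$, and the bound holds trivially. This is how the paper's proof is organized; with this one change your argument is complete.
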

\begin{proof}
Put $\Delta=\widehat\theta-\theta^\star$, $r=\|\Delta\|_2\le2R$, and $\gamma=C_n/n$. Feasibility of $\theta^\star$ and the objective-gap inequality yield the basic inequality
\begin{equation}
\label{eq:lasso-basic-inequality}
\frac12 Q(\Delta)
\le\frac{\eta^\top X\Delta}{n}+\frac{\zeta^\top X\Delta}{n}
+\rho(\|\theta^\star\|_1-\|\theta^\star+\Delta\|_1)+\varepsilon_{\mathrm{opt}}
\le\frac{3\rho}{2}\|\Delta_J\|_1-\frac{\rho}{2}\|\Delta_{J^c}\|_1+\gamma r+\varepsilon_{\mathrm{opt}}.
\end{equation}
The corruption bound uses $|\langle x_j,\Delta\rangle|\le r$, and the penalty difference uses sparsity of $\theta^\star$. Optimality compares the penalized objectives, so the penalty difference remains in the basic inequality. Since $Q\ge0$,
\begin{equation}
\label{eq:lasso-enlarged-cone}
\|\Delta_{J^c}\|_1\le3\|\Delta_J\|_1+\frac{2\gamma}{\rho}r+\frac{2\varepsilon_{\mathrm{opt}}}{\rho},\qquad
\|\Delta\|_1\le(4\sqrt{s}+2\gamma/\rho)r+2\varepsilon_{\mathrm{opt}}/\rho.
\end{equation}
Squaring with $(a+b)^2\le2a^2+2b^2$ gives
\[
\|\Delta\|_1^2\le(64s+16\gamma^2/\rho^2)r^2+8\varepsilon_{\mathrm{opt}}^2/\rho^2.
\]
First suppose $\gamma\le\rho\sqrt{\kappa/(64\omega)}$. The two tolerance coefficients are each at most $\kappa/4$, so
\[
Q(\Delta)\ge\kappa r^2/2-8\omega\varepsilon_{\mathrm{opt}}^2/\rho^2.
\]
Combining with \eqref{eq:lasso-basic-inequality} and dropping its negative term gives
\[
\kappa r^2/4\le(3\rho\sqrt{s}/2+\gamma)r+
\varepsilon_{\mathrm{opt}}+4\omega\varepsilon_{\mathrm{opt}}^2/\rho^2.
\]
For $ar^2\le br+c$ with $a>0,b,c\ge0$, $r\le b/a+\sqrt{c/a}$, proving the second bound in \eqref{eq:lasso-general-gap-bound} in this case.
If $\gamma>\rho\sqrt{\kappa/(64\omega)}$, then the regularization floor implies $\gamma>R\kappa/2$. Consequently $r\le2R<4\gamma/\kappa$, which proves the same bound without absorbing the enlarged cone. This case distinction uses the unknown true corruption only in analysis; neither the algorithm nor burn-in tests it.
Finally, when $\varepsilon_{\mathrm{opt}}\le s\rho^2/(8\kappa)$, one has $4\omega\varepsilon_{\mathrm{opt}}/\rho^2\le1$. The square-root term is at most $2\sqrt{2\varepsilon_{\mathrm{opt}}/\kappa}\le\rho\sqrt{s}/\kappa$. Cauchy--Schwarz gives simultaneous prediction on every new unit-ball context. With exact optimization that term is zero.
\end{proof}

\begin{proof}[Proof of Proposition~\ref{prop:lasso-prediction-main}]
Use Lemma~\ref{lem:lasso-noise-score}. At every calibrated prefix, \eqref{eq:lasso-calibration} implies $\omega_n\le\kappa/(256s)$ and $\rho_n\ge4R\sqrt{\kappa\omega_n}$, even when $\sigma=0$. Apply Lemma~\ref{lem:lasso-corruption} on the actual retained design. The lemma holds for arbitrary retained corruption vectors and therefore simultaneously for all admissible corruption sums on that history.
\end{proof}

\subsection{A Primitive RSC Calibration for Forced Samples}
\label{app:sparse-design}

\begin{lemma}[Bounded sub-Gaussian design calibration]
\label{lem:forced-re}
Under Assumptions~\ref{ass:base-model} and~\ref{ass:sparse-context}, the FE retained histories, including a safe context-independent cyclic initialization, satisfy \eqref{eq:empirical-rsc} with \eqref{eq:forced-rsc-calibration}, simultaneously for all obtained prefixes $n\ge n_0$, with failure probability at most $\delta_{\mathrm{cov}}/2$.
\end{lemma}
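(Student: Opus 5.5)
The plan is to prove the RSC inequality \eqref{eq:empirical-rsc} first for a fixed action and a fixed prefix size $n$, and then take a union bound over actions and $n\le T$.

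\textbf{Step 1: retained contexts are i.i.d.\ from $\mathcal D$.} In FE mode, retention of a sample for action $a$ depends only on:
\begin{itemize}
\item the safety indicator $I_\tau$, which is fixed before $x_\tau$ is observed;
\item the forced-exploration coin $e_\tau$ and the uniform draw, which are fresh randomness independent of $x_\tau$;
\item the cyclic initialization schedule, which is deterministic.
\end{itemize}
By Assumption~\ref{ass:base-model}(i), $x_\tau\sim\mathcal D$ conditionally on $\mathcal F_{\tau-1}$. So the $k$-th retained context, conditional on everything before it, is distributed as $\mathcal D$. This holds even though the retention times are random stopping times.

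A standard optional-skipping argument then shows that the first $n$ retained contexts (when they exist) have the joint law of $n$ i.i.d.\ draws from $\mathcal D$. Concretely, one couples them to an i.i.d.\ sequence $(\tilde x_k)_{k\ge1}$ indexed by retention order. Any event about the first $n$ retained contexts is then contained in the corresponding event for the i.i.d.\ sequence.

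\textbf{Step 2: RSC for an i.i.d.\ sub-Gaussian design.} We use the Raskutti--Wainwright--Yu / Loh--Wainwright route.
\begin{enumerate}
\item Let $\hat\Sigma=\frac1n\sum_k\tilde x_k\tilde x_k^\top$ and $\Sigma=\E[xx^\top]\succeq\kappa_xI_d$.
\item Show a uniform deviation bound over sparse directions:
\[
|u^\top(\hat\Sigma-\Sigma)u|\le\frac{\kappa_x}{54}\|u\|_2^2 \quad\text{for all } u \text{ with } \|u\|_0\le k^\star,
\]
where $k^\star\asymp n\kappa_x^2/(L_x^4\ell)$. The directional sub-Gaussian condition in \eqref{eq:population-rsc-calibration} makes each $\langle u,x\rangle^2-\E\langle u,x\rangle^2$ sub-exponential with scale $\lesssim L_x^2\|u\|_2^2$. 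Bernstein's inequality, a $1/4$-net on each $k^\star$-sparse sphere, and a union over the $\binom{d}{k^\star}\le d^{k^\star}$ supports give failure $\le\exp(-c\,n\kappa_x^2/L_x^4+k^\star\log d)$.
\item Extend from sparse to arbitrary $u$ using the convex-hull lemma: every $u$ with $\|u\|_1\le\sqrt{k^\star}\|u\|_2$ lies in $3\,\mathrm{conv}$ of $k^\star$-sparse unit-norm vectors. This yields
\[
|u^\top(\hat\Sigma-\Sigma)u|\le\frac{\kappa_x}{2}\Big(\|u\|_2^2+\frac{1}{k^\star}\|u\|_1^2\Big).
\]
\item Conclude
\[
\frac1n\|Xu\|_2^2\ge\frac{\kappa_x}{2}\|u\|_2^2-\frac{\kappa_x}{2k^\star}\|u\|_1^2.
\]
With $k^\star\asymp n/(D_x\ell)$, where $D_x=4096L_x^4/\kappa_x^2$, this gives curvature $\kappa_{\mathrm{FE}}=\kappa_x/2$ and tolerance $\omega_n\le 16\kappa_{\mathrm{FE}}D_x\ell/n=\upsilon_{\mathrm{FE}}\ell/n$. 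That matches \eqref{eq:forced-rsc-calibration}.
\item The requirement $n\ge n_0=\lceil4096sD_x\ell\rceil$ guarantees $k^\star\ge1$ with room to spare. It also guarantees the Bernstein regime (deviation small relative to scale) where the exponent is $\gtrsim k^\star\ell$. The failure probability per $(a,n)$ is then $\le e^{-\ell}\cdot(\text{poly factor})$.
\end{enumerate}

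\textbf{Step 3: union bound.} Take a union over $K$ actions and $n\le T$. By the definition of $\ell$ in \eqref{eq:lasso-calibration}, each term is at most $\delta_{\mathrm{cov}}/(2KT)$, so the total failure is at most $\delta_{\mathrm{cov}}/2$. The response channels share the same design, so no union over $m+1$ channels is needed.

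\textbf{Main obstacle.} The hardest part is the constant bookkeeping in Step 2. The net, the Bernstein exponent, and the $\log d$ from the support union must all fit inside $\ell$ with the stated constants 4096 and 16. If that is tight, I would take $k^\star=\lfloor n/(c D_x\ell)\rfloor$ for a large $c$ and absorb the slack into the conservative $n_0$.

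The Step 1 coupling also needs care. Stopping, i.e.\ $I_\tau=0$, can leave prefixes unrealized. The statement only concerns \emph{obtained} prefixes, so the event for the i.i.d.\ sequence dominates it.
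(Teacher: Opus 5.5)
Your overall architecture (fix an action and a prefix, concentrate over sparse directions, extend to all $u$, union over the $KT$ obtained prefixes) matches the paper's. Step~1 is a legitimate variant. The paper never claims the retained contexts are i.i.d.; instead it runs the Bernstein/Chernoff calculation through an exponential supermartingale in calendar time, stopped at the $n$th retained observation. It uses only that safety, cyclic initialization and forced selection are fixed independently of the current context. Your optional-skipping coupling, with padding for unobtained prefixes, reaches the same conclusion and lets you quote i.i.d.\ results directly.

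The gap is in Step~2, and it is more than bookkeeping: the convex-hull (Loh--Wainwright) extension cannot deliver the calibration \eqref{eq:forced-rsc-calibration} that the lemma asserts.

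\textbf{Why the constants fail.} To keep curvature $\kappa_x/2$ after that lemma's factor $27$, you need $|v^\top(\widehat\Sigma_n-\Sigma)v|\le\kappa_x/54$ over $2k^\star$-sparse unit vectors (note $2k^\star$, not $k^\star$). The paper only needs $\kappa_x/4$. The Bernstein exponent scales with the square of the deviation level, so you lose a factor $(54/4)^2\approx180$. With a bound of the form $2\exp\{-nz^2/(32L_x^4+4L_x^2z)\}$, your exponent is only about $0.011\,n/D_x$, where $D_x=4096L_x^4/\kappa_x^2$.

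Meanwhile the stated tolerance $\omega_n=\upsilon_{\mathrm{FE}}\ell/n=8\kappa_xD_x\ell/n$ forces $\kappa_x/(2k^\star)\le8\kappa_xD_x\ell/n$, i.e.\ $k^\star\ge n/(16D_x\ell)$. The union over $2k^\star$-sparse supports and nets then costs $2k^\star\log(9ed/2k^\star)$ in the exponent. When $\log(9ed/2k^\star)$ is comparable to $\ell$, this is about $n/(8D_x)$, more than ten times what is available, for every $n$.

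\textbf{Why the fallback does not help.} With $k^\star=\lfloor n/(cD_x\ell)\rfloor$, the tolerance becomes $c\,\kappa_xD_x\ell/(2n)$, which violates $\upsilon_{\mathrm{FE}}\ell/n$ once $c>16$. Enlarging $n_0$ only changes where RSC must start, not the per-prefix tolerance. So this route proves RSC only with $D_x$ inflated by roughly two orders of magnitude, which is a different calibration from the one stated.

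\textbf{The missing idea.} Use the paper's direct two-block (shelling) extension instead. Order $|u_j|$ decreasingly and cut $u$ into blocks $J_1,J_2,\ldots$ of $k$ coordinates, so that $\sum_j\|u_{J_j}\|_2\le\|u\|_2+\|u\|_1/\sqrt k$. Then apply bilinearity with the operator-norm bound $\kappa_x/4$ for $\widehat\Sigma_n-\Sigma$ on every union of two blocks:
\[
|u^\top(\widehat\Sigma_n-\Sigma)u|\le\frac{\kappa_x}{4}\Bigl(\sum_j\|u_{J_j}\|_2\Bigr)^2\le\frac{\kappa_x}{2}\|u\|_2^2+\frac{\kappa_x}{2k}\|u\|_1^2.
\]
This needs only net-level deviation $\kappa_x/8$, whose Bernstein exponent is at least $n/D_x$. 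Take $k=\lfloor n/(8D_x\ell)\rfloor\ge n/(16D_x\ell)$ and $q=\min\{2k,d\}$. The support union then costs at most $n/(4D_x)$, leaving failure $2e^{-3n/(4D_x)}\le2e^{-6\ell}$ per prefix (since $n\ge n_0\ge8D_x\ell$). The resulting tolerance is exactly $16\kappa_{\mathrm{FE}}D_x\ell/n=\upsilon_{\mathrm{FE}}\ell/n$.
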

\begin{proof}
Write $\Sigma=\E[xx^\top]$. For a unit vector $v$, put $W=\langle v,x\rangle^2$. The exponential-moment assumption gives $\E W^p\le2p!L_x^{2p}$ and
$\E|W-\E W|^p\le2^{p+1}p!L_x^{2p}$ for $p\ge2$. Expanding the exponential power series yields Bernstein's bound
\begin{equation}
\label{eq:design-bernstein}
\sP\left(\left|\frac1n\sum_{j=1}^n W_j-\E W\right|>z,\ n\text{ obtained}\right)
\le2\exp\left(-\frac{nz^2}{32L_x^4+4L_x^2 z}\right).
\end{equation}
For independent samples this is the ordinary Chernoff calculation. For the actual retained samples the identical calculation uses the exponential supermartingale with predictable selection and stopping at the $n$th retained observation, as in Lemma~\ref{lem:lasso-noise-score}. FE selection and cyclic initialization are independent of the current context; safety is determined before that context. Hence the conditional moment bound is unchanged by selection. No conditioning on the final count or on survival is needed.

Jensen's inequality implies $\kappa_x\le L_x^2\log2\le L_x^2$. With $z=\kappa_x/8$, the exponent in \eqref{eq:design-bernstein} is at least $n/D_x$ for $D_x=4096L_x^4/\kappa_x^2$. For each support of size $q$, a $1/4$-net of its unit sphere has size at most $9^q$. Union over supports and the net implies
\[
|v^\top(\widehat\Sigma_n-\Sigma)v|\le(\kappa_x/4)\|v\|_2^2
\quad\text{for all }q\text{-sparse }v
\]
except with probability at most $2\exp[-n/D_x+q\log(9ed/q)]$. To see the factor two from the net explicitly, for the symmetric restriction $M$ to any support, its operator norm is at most the maximum net quadratic form plus $\frac12\|M\|_{\mathrm{op}}$.

For each $n\ge8D_x\ell$, let $k=\lfloor n/(8D_x\ell)\rfloor$ and $q=\min\{2k,d\}$. Since $\log(9ed/q)\le\ell$ and $q\le n/(4D_x\ell)$, the preceding failure bound is at most $2e^{-3n/(4D_x)}\le2e^{-6\ell}$. If $q=d$, the net result gives $\widehat\Sigma_n\succeq3\kappa_x I_d/4$ and the desired RSC follows. Otherwise partition any $u$ into disjoint blocks of $k$ coordinates, in decreasing absolute order. On the union of any two blocks the symmetric operator norm of $\widehat\Sigma_n-\Sigma$ is at most $\kappa_x/4$. Also
\[
\sum_j\|u_{J_j}\|_2\le\|u\|_2+\|u\|_1/\sqrt{k}.
\]
Indeed, the norm of each block after the first is at most the preceding block's $\ell_1$ norm divided by $\sqrt{k}$. Bilinearity and the two-block restriction therefore give
\[
|u^\top(\widehat\Sigma_n-\Sigma)u|
\le\frac{\kappa_x}{4}\left(\sum_j\|u_{J_j}\|_2\right)^2
\le\frac{\kappa_x}{2}\|u\|_2^2+\frac{\kappa_x}{2k}\|u\|_1^2.
\]
It follows that $u^\top\widehat\Sigma_n u\ge\kappa_{\mathrm{FE}}\|u\|_2^2-\kappa_{\mathrm{FE}}\|u\|_1^2/k$. The bound $k\ge n/(16D_x\ell)$ proves the asserted tolerance. Union over at most $KT$ obtained prefixes costs at most $2KT e^{-6\ell}\le\delta_{\mathrm{cov}}/2$. The threshold $n_0=\lceil4096sD_x\ell\rceil$ is above $8D_x\ell$ and satisfies the absorption requirement in \eqref{eq:lasso-calibration}.
\end{proof}

The full population lower bound and all-direction sub-Gaussian envelope are used in the block-extension step above; positivity only on $2s$-sparse directions would not justify that step. RSC with a tolerance is a standard framework for high-dimensional regularized estimation \citep{negahban2012unified}; the selected-sample concentration and corruption calculation above supply the specific steps used here.

\subsection{A Nonempty Bounded-Context Instance}
\label{app:primitive-coverage-example}
Let $x=(1,\epsilon_2,\ldots,\epsilon_d)/\sqrt d$, where the $\epsilon_j$ are independent Rademacher signs. Then $\|x\|_2=1$ and $\E xx^\top=I_d/d$. One may take $\kappa_x=1/d$ and $L_x=4/\sqrt d$. To verify the exponential envelope, write $\sqrt d\langle u,x\rangle=u_1+V$, where $V=\sum_{j\ge2}u_j\epsilon_j$ is sub-Gaussian with scale at most $\|u\|_2$. For $\alpha>0$ and an independent standard normal $g$, the identity $\E_{V,g} e^{\sqrt{2\alpha}gV}=\E_V e^{\alpha V^2}$ implies $\E e^{V^2/(8\|u\|_2^2)}\le(1-1/4)^{-1/2}$. Since $(u_1+V)^2\le2u_1^2+2V^2$, the required expectation is at most $e^{1/8}(1-1/4)^{-1/2}<2$. Thus $D_x$ is independent of $d$ and the sufficient sample threshold is of order $s\ell$, albeit with conservative numerical constants. It can be smaller than $d$ when $d/(s\ell)$ is sufficiently large; no claim of a small practical burn-in follows from these constants.

For compatible sparse means, take $s_0=s$ and choose $\theta^\star=\sqrt d(c e_1+\sum_{j=2}^{s}b_j e_j)$ with $0<c-\sum|b_j|\le c+\sum|b_j|<1$ and $d(c^2+\sum b_j^2)\le R^2$. Such choices exist for every $R>0$ by scaling $c,b_j$ down together. Add independent bounded mean-zero noise of amplitude less than the distance of the means to the endpoints of $[0,1]$, separately for each channel. This gives sparse parameters with bounded, potentially context-dependent reward and consumption means. In this normalized class curvature is $1/d$; the coefficient $A$ must retain that dependence.

For illustration, context-independent uniform non-null sampling and budgets $B_i\ge T+1$ prevent safety stopping and give the same design event on all action histories. A time-uniform Chernoff bound gives $N_a(t)\ge(t-1)/(2K)-c\log(KT/\delta)$. Moreover, $N_a^{\mathrm{rec},\Gamma}(t)\le t-1\le2K N_a(t)+cK\log(KT/\delta)$ for any recommendations. Thus this sampling regime provides a nonempty illustrative class with informative action-specific designs and recommendation coverage. It is not a guarantee that the proposed OP algorithms generate the same trajectories under corruption and budget stopping.

\subsection{A Reproducible Solver and Certified Numerical Error}
\label{app:tractable-options}

For $n\ge1$, the following solver minimizes the constrained objective \eqref{eq:lasso-estimator}. Inputs are the retained matrix $X$, clipped response vector $y\in[0,1]^n$, $R$, and the channel-specific $\rho_n$. There is no unpenalized intercept and no automatic centering, feature standardization, or response rescaling. A constant feature, when used, is part of $x$ and subject to both the penalty and norm bound. Any deliberate preprocessing must first be reflected in $R$, noise bounds, and design calibration.

Let $g(\theta)=\|y-X\theta\|_2^2/(2n)$ and $F(\theta)=g(\theta)+\rho_n\|\theta\|_1$. Its gradient is 1-Lipschitz because $\|X^\top X/n\|_{\mathrm{op}}\le1$. Start at $\theta^{(0)}=0$ and use step size one. For iteration $j$ set
\begin{equation}
\label{eq:lasso-prox-step}
 v=\theta^{(j)}-\nabla g(\theta^{(j)}),\qquad
 z=\operatorname{sign}(v)(|v|-\rho_n)_+,\qquad
 \theta^{(j+1)}=z\min\{1,R/\|z\|_2\},
\end{equation}
with the last expression interpreted as zero when $z=0$. The soft threshold and scaling together are the proximal map of $\rho_n\|\cdot\|_1+\iota_{\{\|\cdot\|_2\le R\}}$, where the last term is the convex indicator (zero on the ball, $+\infty$ outside). To check this, set $a=(\|z\|_2/R-1)_+$, and choose $w_j=\operatorname{sign}(z_j)$ if $z_j\ne0$, and $w_j=v_j/\rho_n$ otherwise. Then $w\in\partial\|\theta^{(j+1)}\|_1$, $v-\theta^{(j+1)}=\rho_n w+a\theta^{(j+1)}$, and $a(\|\theta^{(j+1)}\|_2-R)=0$, which are sufficient convex optimality conditions for the proximal problem.

At the new iterate form $v_F=\nabla g(\theta^{(j+1)})+\rho_n w$ and the computable certificate
\begin{equation}
\label{eq:lasso-gap-certificate}
G_j=\langle v_F,\theta^{(j+1)}\rangle+R\|v_F\|_2\ge F(\theta^{(j+1)})-\min_{\|\theta\|_2\le R}F(\theta)\ge0.
\end{equation}
The first inequality follows by minimizing the supporting affine minorant of $F$ over the ball. Stop at the first certified $G_j\le\varepsilon_{\mathrm{opt},n}:=s\rho_n^2/(8\kappa)$. This is a deterministic selection convention for approximate fits; it need not return the minimum-norm exact minimizer. The latter convention is well defined because the minimizer set is nonempty, compact, and convex and the squared Euclidean norm is strictly convex. Measurability of the exact convention follows, for example, by taking limits of the unique minimizers of $F(\theta)+\epsilon\|\theta\|_2^2$ on the ball as $\epsilon\downarrow0$. The finite algorithm uses measurable arithmetic and a first-passage stopping rule.

\begin{proposition}[Solver termination and precision accounting]
\label{prop:lasso-solver}
In exact arithmetic the stopping rule above terminates. A sufficient upper bound on the iterations is $1+\lceil16R^2/\varepsilon_{\mathrm{opt},n}^2\rceil$. Each iteration costs $O(nd)$ arithmetic operations. The returned fit satisfies Proposition~\ref{prop:lasso-prediction-main}, including its numerical-error contribution.
\end{proposition}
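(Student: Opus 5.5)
The plan is to treat \eqref{eq:lasso-prox-step} as projected proximal gradient on the convex composite $F=g+\rho_n\|\cdot\|_1+\iota_{\{\|\cdot\|_2\le R\}}$. The smooth part $g$ has $1$-Lipschitz gradient, and the step size is one. The argument then has three steps: convergence of the gradient mapping, conversion of a small gradient mapping into a small certificate $G_j$, and finally the per-iteration cost and the appeal to Lemma~\ref{lem:lasso-corruption}.

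\textbf{Step 1: the residual sequence is square-summable.} Since the map $\theta^{(j)}\mapsto\theta^{(j+1)}$ is the exact proximal map (verified just above the proposition), the standard sufficient-decrease lemma for proximal gradient with step $1/L$, $L=1$, applies. It gives
\[
F(\theta^{(j+1)})\le F(\theta^{(j)})-\tfrac12\|\theta^{(j+1)}-\theta^{(j)}\|_2^2 .
\]
Summing, $\sum_j\|\theta^{(j+1)}-\theta^{(j)}\|_2^2\le 2(F(\theta^{(0)})-F^\star)\le 2F(0)\le 1$, because $y\in[0,1]^n$ and so $F(0)=\|y\|_2^2/(2n)\le1/2$. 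Hence after $J$ iterations some $j\le J$ has $\|\theta^{(j+1)}-\theta^{(j)}\|_2\le 1/\sqrt{J}$.

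\textbf{Step 2: certificate control, the main obstacle.} The certificate uses the specific subgradient $w$ built from $v$, so I must bound $\|v_F\|_2$, or rather the gap $G_j$, in terms of the step length.

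From the optimality identity $v-\theta^{(j+1)}=\rho_n w+a\theta^{(j+1)}$ with $v=\theta^{(j)}-\nabla g(\theta^{(j)})$, I get
\[
v_F=\nabla g(\theta^{(j+1)})-\nabla g(\theta^{(j)})+(\theta^{(j)}-\theta^{(j+1)})-a\theta^{(j+1)} .
\]
The first two terms have norm at most $2\|\theta^{(j+1)}-\theta^{(j)}\|_2=:2\delta$ by the Lipschitz property.

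The normal-cone term $-a\theta^{(j+1)}$ is nonzero only when $\|\theta^{(j+1)}\|_2=R$. It contributes $-aR^2$ to $\langle v_F,\theta^{(j+1)}\rangle$ and $aR$ to $R\|v_F\|_2$, so in $G_j=\langle v_F,\theta^{(j+1)}\rangle+R\|v_F\|_2$ the two contributions cancel up to the triangle inequality. Concretely, writing $v_F=e-a\theta^{(j+1)}$ with $\|e\|_2\le2\delta$,
\[
G_j\le\langle e,\theta^{(j+1)}\rangle-aR^2+R(\|e\|_2+aR)\le 2R\|e\|_2\le4R\delta .
\]
Thus $G_j\le\varepsilon_{\mathrm{opt},n}$ as soon as $\delta\le\varepsilon_{\mathrm{opt},n}/(4R)$.

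By Step 1 this happens within $\lceil16R^2/\varepsilon_{\mathrm{opt},n}^2\rceil$ iterations. Adding one for the initial/indexing step gives the stated bound $1+\lceil16R^2/\varepsilon_{\mathrm{opt},n}^2\rceil$. The lower inequality $G_j\ge F(\theta^{(j+1)})-\min F$ was already noted via the supporting affine minorant. This confirms that the first-passage rule is valid and terminates.

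\textbf{Step 3: cost and precision accounting.} Each iteration computes $X\theta$ and $X^\top(\cdot)$, costing $O(nd)$. Soft thresholding, scaling, and the certificate cost $O(d)$.

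The returned iterate is feasible because it is projected onto the ball, and its objective gap is at most $s\rho_n^2/(8\kappa)$. Lemma~\ref{lem:lasso-corruption} applies with $\varepsilon_{\mathrm{opt}}$ equal to this gap and yields the coefficient $7$ of \eqref{eq:lasso-prediction-main} on the noise-and-design event. That is exactly Proposition~\ref{prop:lasso-prediction-main}'s numerical-error allowance.
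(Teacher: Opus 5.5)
Your proposal is correct and follows essentially the same route as the paper. Sufficient decrease with $F(0)\le 1/2$ makes the squared step lengths sum to at most one, the proximal optimality identity gives $\|v_F+a\theta^{(j+1)}\|_2\le 2\|\theta^{(j+1)}-\theta^{(j)}\|_2$, complementarity then yields $G_j\le 4R\|\theta^{(j+1)}-\theta^{(j)}\|_2$, and a counting argument plus Lemma~\ref{lem:lasso-corruption} finishes; your explicit cancellation of the normal-cone term $-a\theta^{(j+1)}$ just spells out the complementarity step that the paper compresses into one line.
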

\begin{proof}
Proximal descent and the 1-Lipschitz gradient give $F(\theta^{(j)})-F(\theta^{(j+1)})\ge\frac12\|\theta^{(j+1)}-\theta^{(j)}\|_2^2$. Since $F(0)\le1/2$ and $F\ge0$, the sum of these squared steps is at most one. The proximal optimality equation implies
\[
\|v_F+a\theta^{(j+1)}\|_2
\le\|\theta^{(j)}-\theta^{(j+1)}\|_2+
\|\nabla g(\theta^{(j+1)})-\nabla g(\theta^{(j)})\|_2
\le2\|\theta^{(j+1)}-\theta^{(j)}\|_2.
\]
Using complementarity and $\|\theta^{(j+1)}\|_2\le R$ in \eqref{eq:lasso-gap-certificate} gives $G_j\le4R\|\theta^{(j+1)}-\theta^{(j)}\|_2$. If all of the first $J$ certificates exceeded $\varepsilon_{\mathrm{opt},n}$, the squared-step sum would exceed $J\varepsilon_{\mathrm{opt},n}^2/(16R^2)$. This is impossible for the stated $J$. The certificate implies the objective-gap condition in Lemma~\ref{lem:lasso-corruption}.
\end{proof}

The iteration bound is conservative; it establishes certified termination rather than practical speed. For floating-point implementation, verify feasibility and evaluate an upper bound on \eqref{eq:lasso-gap-certificate} using directed rounding or explicitly bounded arithmetic residuals. An approximately valid subgradient with a certified Euclidean discrepancy $e$ adds at most $2Re$ to the certificate. Accept an iterate only when a certified upper bound on its Euclidean norm is at most $R$. A fixed iteration limit or small successive-step difference without these error bounds is not the certificate used in our guarantee. If the tolerance is not met, use trivial radii until a certified fit is available; the concrete rates assume the stated certificate is obtained.

For a different certified objective gap $\varepsilon_{\mathrm{opt},a,t}$, the general additional prediction term is
\[
E_{a,t}=2\sqrt{\bigl(\varepsilon_{\mathrm{opt},a,t}+4\omega_n\varepsilon_{\mathrm{opt},a,t}^2/\rho_n^2\bigr)/\kappa}.
\]
Apply this separately by channel and add twice the recommendation-weighted sum of $E^r+Z\max_iE_i^b$ to the cumulative width interface. Under the prescribed tolerance these sums are already included in the coefficient $7$ and hence in $A$; there is no omitted fixed numerical error accumulated over $T$ rounds.

\xhdr{Full online cost and calibration.}
For a fixed channel, all corruption candidates use the same $X,y,\rho_n$ and solver stopping criterion. Only an action receiving a newly retained observation requires a new fit; cached fits for other actions retain their certified objective gaps for the unchanged objectives, since $\rho_n$ depends on $n$, not calendar time. There are at most $(m+1)T$ changed fits. A conservative total arithmetic cost is
\[
O\left((m+1)d\sum_{a=1}^K\sum_{n=1}^{N_a^{\mathrm{est}}(T+1)}
n\left[1+\frac{R^2}{\varepsilon_{\mathrm{opt},n}^2}\right]
+T\{K(m+1)d+K|\mathcal G|+m\}\right),
\]
using the largest channel-specific iteration bound in the first term. Computing all fitted scores costs $O(K(m+1)d)$ per round; scoring the grid costs $O(K|\mathcal G|)$, and the master and price updates add $O(|\mathcal G|+K+m)$. Storing retained raw samples costs $O(T(d+m))$. No delayed refitting or unanalysed stale estimator is used. In FE, $\kappa_x,L_x$ and the model bounds are supplied and \eqref{eq:forced-rsc-calibration} is computed directly. In OP, the supplied pair must be justified for the actual trajectory by Condition~\ref{ass:on-policy-re}; the method does not compute the smallest RE constant. Without justified bounds, trivial radii remain valid but these informative-width rates do not follow.

\subsection{Counting and Summing Statistical Widths}
\label{app:forced-exploration}
\label{app:covariate-diversity}

\begin{lemma}[Sequential scalar charging]
\label{lem:sequential-charging}
For $z_t\in[0,1]$, $Z_t=\sum_{u<t}z_u$, and $S=\sum_tz_t$, one has
\begin{equation}
\label{eq:sequential-charging}
\sum_t\frac{z_t}{Z_t\vee1}\le2[1+\log(1+S)],\qquad
\sum_t\frac{z_t}{\sqrt{Z_t\vee1}}\le4\sqrt S.
\end{equation}
If nonnegative integer counts $A_n$ have cumulative sums $\sum_{j=0}^n A_j\le b+\chi(n+1)$, then every nonincreasing nonnegative sequence $f$ satisfies
\begin{equation}
\label{eq:recommendation-summation}
\sum_{n=0}^N A_n f(n)\le b f(0)+\chi\sum_{n=0}^N f(n).
\end{equation}
\end{lemma}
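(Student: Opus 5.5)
The lemma has three independent inequalities. I would prove the two scalar charging bounds by integral comparison and the recommendation-summation bound \eqref{eq:recommendation-summation} by Abel summation.

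\emph{Logarithmic bound.} Split the rounds according to whether $Z_t<1$ or $Z_t\ge1$.
\begin{itemize}
\item While $Z_t<1$, the denominator is one. These rounds form an initial segment, because $Z_t$ is nondecreasing.
\item Let $t_1$ be the last round with $Z_{t_1}<1$. Every earlier round in the segment has $z_t$ counted in $Z_{t_1}<1$, and the round $t_1$ itself contributes $z_{t_1}\le1$. So the segment contributes less than $2$.
\item For rounds with $Z_t\ge1$, use $z_t\le1\le Z_t$, so $Z_{t+1}=Z_t+z_t\le 2Z_t$. Then
\[
\frac{z_t}{Z_t}\le\frac{2z_t}{Z_{t+1}}\le2\int_{Z_t}^{Z_{t+1}}\frac{du}{u}=2\log\frac{Z_{t+1}}{Z_t}.
\]
\item Telescoping gives at most $2\log S\le2\log(1+S)$.
\end{itemize}
The total is at most $2+2\log(1+S)$, as claimed.

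\emph{Square-root bound.} Use the same split.
\begin{itemize}
\item The segment with $Z_t<1$ again contributes less than $2$.
\item For $Z_t\ge1$, use $Z_{t+1}\le2Z_t$ to get
\[
\frac{z_t}{\sqrt{Z_t}}\le\frac{\sqrt2\,z_t}{\sqrt{Z_{t+1}}}\le\sqrt2\int_{Z_t}^{Z_{t+1}}\frac{du}{\sqrt u}=2\sqrt2\bigl(\sqrt{Z_{t+1}}-\sqrt{Z_t}\bigr).
\]
\item Telescoping gives at most $2\sqrt2\sqrt S$, so the total is at most $2+2\sqrt2\sqrt S$.
\end{itemize}
If $S\ge1$, then $2\le2\sqrt S$, and the total is at most $(2+2\sqrt2)\sqrt S\le4\sqrt S$ wait — $2+2\sqrt2\approx4.83>4$, so the split has to be finer than this.

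A cleaner route avoids the split. Use $Z_t\vee1\ge (Z_t+z_t)/2=Z_{t+1}/2$, which holds because $z_t\le1$. Then
\[
\frac{z_t}{\sqrt{Z_t\vee1}}\le\frac{\sqrt2\,z_t}{\sqrt{Z_{t+1}}}\le2\sqrt2\bigl(\sqrt{Z_{t+1}}-\sqrt{Z_t}\bigr).
\]
Telescoping gives $2\sqrt2\sqrt S\le4\sqrt S$ with no separate initial segment. The same device handles the logarithmic bound directly:
\[
\frac{z_t}{Z_t\vee1}\le\frac{2z_t}{Z_{t+1}\vee1}.
\]
To see this, note $Z_{t+1}\vee1\le Z_t+1\le 2(Z_t\vee1)$. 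Then compare with $\int du/(u\vee1)$ over $[Z_t,Z_{t+1}]$, which telescopes to $1+\log(S\vee1)$. This is where the constant $2[1+\log(1+S)]$ comes from. The main care point is the integral comparison with the monotone integrand $1/(u\vee1)$: it requires bounding by the value at the right endpoint, which the doubling inequality supplies. If $S=0$, all terms vanish.

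\emph{Recommendation summation.} Write $P_n=\sum_{j\le n}A_j$ and set $P_{-1}=0$. Summation by parts gives
\[
\sum_{n=0}^N A_n f(n)=P_Nf(N)+\sum_{n=0}^{N-1}P_n\bigl(f(n)-f(n+1)\bigr).
\]
All coefficients $f(N)$ and $f(n)-f(n+1)$ are nonnegative, so we may substitute $P_n\le b+\chi(n+1)$. Re-summing by parts in reverse, the $b$ terms telescope to $bf(0)$, and the $\chi(n+1)$ terms reassemble to $\chi\sum_{n=0}^N f(n)$. This yields \eqref{eq:recommendation-summation}.
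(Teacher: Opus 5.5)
Your proof is correct. Two of its three parts coincide with the paper's argument. The split proof of the logarithmic bound (an initial segment with $Z_t<1$ of mass below two, then $z_t/Z_t\le 2\log(Z_{t+1}/Z_t)$ telescoped from a starting mass at least one) is exactly what the paper does. Your Abel summation for \eqref{eq:recommendation-summation} is the paper's summation by parts.

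The genuine difference is the square-root bound. The paper shifts by one: it writes $\sqrt{1+Z_{t+1}}-\sqrt{1+Z_t}=z_t/(\sqrt{1+Z_{t+1}}+\sqrt{1+Z_t})$, bounds that denominator by $4\sqrt{Z_t\vee1}$, and telescopes to $4(\sqrt{1+S}-1)\le 4\sqrt S$. You instead use the single doubling inequality $Z_t\vee1\ge\tfrac12(Z_{t+1}\vee1)\ge\tfrac12 Z_{t+1}$, a consequence of $z_t\le1$. It turns every term into a right-endpoint lower Riemann sum of a nonincreasing integrand.

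Your device treats both scalar bounds uniformly, via $\int_0^S du/(u\vee1)\le 1+\log(S\vee1)$ and $\int_0^S u^{-1/2}\,du=2\sqrt S$, with no case split. It also gives the sharper constant $2\sqrt2$ in place of $4$. The paper's identity is purely algebraic and needs no remark about terms with $Z_{t+1}=0$. In your version those terms have $z_t=0$ and vanish, which you should state.

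In a final write-up, delete the abandoned split attempt for the square root. Your diagnosis there was also too pessimistic. Suppose some round has $Z_t\ge1$, and let $w=Z_{t_1+1}\in[1,2)$ be the initial-segment mass. Keeping the lower telescoping endpoint gives a total of at most $w+2\sqrt2(\sqrt S-\sqrt w)$. This is at most $2\sqrt2\sqrt S$ because $w\le 2\sqrt2\sqrt w$. If no round has $Z_t\ge1$, the sum is just $S<2$, which is also at most $2\sqrt2\sqrt S$. So the split does work once that endpoint is not discarded.
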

\begin{proof}
For the first sum, the terms with $Z_t<1$ have total mass less than two and denominator one. For the remaining terms $Z_t\ge1$ and $z_t/Z_t\le1$, so $z_t/Z_t\le2\log(1+z_t/Z_t)$. These logarithms telescope from a starting cumulative mass at least one, giving a sum at most $2\log(1+S)$; the asserted bound follows. For the square-root sum,
\[
\sqrt{1+v+z}-\sqrt{1+v}=\frac{z}{\sqrt{1+v+z}+\sqrt{1+v}},
\]
whose denominator is at most $4\sqrt{v\vee1}$. Telescoping gives at most $4(\sqrt{1+S}-1)\le4\sqrt S$. Finally apply discrete summation by parts to the cumulative sums of $A_n$; all differences $f(n)-f(n+1)$ are nonnegative. Substitution of $b+\chi(n+1)$ gives \eqref{eq:recommendation-summation}.
\end{proof}

\begin{lemma}[Forced counts and cumulative widths]
\label{lem:comp-width-forced}
\label{lem:sparse-event-forced}
Let $0<\epsilon\le1/2$ be constant. On an event of failure at most $\delta_{\mathrm{cov}}/2$, every active round with $\epsilon(t-1)\ge8K\ell$ satisfies
\begin{equation}
\label{eq:forced-count-lower-bound}
N_a^{\mathrm{FE}}(t)\ge\epsilon(t-1)/(2K)\quad(a\in\Aset).
\end{equation}
Together with Lemma~\ref{lem:forced-re}, this gives the estimator/coverage event of failure at most $\delta_{\mathrm{est}}+\delta_{\mathrm{cov}}$. For an arbitrary sequence of recommendations, the FE cumulative interface holds with a universal constant $C$ and
\begin{equation}
\label{eq:forced-width-constant}
\Rest^0\le C\left[\frac{hK(n_0+\ell)}{\epsilon}+A_{\mathrm{FE}}\sqrt{\frac{KT}{\epsilon}}\right],\qquad
\psi_{\mathrm{est}}(\Gamma)\le\frac{CK\Gamma H_T}{\kappa_{\mathrm{FE}}\epsilon}.
\end{equation}
For shared FE after initialization the same bounds hold over subsequent rounds, with the separate initialization cost charged in the allocation theorem.
\end{lemma}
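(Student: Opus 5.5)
The proof has three steps: a count lower bound, the resulting clean width sum, and the corruption width sum, using \eqref{eq:radius-decomposition} to separate the last two.

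\textbf{Count lower bound.} On each active round the forced indicator satisfies $e_t\sim\operatorname{Bernoulli}(\epsilon)$. It is drawn from fresh randomness, and given $e_t=1$ the action is uniform on $\Aset$. So for fixed $a$ the increment $I_te_t\1\{a_t=a\}$ has conditional mean $I_t\epsilon/K$ given the past.

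Stopping is handled by freezing. Once the safety test fails, $I_t=0$ forever, and the claim concerns only active rounds $t$. On those, $I_\tau=1$ for every $\tau<t$. Hence $N_a^{\mathrm{FE}}(t)$ equals a sum of $t-1$ i.i.d.\ Bernoulli$(\epsilon/K)$ draws; these can be coupled to the same seeds and extended past stopping.

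A multiplicative Chernoff bound gives $\sP(\mathrm{Bin}(t-1,\epsilon/K)<\epsilon(t-1)/(2K))\le\exp(-\epsilon(t-1)/(8K))\le e^{-\ell}$ once $\epsilon(t-1)\ge8K\ell$. A union bound over $a\in\Aset$ and $t\le T$ costs at most $KTe^{-\ell}\le\delta_{\mathrm{cov}}/4$ by the definition of $\ell$. Adding Lemma~\ref{lem:forced-re} ($\delta_{\mathrm{cov}}/2$) and Lemma~\ref{lem:lasso-noise-score} ($\delta_{\mathrm{est}}$) yields the joint event with failure at most $\delta_{\mathrm{est}}+\delta_{\mathrm{cov}}$.

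\textbf{Clean widths.} On this event I bound each $\beta^{Z,\Gamma}_{\widehat a_t,t}$ using \eqref{eq:radius-decomposition}, uniformly over actions. This is legitimate because the count bound holds for every $a$ simultaneously, so it holds for the arbitrary recommended action.

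Split the rounds at $t_1:=1+\lceil 2K\max\{n_0,8\ell\}/\epsilon\rceil$.
\begin{itemize}
\item For $t<t_1$, use $f\le h$. The total is $O(hK(n_0+\ell)/\epsilon)$.
\item For $t\ge t_1$, we have $n=N_a^{\mathrm{FE}}(t)\ge\epsilon(t-1)/(2K)\ge n_0$, so $f(n)\le A_{\mathrm{FE}}\sqrt{2K/(\epsilon(t-1))}$.
\end{itemize}
Summing $(t-1)^{-1/2}$ over $t\le T$ gives $O(A_{\mathrm{FE}}\sqrt{KT/\epsilon})$. Multiplying by the factor $2$ in \eqref{eq:cumulative-width-interface-main} gives the stated $\Rest^0$.

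\textbf{Corruption widths.} For $t\ge t_1$ the envelope term is $8\Gamma/(\kappa_{\mathrm{FE}}n)\le16K\Gamma/(\kappa_{\mathrm{FE}}\epsilon(t-1))$. Summing the harmonic series gives $O(K\Gamma H_T/(\kappa_{\mathrm{FE}}\epsilon))$.

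Rounds $t<t_1$ with $n\ge n_0$ could also carry an envelope term, and this is the delicate point. Rather than reassign those rounds, I would simply note that the radius is capped. The whole $\beta^{Z,\Gamma}$ is at most $h$ through the $\min\{1,\cdot\}$ clipping in \eqref{eq:concrete-reward-radius}--\eqref{eq:concrete-consumption-radius}. So on these rounds I bound $\beta$ directly by $h$, which is already counted in the $t<t_1$ burn-in term.

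\textbf{Main obstacle.} The delicate step is not charging corruption to the global count $N_a^{\mathrm{FE}}$ blindly. It must go through the calendar-time lower bound so that the sum is harmonic in $t$ rather than in recommendation counts. This is exactly why no recommendation-coverage condition is needed in FE.

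\textbf{Shared FE.} The same argument applies with $N_a^{\mathrm{sh,FE}}(t)\ge N_a^{\mathrm{FE}}$-type counts, since initialization samples only add to $n$ and $f$ and the envelope are nonincreasing. The sum then runs over $t>t_{\mathrm{init}}$, with the initialization rounds charged separately in the allocation theorem.
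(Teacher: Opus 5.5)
Your proposal is correct and follows essentially the same route as the paper. Both couple latent forced coins across stopping, apply a multiplicative Chernoff bound with a union over actions and rounds, charge the whole clipped radius at $h$ before a deterministic calendar threshold, and afterwards use \eqref{eq:radius-decomposition} with the calendar-time count bound to obtain the $t^{-1/2}$ and $t^{-1}$ sums. Your threshold $\max\{n_0,8\ell\}$ (versus the paper's $n_0+4\ell$) and your sharper $\delta_{\mathrm{cov}}/4$ failure level are immaterial variations; for shared FE, run the forced-count argument in elapsed post-initialization time, as the paper does.
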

\begin{proof}
Generate the independent forced coins and uniform draws for all calendar times, including unused draws after stopping. On the event that round $t$ is active, all earlier rounds were active, so the actual pre-$t$ forced count equals this latent count. A multiplicative Chernoff bound at mean $\epsilon(t-1)/K$ gives failure at most $\exp[-\epsilon(t-1)/(8K)]$; union over $a,t$ and the condition $\epsilon(t-1)\ge8K\ell$ prove the first assertion. No distributional claim is made conditional on being active.

All actions have both the count bound and $n\ge n_0$ by the deterministic calendar threshold
\begin{equation}
\label{eq:coverage-time}
t_{\mathrm{cov}}=\min\left\{T+1,1+\left\lceil\frac{2K(n_0+4\ell)}{\epsilon}\right\rceil\right\}.
\end{equation}
Before this time charge at most $2h$ per active round. Afterwards \eqref{eq:radius-decomposition} and \eqref{eq:forced-count-lower-bound} bound the recommended action's statistical width by $A_{\mathrm{FE}}\sqrt{2K/[\epsilon(t-1)]}$ and its corruption envelope by $16K\Gamma/[\kappa_{\mathrm{FE}}\epsilon(t-1)]$. The interface contains this recommendation width with coefficient two. Summing $t^{-1/2}$ and $t^{-1}$ gives the conservative bound \eqref{eq:forced-width-constant}. If the threshold exceeds $T$, the displayed burn-in term already bounds the whole horizon. For shared FE use elapsed post-initialization time in the count argument. Initialization samples can only increase that count, and Lemma~\ref{lem:forced-re} applies to the combined retained prefix; no comparison of two different fitted estimators is used.
\end{proof}

\begin{lemma}[On-policy played-action widths]
\label{lem:learner-width-adaptive}
\label{lem:sparse-event-adaptive}
For the concrete on-policy module with the stated calibration, every $\Gamma\ge0$ satisfies
\[
\sum_{t=1}^T I_t\beta_{a_t,t}^{Z,\Gamma}
\le C\left\{hKn_0+A_{\mathrm{op}}\sqrt{KT}
              +\frac{K\Gamma H_T}{\kappa_{\mathrm{op}}}\right\}.
\]
\end{lemma}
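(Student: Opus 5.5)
We will prove the bound deterministically from the radius decomposition \eqref{eq:radius-decomposition}, which holds on every realized history regardless of confidence validity. In on-policy mode every non-null played action's observation is retained, so for $a_t=a\in\Aset$ the retained count is $n=N_a^{\mathrm{est}}(t)=N_a(t)$, the number of earlier plays of $a$. Null plays contribute zero radius. We therefore regroup the sum by action:
\[
\sum_{t=1}^T I_t\beta_{a_t,t}^{Z,\Gamma}=\sum_{a\in\Aset}\sum_{t:\,I_t=1,\,a_t=a}\beta_{a,t}^{Z,\Gamma}.
\]
Along the rounds on which $a$ is played, the count $N_a(t)$ takes the successive values $0,1,2,\ldots,N_a(T+1)-1$, each exactly once. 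This holds because each such play increments the count by one, and inactive rounds neither play $a$ nor change the count. The argument is pure bookkeeping, so adaptivity, stopping and corruption do not interfere.

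For each action we then bound the $j$th play, with $j=0,1,\ldots$, by $f(j)+\frac{8\Gamma}{\kappa_{\mathrm{op}} j}\1\{j\ge n_0\}$ and split the sum into three parts. First, the burn-in plays $j<n_0$ contribute at most $hn_0$ per action, hence $hKn_0$ in total. Second, the clean part for $j\ge n_0$ is at most $\sum_{j=1}^{N_a}A_{\mathrm{op}}/\sqrt j\le 2A_{\mathrm{op}}\sqrt{N_a}$; one could instead invoke the second inequality of Lemma~\ref{lem:sequential-charging} with $z_t=\1\{a_t=a\}$. Third, the corruption envelope is at most $\frac{8\Gamma}{\kappa_{\mathrm{op}}}\sum_{j=1}^{N_a}1/j\le \frac{8\Gamma}{\kappa_{\mathrm{op}}}(1+\log(1+T))=\frac{8\Gamma H_T}{\kappa_{\mathrm{op}}}$, which matches the first inequality of Lemma~\ref{lem:sequential-charging}.

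Summing over the $K$ actions, Cauchy--Schwarz with $\sum_a N_a\le T$ gives $\sum_a\sqrt{N_a}\le\sqrt{KT}$. The corruption terms add to $8K\Gamma H_T/\kappa_{\mathrm{op}}$. Collecting the three parts yields the claim with a universal $C$, with $A=A(\kappa_{\mathrm{op}},\upsilon_{\mathrm{op}})$ supplied by the on-policy calibration in \eqref{eq:radius-decomposition}.

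The only delicate point is justifying that the denominator in the corruption envelope is the realized play count of the \emph{played} action, and that this count runs through each integer once. This is where the on-policy retention rule ($\mathcal S_a^{\mathrm{OP}}=\{\tau<t:a_\tau=a\}$) matters. I expect no probabilistic step to be needed. The interface's charge on the recommended action coincides with this sum in on-policy known mode, since $a_t=\widehat a_t$, and this is what makes the lemma feed \eqref{eq:cumulative-width-interface-main}.
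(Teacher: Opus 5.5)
Your proposal is correct and follows essentially the same route as the paper. Like the paper, you regroup by non-null action, note that under on-policy retention the pre-play counts run through $0,1,\ldots,N_a(T+1)-1$, and apply the envelope \eqref{eq:radius-decomposition}: burn-in costs $hn_0$ per action, the tail sums are bounded by $2A_{\mathrm{op}}\sqrt{N_a}$ and $8\Gamma H_T/\kappa_{\mathrm{op}}$, and you finish with $\sum_a\sqrt{N_a(T+1)}\le\sqrt{KT}$.
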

\begin{proof}
Null actions have zero radius. For a fixed non-null action, its successive plays have pre-round retained counts $0,1,\ldots,N_a(T+1)-1$. Apply the envelope \eqref{eq:radius-decomposition}. The first $n_0$ counts cost at most $hn_0$. Subsequently, $\sum_{n=1}^{N}n^{-1/2}\le2\sqrt N$ and $\sum_{n=1}^{N}n^{-1}\le H_T$. Summing over actions and using $\sum_a\sqrt{N_a(T+1)}\le\sqrt{KT}$ proves the claim. The RSC and noise events justify the confidence radii; no comparison with an LP-policy sample count is needed.
\end{proof}

\begin{proposition}[Known and shared on-policy cumulative interfaces]
\label{prop:lasso-widths-on-policy}
For known-corruption OP, the recommendation equals the played action, and the cumulative interface holds with
\begin{equation}
\label{eq:known-op-widths}
\Rest^0\le C\{hKn_0+A_{\mathrm{op}}\sqrt{KT}\},
\qquad
\psi_{\mathrm{est}}(\Gamma)\le\frac{CK\Gamma H_T}{\kappa_{\mathrm{op}}}.
\end{equation}
For shared OP, on the coverage event of Condition~\ref{ass:candidate-coverage}, uniformly over candidates the post-initialization cumulative interface holds with
\begin{equation}
\label{eq:shared-op-widths}
\begin{aligned}
\Rshared^0&\le C\{hK(n_{\mathrm{cand}}+\chi n_0)
                   +\chi A_{\mathrm{op}}\sqrt{KT}\},\\
\psi_{\mathrm{est}}(\Gamma)&\le
 \frac{CK\Gamma}{\kappa_{\mathrm{op}}}
       \{\chi H_T+n_{\mathrm{cand}}\}.
\end{aligned}
\end{equation}
The noise and retained-design events additionally ensure that these radii provide the pointwise confidence interface.
\end{proposition}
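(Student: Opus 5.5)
The known-corruption part is immediate. On-policy mode has $\epsilon_t=0$, so on every active round the played action $a_t$ equals the recommendation $\widehat a_t$. The interface sum $2\sum_t I_t\beta^{Z,\Gamma}_{\widehat a_t,t}$ is therefore exactly twice the played-action sum in Lemma~\ref{lem:learner-width-adaptive}. That lemma gives $\Rest^0\le C\{hKn_0+A_{\mathrm{op}}\sqrt{KT}\}$ and $\psi_{\mathrm{est}}(\Gamma)\le CK\Gamma H_T/\kappa_{\mathrm{op}}$, with the factor two absorbed into $C$. Pointwise confidence follows from Lemma~\ref{lem:regression-to-score} on the intersection of the noise event of Lemma~\ref{lem:lasso-noise-score} with the retained-design event of Condition~\ref{ass:on-policy-re}. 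In OP the retained count is $N_a^{\mathrm{est}}(t)=N_a(t)$, so the RSC event covers every fit with $n\ge n_0$.

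For shared OP, I fix a candidate $\Gamma$ and a non-null action $a$; the null action contributes nothing. The plan is to reorganize $\sum_{t>t_{\mathrm{init}}}I_t\1\{\widehat a_t^\Gamma=a\}\beta^{Z,\Gamma}_{a,t}$ by the retained count $n=N_a(t)$ at recommendation time. Let $A_n$ be the number of post-initialization active rounds at which $\Gamma$ recommends $a$ while $N_a(t)=n$. By \eqref{eq:radius-decomposition}, each such round costs at most $g(n):=f(n)+8\Gamma\1\{n\ge n_0\}/(\kappa(n\vee1))$. This $g$ is not monotone because of the indicator, so I will treat the two pieces separately. The first piece $f$ is nonincreasing. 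For the second, I use the nonincreasing majorant $8\Gamma/(\kappa(n\vee1))$, dropping the indicator, which only enlarges the bound.

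The needed cumulative count bound is as follows. Consider the first recommendation round $t$ at which $N_a(t)>n$. At that point $\sum_{j\le n}A_j\le N_a^{\mathrm{rec},\Gamma}(t)$. I also need $N_a(t)$ evaluated at the last round where $N_a=n$, which gives $\sum_{j\le n}A_j\le n_{\mathrm{cand}}+\chi(n+1)$ on the event of Condition~\ref{ass:candidate-coverage}. The $+1$ absorbs the current round, and I may need to enlarge $n_{\mathrm{cand}}$ by a constant here; I expect this off-by-one bookkeeping to be the main care point. Lemma~\ref{lem:sequential-charging}, \eqref{eq:recommendation-summation}, then gives the per-action bound
\[
n_{\mathrm{cand}}\,h+\chi\sum_{n=0}^{T}f(n)+n_{\mathrm{cand}}\frac{8\Gamma}{\kappa}+\chi\sum_{n=0}^{T}\frac{8\Gamma}{\kappa(n\vee1)}.
\]
For the clean part, $\sum_n f(n)\le hn_0+2A_{\mathrm{op}}\sqrt{N}$, where $N$ is the largest count actually attained. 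I will cap $N$ by $N_a(T+1)$ rather than $T$, since $A_n=0$ beyond it. Summing over actions with Cauchy--Schwarz gives $\sum_a\sqrt{N_a(T+1)}\le\sqrt{KT}$, and the result is $\Rshared^0\le C\{hK(n_{\mathrm{cand}}+\chi n_0)+\chi A_{\mathrm{op}}\sqrt{KT}\}$. For the corruption part, $\sum_n 1/(n\vee1)\le 2H_T$, which gives $\psi_{\mathrm{est}}(\Gamma)\le CK\Gamma\{\chi H_T+n_{\mathrm{cand}}\}/\kappa_{\mathrm{op}}$.

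Both bounds hold on the single coverage event simultaneously for all $\Gamma\in\mathcal G$, because the argument is pathwise once that event holds. Pointwise confidence again follows from Lemma~\ref{lem:regression-to-score} with shared fits, so no union over candidates is needed beyond the one already built into Condition~\ref{ass:candidate-coverage}.
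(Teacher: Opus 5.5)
Your proposal is correct and follows the paper's proof essentially step for step. The known case comes from applying Lemma~\ref{lem:learner-width-adaptive} twice, and the shared case bins recommendations by realized count, applies \eqref{eq:recommendation-summation} to $f$ and to the nonincreasing majorant $1/(n\vee1)$, and sums over actions through $N_a(T+1)$ with Cauchy--Schwarz. On your off-by-one concern, evaluate Condition~\ref{ass:candidate-coverage} at the round immediately after the last recommendation of $a$ with count at most $n$: that round raises $N_a$ by at most one, so it gives $\sum_{j\le n}A_j\le n_{\mathrm{cand}}+\chi(n+1)$ exactly, and no enlargement of $n_{\mathrm{cand}}$ is needed.
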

\begin{proof}
The known case follows from twice Lemma~\ref{lem:learner-width-adaptive}, with constant factors absorbed into $C$.

For the shared case fix $a,\Gamma$, and let $A_n$ count active post-initialization recommendations of $a$ made when its pre-round realized count is $n$. At the time immediately after the last such recommendation with count at most $n$, Condition~\ref{ass:candidate-coverage} gives
\[
\sum_{j=0}^{n}A_j\le n_{\mathrm{cand}}+\chi(n+1),
\]
because that recommendation round can increase the realized count by at most one. Apply \eqref{eq:recommendation-summation} to the nonincreasing clean-width envelope $f$. Its initial value is at most $h$ and
\[
\sum_{n=0}^{N}f(n)\le h(n_0+1)+2A_{\mathrm{op}}\sqrt N.
\]
Sum through the final, possibly unplayed count level for each action, then use $\sum_a\sqrt{N_a(T+1)}\le\sqrt{KT}$ and $n_0\ge1$. This yields the stated clean-width bound after absorbing universal constants.

For the corruption part, dominate $\1\{n\ge n_0\}/n$ by the nonincreasing sequence $1/(n\vee1)$, whose partial sum is at most $2H_T$. The same summation inequality gives a charge of order $n_{\mathrm{cand}}+\chi H_T$ per action. Multiplying by the corruption coefficient in \eqref{eq:radius-decomposition}, summing over actions, and absorbing universal constants proves the second line. The count event holds simultaneously for all candidates, so the conclusion is uniform on the shared history.
\end{proof}

\subsection{Failure Probabilities and Concrete Substitution}
\label{app:failure-bookkeeping}
\label{app:intermediate-clean-widths}
For known FE combine the coordinate-noise event, the forced-design event, and the forced-count event; their failure is at most $\delta_{\mathrm{est}}+\delta_{\mathrm{cov}}$. For known OP combine noise with Condition~\ref{ass:on-policy-re}. For shared OP add the candidate event. None of these events is obtained by conditioning the clean martingale increments on successful estimation. Instead, bound the score gap pathwise on the good event and by $h$ on its complement. This produces the explicit $h\delta_{\mathrm{tot}}T$ expected-regret contribution. Mean-zero clean-feedback terms are removed using the original filtration before this event split, as in Appendix~\ref{app:known-analysis}.

Equations~\eqref{eq:forced-width-constant}, \eqref{eq:known-op-widths}, and \eqref{eq:shared-op-widths} provide every input to the modular allocation theorems. They include the chosen solver's numerical error through $A$, retain the actual curvature and tolerance dependence, and have no corruption-dependent burn-in. If $\kappa$ is small or the calibrated threshold exceeds available data, the bound can be larger than $T$; the unconditional range bound $\Reg(T)\le T$ still applies. Neither convexity nor an unsuccessful calibration implies sublinear regret.

\section{Proof of the Known-Corruption Guarantee}
\label{app:known-analysis}

This appendix expands the main-text proof overview into five blocks: score confidence and comparison with the population LP; cumulative widths and overrides; dual control and insertion of the benchmark; the relation between clean and observed resource use and stopping cancellation; and final assembly with concrete substitutions. We use $B^{\mathrm{op}}=B$, predictable estimates satisfying the main-text interface, clipped observed feedback, and the clean conditional-mean score $F^0(a,x_t;\lambda_t)$. The estimation objective enters only in the concrete substitutions at the end.

Let $I_t$ indicate that the pre-action safety condition holds at the start of round $t$. Active rounds form a prefix because the algorithm plays $\Null$ and freezes its state after the first failure. Let $\tau:=\sum_{t=1}^T I_t$. For the null action, set $\beta_{\Null,t}^{Z,\Gamma}=0$, $\widehat F_{\Null,t}(\lambda)=0$, and $F^0(\Null,x;\lambda)=0$.

\begin{table}[H]
\centering
\caption{Notation used in the known-corruption proof.}
\label{tab:known-proof-notation}
\small
\begin{tabularx}{\linewidth}{@{}lX@{}}
\toprule
Symbol & Meaning\\
\midrule
$I_t$ & Indicator that round $t$ is active under the pre-action safety rule.\\
$\tau$ & Number of active rounds, $\tau=\sum_t I_t$.\\
$R_D$ & Dual regret against the best fixed resource-price vector in $\Lambda$.\\
$\mathcal E_{\mathrm{conf}}$ & Common event supplying the pointwise confidence interface.\\
$\Rest^0$ & Clean cumulative recommendation-width bound supplied by the estimator interface.\\
$\psi_{\mathrm{est}}(\Gamma)$ & Additional cumulative width caused by retained response corruption.\\
\bottomrule
\end{tabularx}
\end{table}

\subsection{Score Confidence and Population-LP Comparison}
\label{app:known-score-confidence}

\begin{lemma}[Score confidence]
\label{lem:score-confidence}
On the event $\mathcal E_{\mathrm{conf}}$, suppose the reward and consumption estimates satisfy the componentwise interface \eqref{eq:general-pointwise-confidence}. Then every active round $t$, action $a\in\Asetzero$, and price vector $\lambda_t\in\Lambda$ satisfy
\begin{equation}
\label{eq:known-score-confidence}
\left|\widehat F_{a,t}(\lambda_t)-F^0(a,x_t;\lambda_t)\right|
\le \beta_{a,t}^{Z,\Gamma}.
\end{equation}
\end{lemma}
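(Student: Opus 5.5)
The argument is a direct triangle-inequality computation that works coordinatewise on the event $\mathcal E_{\mathrm{conf}}$, so no new probabilistic input is needed. For the null action everything is zero by convention: $\widehat F_{\Null,t}=F^0(\Null,x_t;\cdot)=0$ and $\beta_{\Null,t}^{Z,\Gamma}=0$, so the claim holds trivially. For a non-null action $a\in\Aset$, I will write the difference using the definitions \eqref{eq:estimated-lagrangian-score} and $F^0(a,x;\lambda)=r^0(a,x)-Z\lambda^\top b^0(a,x)$:
\[
\widehat F_{a,t}(\lambda_t)-F^0(a,x_t;\lambda_t)
=\bigl(\widehat r_{a,t}-r^0(a,x_t)\bigr)-Z\sum_{i=1}^m\lambda_{t,i}\bigl(\widehat b_{a,i,t}-b_i^0(a,x_t)\bigr).
\]

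The next step bounds the two pieces separately. The reward piece is at most $\beta_{a,t}^{r,\Gamma}$ by \eqref{eq:general-pointwise-confidence}. For the resource piece, I will use $\lambda_{t,i}\ge0$ and $\|\lambda_t\|_1\le1$, since $\lambda_t\in\Lambda$ is guaranteed by the multiplicative-weights representation $\lambda_t=q_{t,1:m}$ of a probability vector. This gives
\[
Z\Bigl|\sum_i\lambda_{t,i}(\widehat b_{a,i,t}-b_i^0(a,x_t))\Bigr|\le Z\sum_i\lambda_{t,i}\beta_{a,i,t}^{b,\Gamma}\le Z\max_i\beta_{a,i,t}^{b,\Gamma}.
\]
Adding the two pieces yields exactly $\beta_{a,t}^{Z,\Gamma}$ as defined in \eqref{eq:score-radius-interface}.

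The only point needing care, which I expect to be the mild obstacle, is making sure that \eqref{eq:general-pointwise-confidence} applies on every active round at the realized, data-dependent price $\lambda_t$ and at the supplied $\Gamma$. This holds because the interface is stated simultaneously for all $a,i,t$ on one event and for every valid $\Gamma\ge C_Z$. The bound is also deterministic and uniform in $\lambda\in\Lambda$, so the randomness of $\lambda_t$ is irrelevant. Since $\Gamma\ge C_Z$ holds pathwise by assumption, the supplied $\Gamma$ is valid on every realization. The same argument shows that the inequality holds for every $\lambda\in\Lambda$, not only for $\lambda_t$, which is consistent with Lemma~\ref{lem:regression-to-score}.
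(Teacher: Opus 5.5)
Your proposal is correct and follows essentially the same argument as the paper. The paper also treats the null action as trivially zero, bounds the reward error by $\beta_{a,t}^{r,\Gamma}$ and the priced consumption error by $Z\sum_i\lambda_{t,i}\beta_{a,i,t}^{b,\Gamma}\le Z\|\lambda_t\|_1\max_i\beta_{a,i,t}^{b,\Gamma}\le Z\max_i\beta_{a,i,t}^{b,\Gamma}$, and then adds the two bounds to obtain $\beta_{a,t}^{Z,\Gamma}$.
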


\begin{proof}
For $a=\Null$, both scores are zero. For $a\in\Aset$, apply the componentwise confidence interface directly. The reward error is at most $\beta_{a,t}^{r,\Gamma}$, and the priced consumption error is at most
\[
Z\sum_{i=1}^m\lambda_{t,i}\beta_{a,i,t}^{b,\Gamma}
\le Z\|\lambda_t\|_1\max_i\beta_{a,i,t}^{b,\Gamma}
\le Z\max_i\beta_{a,i,t}^{b,\Gamma}.
\]
Adding the two terms gives $\beta_{a,t}^{Z,\Gamma}$, as stated.
\end{proof}

\subsubsection{Optimistic comparison with the population-LP policy}
\label{app:known-optimistic-comparison}

\begin{lemma}[Optimistic comparison]
\label{lem:optimistic-comparison}
On $\mathcal E_{\mathrm{conf}}$, every active round satisfies
\begin{equation}
\label{eq:optimistic-comparison}
F^0(\widehat a_t,x_t;\lambda_t)
\ge\sum_{a\in\Aset}y^\star(a\mid x_t)F^0(a,x_t;\lambda_t) -2\beta_{\widehat a_t,t}^{Z,\Gamma}.
\end{equation}
If $e_t$ is the forced-exploration indicator in Algorithm~\ref{alg:known}, the played action satisfies
\begin{equation}
\label{eq:optimistic-comparison-exploration}
F^0(a_t,x_t;\lambda_t)
\ge\sum_{a\in\Aset}y^\star(a\mid x_t)F^0(a,x_t;\lambda_t)
-2\beta_{\widehat a_t,t}^{Z,\Gamma}-(1+Z)e_t.
\end{equation}
\end{lemma}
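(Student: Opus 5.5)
The argument is the standard optimism chain, combined with Lemma~\ref{lem:score-confidence} and the fact that the LP policy is a sub-probability mixture over non-null actions.

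\textbf{First inequality.} Fix an active round $t$ on $\mathcal E_{\mathrm{conf}}$ and write $U_{a,t}^{\Gamma}=\widehat F_{a,t}(\lambda_t)+\beta_{a,t}^{Z,\Gamma}$, with $U_{\Null,t}^{\Gamma}=0$.

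1. By Lemma~\ref{lem:score-confidence}, every $a\in\Asetzero$ satisfies $U_{a,t}^{\Gamma}\ge F^0(a,x_t;\lambda_t)$.
2. The same lemma applied to $\widehat a_t$ gives $F^0(\widehat a_t,x_t;\lambda_t)\ge U_{\widehat a_t,t}^{\Gamma}-2\beta_{\widehat a_t,t}^{Z,\Gamma}$.
3. Since $\widehat a_t$ maximizes $U_{\cdot,t}^{\Gamma}$ over $\Asetzero$, we have $U_{\widehat a_t,t}^{\Gamma}\ge\max_{a\in\Asetzero}U_{a,t}^{\Gamma}$.

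The right-hand side of the target inequality is a mixture. Put $y^\star(\Null\mid x_t):=1-\sum_{a\in\Aset}y^\star(a\mid x_t)\ge0$, which is well defined by the LP constraint. Then $(y^\star(a\mid x_t))_{a\in\Asetzero}$ is a probability vector, and $F^0(\Null,\cdot)=0$. Hence
\[
\sum_{a\in\Aset}y^\star(a\mid x_t)F^0(a,x_t;\lambda_t)=\sum_{a\in\Asetzero}y^\star(a\mid x_t)F^0(a,x_t;\lambda_t)\le\max_{a\in\Asetzero}U_{a,t}^{\Gamma},
\]
where the last step uses step 1. Chaining this with steps 2 and 3 gives \eqref{eq:optimistic-comparison}.

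Including $\Null$ matters. The benchmark may put total mass below one on non-null actions, and the argmax ranges over $\Asetzero$. So the comparison must be against a maximum over $\Asetzero$, and this is valid only because $U_{\Null,t}^{\Gamma}=F^0(\Null)=0$. This bookkeeping is the one point needing care.

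\textbf{Second inequality.} If $e_t=0$, then $a_t=\widehat a_t$, and \eqref{eq:optimistic-comparison-exploration} is exactly \eqref{eq:optimistic-comparison}.

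If $e_t=1$, it suffices to show that replacing $\widehat a_t$ by any action changes the clean score by at most $1+Z$. Clean scores lie in $[-Z,1]$: $r^0\in[0,1]$, $b^0\in[0,1]^m$, and $\lambda_t\ge0$ with $\|\lambda_t\|_1\le1$. The difference of any two values in $[-Z,1]$ is at most $h=1+Z$, so
\[
F^0(a_t,x_t;\lambda_t)\ge F^0(\widehat a_t,x_t;\lambda_t)-(1+Z).
\]
Combining this with \eqref{eq:optimistic-comparison} finishes the proof.

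The range claim needs the clean conditional means to lie in $[0,1]$. This holds because clean realizations lie in $[0,1]$ and the noise is mean-zero, as noted in Lemma~\ref{lem:regression-to-score}.
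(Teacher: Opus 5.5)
Your proof is correct and follows the paper's argument essentially step for step. The chain is $F^0(a,x_t;\lambda_t)\le U_{a,t}^{\Gamma}\le U_{\widehat a_t,t}^{\Gamma}\le F^0(\widehat a_t,x_t;\lambda_t)+2\beta_{\widehat a_t,t}^{Z,\Gamma}$ over $\Asetzero$, averaged against $y^\star(\cdot\mid x_t)$ extended by its residual mass on $\Null$ (using $U_{\Null,t}^{\Gamma}=F^0(\Null,x_t;\lambda_t)=0$), followed by the $1+Z$ charge on forced rounds from the range $[-Z,1]$ of clean scores.
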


\begin{proof}
Score confidence and maximization imply, for every $a\in\Asetzero$,
\[
F^0(a,x_t;\lambda_t)
\le U_{a,t}^{\Gamma}
\le U_{\widehat a_t,t}^{\Gamma}
\le F^0(\widehat a_t,x_t;\lambda_t)
+2\beta_{\widehat a_t,t}^{Z,\Gamma}.
\]
The last expression is nonnegative because it is at least $U_{\widehat a_t,t}^{\Gamma}\ge U_{\Null,t}^{\Gamma}=0$. Extend $y^\star(\cdot\mid x_t)$ to a distribution on $\Asetzero$ by assigning its remaining mass to $\Null$. Multiplying the action-wise inequality by this distribution and summing proves \eqref{eq:optimistic-comparison}, since the null clean score is zero.

A forced action can reduce the clean score by at most $1+Z$, because all clean scores lie in $[-Z,1]$; when $e_t=0$, the played action equals the recommendation. This proves \eqref{eq:optimistic-comparison-exploration} pathwise.
\end{proof}

\subsection{Cumulative Widths and Overrides}
\label{app:known-cumulative-widths}

Define the cumulative clean-score gap on active rounds
\begin{equation}
\label{eq:known-opt-error}
\mathcal E_{\mathrm{opt}}
:=
\sum_{t=1}^T I_t\left[
\sum_{a\in\Aset}y^\star(a\mid x_t)F^0(a,x_t;\lambda_t)
-F^0(a_t,x_t;\lambda_t)
\right].
\end{equation}

\begin{lemma}[Cumulative clean-score gap]
\label{lem:optimism-decomposition}
Under the cumulative estimator interface, with joint estimator and coverage failure probability at most $\delta_{\mathrm{tot}}$,
\begin{equation}
\label{eq:known-opt-error-bound}
\E[\mathcal E_{\mathrm{opt}}]
\le
\Rest^0
+\psi_{\mathrm{est}}(\Gamma)
+(1+Z)\sum_{t=1}^T\epsilon_t
+(1+Z)\delta_{\mathrm{tot}}T.
\end{equation}
Here $\Rest^0$ is the clean cumulative-width term, while $\psi_{\mathrm{est}}(\Gamma)$ is the extra width caused by corrupted retained samples.
\end{lemma}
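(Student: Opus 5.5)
The plan is to split on the joint estimator-and-coverage event $\mathcal E_{\mathrm{conf}}$, whose complement has probability at most $\delta_{\mathrm{tot}}$, and to bound $\mathcal E_{\mathrm{opt}}$ pathwise on each piece.

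\emph{Good event.} On $\mathcal E_{\mathrm{conf}}$, apply the pathwise inequality \eqref{eq:optimistic-comparison-exploration} of Lemma~\ref{lem:optimistic-comparison} on every active round. Multiplying by $I_t$ and summing gives
\[
\mathcal E_{\mathrm{opt}}\le 2\sum_{t=1}^T I_t\beta_{\widehat a_t,t}^{Z,\Gamma}+(1+Z)\sum_{t=1}^T I_te_t .
\]
The cumulative width interface \eqref{eq:cumulative-width-interface-main}, which holds on the same event, bounds the first sum by $\Rest^0+\psi_{\mathrm{est}}(\Gamma)$. Both quantities are deterministic bounds, so no expectation is needed for this term.

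\emph{Bad event.} On the complement, each summand of $\mathcal E_{\mathrm{opt}}$ is a difference of two clean scores in $[-Z,1]$. The first score is a $y^\star$-average that includes zero mass on $\Null$, so it also lies in $[-Z,1]$. Each summand is therefore at most $h=1+Z$, and $\mathcal E_{\mathrm{opt}}\le hT$ pathwise. This term contributes at most $h\,\sP(\mathcal E_{\mathrm{conf}}^c)\,T\le h\delta_{\mathrm{tot}}T$ to the expectation. To avoid double counting the exploration term, I will bound $\mathcal E_{\mathrm{opt}}$ by the good-event expression times the indicator of $\mathcal E_{\mathrm{conf}}$, plus $hT$ times the indicator of the complement. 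Both pieces are nonnegative where used, because the good-event right-hand side is nonnegative.

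\emph{Exploration term.} The remaining step takes expectations of $(1+Z)\sum_t I_te_t$. Since $e_t$ is drawn from fresh randomness, it is independent of $\mathcal F_{t-1}$ and $x_t$, while $I_t$ is determined before round $t$. Hence $\E[I_te_t]\le\E[e_t]=\epsilon_t$, using the latent-coin construction in which coins are drawn for every calendar round. Summing gives $(1+Z)\sum_t\epsilon_t$.

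\emph{Main obstacle.} The delicate part is the measurability of this split. The exploration expectation must not be conditioned on $\mathcal E_{\mathrm{conf}}$, since that event may depend on the coins. I handle this by dropping the indicator, using $\1_{\mathcal E_{\mathrm{conf}}}\le1$ on the nonnegative exploration term before taking expectations. Adding the three contributions yields \eqref{eq:known-opt-error-bound}.
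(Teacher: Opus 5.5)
Your proposal is correct and follows the paper's proof. You sum the pathwise inequality \eqref{eq:optimistic-comparison-exploration} over active rounds and apply the cumulative width interface on the joint event. You then use that $I_t$ is fixed before the fresh coin to get $\E[\sum_t I_te_t]\le\sum_t\epsilon_t$, and charge at most $1+Z$ per round on the complement of the event. Your explicit indicator split, which drops $\1_{\mathcal E_{\mathrm{conf}}}$ from the nonnegative exploration term before taking expectations, makes precise a step the paper leaves implicit.
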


\begin{proof}
Sum \eqref{eq:optimistic-comparison-exploration} over active rounds and apply the cumulative estimator interface \eqref{eq:cumulative-width-interface-main} directly. It bounds the width sum by $\Rest^0+\psi_{\mathrm{est}}(\Gamma)$; no estimator-specific support property is needed here. For the concrete module, \eqref{eq:radius-decomposition} charges pre-certification radii to clean burn-in. Lemma~\ref{lem:comp-width-forced} in FE and Proposition~\ref{prop:lasso-widths-on-policy} in OP provide the concrete interface bounds. Finally, $I_t$ is fixed before the fresh exploration coin is drawn, so
\[
\E\!\left[\sum_{t=1}^T I_t e_t\right]
\le\sum_{t=1}^T\epsilon_t.
\]
On the complement of the joint estimator--coverage event, each clean Lagrangian comparison is at most $1+Z$. Its contribution is at most $(1+Z)\delta_{\mathrm{tot}}T$. The clean-to-observed resource discrepancy enters later through Lemma~\ref{lem:known-resource-bridge}, separate from the estimator term.
\end{proof}

\subsection{Dual Control and Insertion of the LP Benchmark}
\label{app:known-dual}

On an active round, let
\[
g_t:=b_t(a_t)-\frac BT,
\qquad
\wt g_t:=(g_t,0)\in\R^{m+1}.
\]
Define
\begin{equation}
\label{eq:dual-regret-definition}
R_D
:=
\max_{\lambda\in\Lambda}\sum_{t=1}^T I_t\lambda^\top g_t
-\sum_{t=1}^T I_t\lambda_t^\top g_t.
\end{equation}

\begin{lemma}[Resource-price update bound]
\label{lem:dual-regret}
The multiplicative-weights update in Algorithm~\ref{alg:known} satisfies
\begin{equation}
\label{eq:dual-regret-bound}
R_D
\le
\frac{\log(m+1)}{\eta_q}
+\eta_q\left(1+\frac{B_{\max}}T\right)^2T.
\end{equation}
So $ZR_D\le\DT$ for the definition in \eqref{eq:dual-control-term}.
\end{lemma}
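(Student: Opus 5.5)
The bound is the standard Hedge (multiplicative-weights) regret guarantee, applied to the probability vectors $q_t$ on the $(m+1)$-simplex with linear gains $\widetilde g_t$. The slack coordinate is what turns the simplex into $\Lambda$. For any $\lambda\in\Lambda$, set $q=(\lambda,1-\|\lambda\|_1)$. This $q$ lies in the simplex and satisfies $q^\top\widetilde g_t=\lambda^\top g_t$, because the slack coordinate of $\widetilde g_t$ is zero. Likewise $q_t^\top\widetilde g_t=\lambda_t^\top g_t$. So the maximum over $\lambda\in\Lambda$ in \eqref{eq:dual-regret-definition} equals the maximum over the simplex of $\sum_t I_t q^\top\widetilde g_t$. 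That maximum is attained at a vertex, which may be the slack vertex. The quantity $R_D$ is therefore the Hedge regret against the best fixed coordinate.

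Next, inactive rounds need to be removed. On a round with $I_t=0$ the algorithm freezes $q_{t+1}=q_t$, and that term contributes nothing to either sum in $R_D$. Since active rounds form a prefix, the analysis reduces to an ordinary Hedge run over the rounds $t\le\tau$ with gains $\widetilde g_t$, each played with a fresh update. I will also bound the gain range. Each coordinate of $\widetilde g_t$ satisfies $|g_{t,i}|\le |b_{t,i}(a_t)|+B_i/T\le 1+B_{\max}/T=:G$, and the slack coordinate is $0$. So $\|\widetilde g_t\|_\infty\le G$.

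The core step is the potential argument. Let $W_t=\sum_j q_{1,j}\exp(\eta_q\sum_{u<t}I_u\widetilde g_{u,j})$, so that $q_t$ is the normalized weight vector. For the lower bound, $\log W_{T+1}\ge \eta_q\max_j\sum_t I_t\widetilde g_{t,j}-\log(m+1)$, using $q_{1,j}=1/(m+1)$. For the upper bound, each step gives $\log(W_{t+1}/W_t)=\log\E_{j\sim q_t}e^{\eta_q\widetilde g_{t,j}}$. I will use Hoeffding's lemma for a variable in an interval of length $2G$, or the inequality $e^x\le1+x+x^2$ valid for $|x|\le1$, to bound this by $\eta_q q_t^\top\widetilde g_t+\eta_q^2G^2$. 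Telescoping and dividing by $\eta_q$ gives $R_D\le \log(m+1)/\eta_q+\eta_q G^2\tau\le\log(m+1)/\eta_q+\eta_q(1+B_{\max}/T)^2T$.

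The only delicate point is the second-order constant. The inequality $e^x\le 1+x+x^2$ requires $\eta_qG\le1$. Hoeffding's lemma avoids that restriction and gives $\eta_q^2(2G)^2/8=\eta_q^2G^2/2$, which is even better than needed. I would therefore use Hoeffding to keep the bound valid for every stepsize. Multiplying the final inequality by $Z$ gives $ZR_D\le\DT$, exactly as defined in \eqref{eq:dual-control-term}.
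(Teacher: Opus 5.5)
Your proposal is correct and follows essentially the same route as the paper: embed $\Lambda$ into the $(m+1)$-simplex through the slack coordinate, bound $\|\widetilde g_t\|_\infty\le 1+B_{\max}/T$, use the freezing after stopping to restrict to the active prefix, and run the Hedge log-potential argument with Hoeffding's lemma. The only cosmetic difference is the comparator: the paper bounds regret against an arbitrary $q$ via a $\mathrm{KL}(q\|q_1)\le\log(m+1)$ term, whereas you compare with the best vertex, which is equivalent for linear gains.
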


\begin{proof}
The first $m$ coordinates of $q_t$ equal $\lambda_t$, and the last coordinate is the unused mass $1-\|\lambda_t\|_1$. Every $\lambda\in\Lambda$ corresponds to $q=(\lambda,1-\|\lambda\|_1)$ in the $(m+1)$-simplex. Also,
\[
\|\wt g_t\|_\infty
\le 1+\frac{B_{\max}}T.
\]
Hoeffding's lemma for the finite distribution $q_t$ bounds its log-moment-generating function by its mean plus $\eta_q^2\|\widetilde g_t\|_\infty^2/2$. Summing the log-potential increments yields, for every such $q$ and any $\eta_q>0$,
\[
\sum_{t=1}^T I_t(q-q_t)^\top\wt g_t
\le
\frac{\mathrm{KL}(q\|q_1)}{\eta_q}
+\eta_q\sum_{t=1}^T I_t\|\wt g_t\|_\infty^2.
\]
The algorithm freezes $q_t$ after stopping, so this is exactly the update on the active prefix. Since $q_1$ is uniform, $\mathrm{KL}(q\|q_1)\le\log(m+1)$. Taking the maximum over $\lambda$ proves \eqref{eq:dual-regret-bound}.
\end{proof}

\subsubsection{Insertion of the population-LP benchmark}
\label{app:known-population-conversion}

\begin{lemma}[Population-LP conversion]
\label{lem:known-population-conversion}
The cumulative clean-score gap on active rounds satisfies
\begin{equation}
\label{eq:conversion-clean-consumption}
\begin{aligned}
\Reg(T)
\le{}&
\E[\mathcal E_{\mathrm{opt}}]
+\VUB\,\E\!\left[1-\frac\tau T\right]\\
&+Z\E\!\left[
\sum_{t=1}^T I_t\lambda_t^\top
\left(\frac BT-b^0(a_t,x_t)\right)
\right].
\end{aligned}
\end{equation}
\end{lemma}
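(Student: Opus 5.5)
The plan is to write the realized clean reward on active rounds in terms of the clean Lagrangian score, and then insert the population-LP policy $y^\star$ through $\mathcal E_{\mathrm{opt}}$. After that, only one expectation remains to evaluate. Once the safety test fails, the algorithm plays $\Null$ forever, and $r^0(\Null,x)=0$. Hence
\[
\sum_{t=1}^T r^0(a_t,x_t)=\sum_{t=1}^T I_t\, r^0(a_t,x_t).
\]
By the definition of $F^0$, and with the null scores set to zero, $r^0(a_t,x_t)=F^0(a_t,x_t;\lambda_t)+Z\lambda_t^\top b^0(a_t,x_t)$ for every $a_t\in\Asetzero$. Substituting the definition \eqref{eq:known-opt-error} of $\mathcal E_{\mathrm{opt}}$ gives the pathwise identity
\[
\sum_{t=1}^T r^0(a_t,x_t)=\sum_{t=1}^T I_t\sum_{a\in\Aset}y^\star(a\mid x_t)F^0(a,x_t;\lambda_t)-\mathcal E_{\mathrm{opt}}+Z\sum_{t=1}^T I_t\lambda_t^\top b^0(a_t,x_t).
\]

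Next I will take expectations of the first sum by conditioning on $\mathcal F_{t-1}$. The price vector $\lambda_t$ is computed from observed consumption before round $t$, and the safety indicator $I_t$ is evaluated before $x_t$ is observed. Both are therefore $\mathcal F_{t-1}$-measurable. By Assumption~\ref{ass:base-model}(i), conditional on $\mathcal F_{t-1}$ the context $x_t$ has law $\mathcal D$. Since $y^\star$ is a fixed measurable rule chosen before interaction, this gives
\[
\E\Bigl[I_t\sum_{a\in\Aset}y^\star(a\mid x_t)F^0(a,x_t;\lambda_t)\Bigm|\mathcal F_{t-1}\Bigr]
= I_t\Bigl(\frac{\VUB}{T}-Z\lambda_t^\top\,\E_x\Bigl[\sum_{a\in\Aset}b^0(a,x)y^\star(a\mid x)\Bigr]\Bigr).
\]
Here I used that $y^\star$ attains $\VUB$, so its per-round expected reward is $\VUB/T$. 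The resource constraints of \eqref{eq:population-lp} give $\E_x[\sum_a b_i^0(a,x)y^\star(a\mid x)]\le B_i/T$. Because $\lambda_t\ge0$ and $I_t\ge0$, the conditional expectation above is at least $I_t(\VUB/T-Z\lambda_t^\top B/T)$.

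Combining the two steps and using $\sum_t I_t=\tau$:
\[
\Reg(T)=\VUB-\E\sum_t r^0(a_t,x_t)\le \VUB-\frac{\VUB}{T}\E[\tau]+Z\,\E\sum_t I_t\lambda_t^\top\frac BT+\E[\mathcal E_{\mathrm{opt}}]-Z\,\E\sum_t I_t\lambda_t^\top b^0(a_t,x_t).
\]
Regrouping these terms gives exactly \eqref{eq:conversion-clean-consumption}.

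The main point to check carefully is the measurability claim in the second step. It is here that the timing convention matters: the safety check precedes observing $x_t$, and state is frozen after stopping, which makes $(I_t,\lambda_t)$ predictable. Together with the i.i.d.\ context assumption under the full filtration, this lets the population expectation be inserted without any concentration argument. All other steps are pathwise algebra, and integrability is immediate because every quantity is bounded by $T(1+Z)$.
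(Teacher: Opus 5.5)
Your proof is correct and follows essentially the same route as the paper. You rewrite the realized clean reward through $F^0$ and $\mathcal E_{\mathrm{opt}}$, then use predictability of $(I_t,\lambda_t)$ before $x_t$ and the conditional law $x_t\sim\mathcal D$ to replace the $y^\star$ term by its population value, applying LP optimality to the reward part and LP feasibility with $\lambda_t\ge0$ to the consumption part. The paper only differs in ordering: it first writes the identity $\VUB=\E[\sum_t I_t\sum_{a\in\Aset}y^\star(a\mid x_t)r^0(a,x_t)]+\VUB\,\E[1-\tau/T]$ and then expands $\mathcal E_{\mathrm{opt}}$.
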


\begin{proof}
Because active rounds form a prefix, $I_t$ is measurable before $x_t$ is drawn. Optimality of $y^\star$ gives
\begin{equation}
\label{eq:inactive-benchmark-value}
\VUB
=
\E\!\left[
\sum_{t=1}^T I_t\sum_{a\in\Aset}
y^\star(a\mid x_t)r^0(a,x_t)
\right]
+\VUB\,\E\!\left[1-\frac\tau T\right].
\end{equation}
Expanding \eqref{eq:known-opt-error}, using \eqref{eq:inactive-benchmark-value}, and then applying population-LP feasibility yield \eqref{eq:conversion-clean-consumption}. The key measurability point is that $I_t\lambda_t$ is predictable before $x_t$. For every resource coordinate,
\[
\E_x\!\left[\sum_a y^\star(a\mid x)b_i^0(a,x)\right]
\le \frac{B_i}{T}
\]
can be multiplied by $I_t\lambda_{t,i}$ and averaged without changing its direction. This step is where population-LP feasibility enters the proof.
\end{proof}

\subsection{Relating Resource Use and Controlling Stopping}
\label{app:known-resource-bridge}

The population LP uses clean conditional-mean consumption, while the algorithm updates prices and stops using observed consumption. This step connects the two resource processes.

\begin{lemma}[Relating clean and observed resource use]
\label{lem:known-resource-bridge}
The cumulative clean-score gap on active rounds satisfies
\begin{equation}
\label{eq:conversion-observed-consumption}
\begin{aligned}
\Reg(T)
\le{}&
\E[\mathcal E_{\mathrm{opt}}]
+\VUB\,\E\!\left[1-\frac\tau T\right]\\
&+Z\E\!\left[
\sum_{t=1}^T I_t\lambda_t^\top
\left(\frac BT-b_t(a_t)\right)
\right]
+\E[C_Z].
\end{aligned}
\end{equation}
\end{lemma}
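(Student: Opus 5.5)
The plan starts from Lemma~\ref{lem:known-population-conversion} and rewrites only its last term. Its clean conditional-mean consumption $b^0(a_t,x_t)$ will be replaced by the observed consumption $b_t(a_t)$. For each active round we decompose
\[
b^0(a_t,x_t)=b_t(a_t)-c_t^b(a_t)-\eta_t^b(a_t),
\]
where $\eta_t^b(a_t)=b_t^{\mathrm{cl}}(a_t)-b^0(a_t,x_t)$ is the clean noise vector of the played action. The last term of \eqref{eq:conversion-clean-consumption} then becomes three pieces:
\[
Z\E\Bigl[\sum_t I_t\lambda_t^\top\bigl(\tfrac BT-b_t(a_t)\bigr)\Bigr]
+Z\E\Bigl[\sum_t I_t\lambda_t^\top c_t^b(a_t)\Bigr]
+Z\E\Bigl[\sum_t I_t\lambda_t^\top\eta_t^b(a_t)\Bigr].
\]
The first piece is exactly the term that appears in \eqref{eq:conversion-observed-consumption}.

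For the corruption piece, Hölder's inequality and $\|\lambda_t\|_1\le1$ give $Z|\lambda_t^\top c_t^b(a_t)|\le Z\|c_t^b(a_t)\|_\infty$. This quantity is at most the per-round maximum in \eqref{eq:effective-corruption}, so summing over rounds bounds the piece pathwise by $C_Z(T)\le C_Z$. Taking expectations yields the additive $\E[C_Z]$.

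The noise piece is where the work lies: we will show it has expectation zero. The plan is to condition on $\mathcal F_{t-1}$ and $x_t$. In that conditioning, $I_t$ and $\lambda_t$ are $\mathcal F_{t-1}$-measurable, since prices are frozen after stopping and active rounds form a prefix. By the timing of Section~\ref{sec:model-benchmark}, the action $a_t$ is drawn with fresh randomness that is conditionally independent of the potential outcomes, including the clean noises $\eta_t^b(a)$ for every $a$. We therefore write
\[
\E\bigl[\eta^b_{t,i}(a_t)\mid\mathcal F_{t-1},x_t\bigr]=\sum_a\sP(a_t=a\mid\mathcal F_{t-1},x_t)\,\E\bigl[\eta^b_{t,i}(a)\mid\mathcal F_{t-1},x_t\bigr]=0,
\]
which uses Assumption~\ref{ass:base-model}(ii). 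The null action contributes zero.

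The main obstacle is justifying this factorization carefully. The corruption may depend on the noise, but the noise term involves only clean realizations and no observed quantities, so that dependence does not matter. The remaining point is that $a_t$ depends on the history only through $\mathcal F_{t-1}$, $x_t$, and the fresh seed; this is exactly the non-anticipation property. The sum is bounded because each term is at most $Z$ and there are $T$ rounds, so the tower property applies term by term and the noise piece vanishes.

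Combining the three pieces with \eqref{eq:conversion-clean-consumption} gives \eqref{eq:conversion-observed-consumption}.
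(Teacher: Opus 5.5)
Your proposal is correct and follows essentially the paper's own argument. You substitute $b_t(a_t)=b^0(a_t,x_t)+\eta_t^b(a_t)+c_t^b(a_t)$ into Lemma~\ref{lem:known-population-conversion}, eliminate the clean-noise piece using predictability of $I_t\lambda_t$ and the conditional independence of the fresh action seed from the potential outcomes, and bound the corruption piece pathwise by $Z\sum_t I_t\|c_t^b(a_t)\|_\infty\le C_Z$; your write-up just spells out the conditioning step more explicitly than the paper does.
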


\begin{proof}
Because $I_t\lambda_t$ is predictable before $x_t$ and $a_t$ is selected before the current feedback using randomness independent of the clean potential outcomes, the conditional mean-zero clean noise has zero expectation. Together with $b_t(a_t)=b_t^{\mathrm{cl}}(a_t)+c_t^b(a_t)$, this implies
\begin{align}
&Z\E\!\left[
\sum_{t=1}^T I_t\lambda_t^\top
\left(b_t(a_t)-b^0(a_t,x_t)\right)
\right]
\notag\\
&\qquad=
Z\E\!\left[
\sum_{t=1}^T I_t\lambda_t^\top c_t^b(a_t)
\right]
\le
Z\E\!\left[
\sum_{t=1}^T I_t\|c_t^b(a_t)\|_\infty
\right]
\le \E[C_Z].
\label{eq:consumption-corruption-transfer}
\end{align}
Equivalently, pathwise the corruption part is controlled by
\[
Z\sum_{t=1}^T I_t
\left|\lambda_t^\top
\bigl(b_t(a_t)-b_t^{\mathrm{cl}}(a_t)\bigr)\right|
\le C_Z.
\]
Substituting the expected inequality into \eqref{eq:conversion-clean-consumption} proves \eqref{eq:conversion-observed-consumption}. This produces one direct $C_Z$ resource-accounting term, separate from the estimator widths.
\end{proof}

\subsubsection{Stopping cancellation}
\label{app:known-stopping}

\begin{lemma}[Stopping cancellation and expected BwK conversion]
\label{lem:bwk-conversion}
Under Assumption~\ref{ass:Z},
\begin{equation}
\label{eq:bwk-conversion-bound}
\Reg(T)
\le
\E[\mathcal E_{\mathrm{opt}}]+Z\E[R_D]+\E[C_Z]+Z.
\end{equation}
\end{lemma}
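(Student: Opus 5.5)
We start from Lemma~\ref{lem:known-resource-bridge} and bound its two middle terms, $\VUB\,\E[1-\tau/T]$ and $Z\E[\sum_t I_t\lambda_t^\top(B/T-b_t(a_t))]$, by $Z\E[R_D]+Z$. The approach is pathwise: fix a realization and define $S:=Z\sum_{t}I_t\lambda_t^\top(B/T-b_t(a_t))=-Z\sum_t I_t\lambda_t^\top g_t$. By the definition \eqref{eq:dual-regret-definition} of $R_D$, for every fixed $\lambda\in\Lambda$,
\[
S\le ZR_D-Z\sum_{t=1}^T I_t\lambda^\top g_t
= ZR_D+Z\lambda^\top\Bigl(\frac{\tau}{T}B-\sum_{t=1}^T I_tb_t(a_t)\Bigr).
\]
The comparator $\lambda$ will be chosen according to whether the run stops.

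\textbf{Case 1: $\tau=T$, no stopping.} Take $\lambda=0$. Then $S\le ZR_D$, and the term $\VUB(1-\tau/T)$ vanishes.

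\textbf{Case 2: $\tau<T$, stopping occurs.} At round $\tau+1$ the safety test \eqref{eq:pre-action-safety} fails for some resource $i$. Hence the observed use before that round satisfies $U_{\tau+1,i}=\sum_t I_tb_{t,i}(a_t)>B_i-1$. Take $\lambda=e_i$. This gives
\[
S\le ZR_D+Z\Bigl(\frac{\tau}{T}B_i-B_i+1\Bigr)=ZR_D-ZB_i\Bigl(1-\frac{\tau}{T}\Bigr)+Z.
\]
Assumption~\ref{ass:Z} gives $ZB_i\ge ZB_{\min}\ge\VUB$. Therefore
\[
\VUB\Bigl(1-\frac\tau T\Bigr)+S\le ZR_D+Z.
\]
The lost benchmark reward is cancelled by the dual value of the depleted resource, and only the one-unit safety margin remains, contributing $Z$.

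In both cases the pathwise inequality $\VUB(1-\tau/T)+S\le ZR_D+Z$ holds. Taking expectations and substituting into \eqref{eq:conversion-observed-consumption} yields \eqref{eq:bwk-conversion-bound}.

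The main point needing care is consistency between the safety rule and the dual update. The dual update and the definition of $R_D$ must use the same observed consumptions $b_t(a_t)$ as the stopping ledger $U_t$. Both must also be restricted to the active prefix $I_t$, which holds because the state is frozen after stopping. Since $R_D$ is defined with a max over $\Lambda$, it dominates the comparison at any fixed $\lambda$, including the path-dependent choices $0$ and $e_i$. No measurability issue arises, since the argument is pathwise before expectations are taken.
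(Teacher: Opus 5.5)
Your proposal is correct and matches the paper's proof: you rewrite the priced observed-consumption term through the dual regret $R_D$, drop the comparator when $\tau=T$, and when $\tau<T$ use the failed safety coordinate $i$ together with $\VUB\le ZB_{\min}\le ZB_i$, which cancels the lost benchmark value up to the one-unit margin $Z$. The only difference is cosmetic and slightly simpler: you evaluate the maximum over $\Lambda$ directly at $\lambda=e_i$, which makes the paper's case split on the sign of $G_i$ (where $G=\sum_t I_t(b_t(a_t)-B/T)$) unnecessary.
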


\begin{proof}
Let
\[
G:=\sum_{t=1}^T I_t\left(b_t(a_t)-\frac BT\right).
\]
The definition of $R_D$ keeps the following maximum:
\begin{equation}
\label{eq:dual-with-stopping-certificate}
Z\sum_{t=1}^T I_t\lambda_t^\top
\left(\frac BT-b_t(a_t)\right)
\le
ZR_D-Z\max_{\lambda\in\Lambda}\lambda^\top G.
\end{equation}
The maximum selects the most depleted resource direction and supplies the charge used for the benchmark value after stopping.

If $\tau=T$, there is no inactive benchmark value and the nonnegative maximum may be dropped. If $\tau<T$, then the pre-action test first fails after $\tau$ active rounds. For some resource $i$,
\[
U_{\tau+1,i}=\sum_{t=1}^T I_t b_{t,i}(a_t)>B_i-1,
\]
and so
\begin{equation}
\label{eq:stopped-coordinate}
G_i
=U_{\tau+1,i}-\frac{\tau B_i}{T}
>B_i\left(1-\frac\tau T\right)-1.
\end{equation}
Since both $0$ and the coordinate vector $e_i$ belong to $\Lambda$,
\[
\max_{\lambda\in\Lambda}\lambda^\top G
\ge\max\{0,G_i\}.
\]
Assumption~\ref{ass:Z} gives $\VUB\le ZB_{\min}\le ZB_i$. If $G_i\ge0$, then \eqref{eq:stopped-coordinate} implies
\[
\VUB\left(1-\frac\tau T\right)
-Z\max_{\lambda\in\Lambda}\lambda^\top G
<Z.
\]
If $G_i<0$, the same display follows because \eqref{eq:stopped-coordinate} implies $B_i(1-\tau/T)<1$, while the maximum is nonnegative. In both cases,
\begin{equation}
\label{eq:stopping-certificate}
\VUB\left(1-\frac\tau T\right)
-Z\max_{\lambda\in\Lambda}\lambda^\top G
\le Z.
\end{equation}

Taking expectations in the two pathwise inequalities and combining \eqref{eq:conversion-observed-consumption}, \eqref{eq:dual-with-stopping-certificate}, and \eqref{eq:stopping-certificate} proves the stopping-aware conversion. The remaining $Z$ is the stopping boundary. The pre-action rule already gives exact observed budget feasibility.
\end{proof}

\subsection{Assembly and Concrete Substitutions}
\label{app:proof-known}

\begin{proof}[Proof of Theorem~\ref{thm:known-general}]
Lemma~\ref{lem:optimism-decomposition} gives
\[
\E[\mathcal E_{\mathrm{opt}}]
\le
\Rest^0
+\psi_{\mathrm{est}}(\Gamma)
+(1+Z)\sum_{t=1}^T\epsilon_t
+(1+Z)\delta_{\mathrm{tot}}T.
\]
Lemma~\ref{lem:dual-regret} gives $Z\E[R_D]\le\DT$, and Lemma~\ref{lem:bwk-conversion} gives the direct resource transfer and the stopping boundary. Thus, the assembled bound is
\begin{equation}
\label{eq:known-assembly-before-gamma}
\boxed{\begin{aligned}
\Reg(T)\le{}&O\!\left(\Rest^0+(1+Z)\sum_{t=1}^T\epsilon_t+\DT+\E[C_Z]+\psi_{\mathrm{est}}(\Gamma)+Z\right)\\
&+(1+Z)\delta_{\mathrm{tot}}T.
\end{aligned}}
\end{equation}
Finally, the supplied bound is pathwise, $C_Z\le\Gamma$, and therefore $\E[C_Z]\le\Gamma$. Replacing the explicit $\E[C_Z]$ in \eqref{eq:known-assembly-before-gamma} by $\Gamma$ proves \eqref{eq:known-modular-bound}. The final display keeps the direct transfer $\Gamma$ separate from the estimator term $\psi_{\mathrm{est}}(\Gamma)$.

Lemma~\ref{lem:comp-width-forced} gives \eqref{eq:forced-width-constant}. Substituting it into the assembled bound proves \eqref{eq:known-fe-unoptimized-rate} and Corollary~\ref{cor:known-forced-concrete}; the term $h\epsilon T$ is already present in that bound. The exploration balance in Appendix~\ref{app:exploration-balance}, including its range-bound argument for the capped case, then gives \eqref{eq:known-fe-lasso-rate} and Corollary~\ref{cor:known-rates-main}(i).

For OP, the recommendation is the actual played action and the regression count is $N_a(t)$. Substituting \eqref{eq:known-op-widths} from Proposition~\ref{prop:lasso-widths-on-policy} proves \eqref{eq:known-op-lasso-rate} and Corollary~\ref{cor:known-rates-main}(ii). These substitutions retain the explicit $A$, $n_0$, curvature, and harmonic factors. Setting $\Gamma=0$ proves Corollary~\ref{cor:known-clean}.
\end{proof}

If the algorithm instead uses a strictly smaller operating budget, the same five proof blocks apply after replacing $B$ by $B^{\mathrm{op}}$, using a scale valid for the reduced-budget benchmark, and measuring regret first against $V^{\mathrm{UB}}(B^{\mathrm{op}})$. Returning to the original benchmark adds exactly $V^{\mathrm{UB}}(B)-V^{\mathrm{UB}}(B^{\mathrm{op}})$.

\subsection{Exploration Calibration for the Concrete FE Bounds}
\label{app:exploration-balance}
\begin{proof}[Exploration calibration]
For clarity about the FE choice, put $a=A_{\mathrm{FE}}\sqrt{KT}$ and $b=hK(n_0+\ell)$. The three clean terms are $b/\epsilon+a/\sqrt\epsilon+hT\epsilon$. If the cap in \eqref{eq:lasso-exploration-choice} is inactive, its two lower bounds imply
\[
\frac{b}{\epsilon}\le\sqrt{bhT},\qquad
\frac{a}{\sqrt\epsilon}\le a^{2/3}(hT)^{1/3},\qquad
hT\epsilon\le\sqrt{bhT}+a^{2/3}(hT)^{1/3}.
\]
This proves the $T^{2/3}$ term together with the square-root burn-in contribution in \eqref{eq:known-fe-lasso-rate}. For the capped case, write
\[
r_1=\left(\frac{A_{\mathrm{FE}}^2K}{h^2T}\right)^{1/3},\qquad
r_2=\sqrt{\frac{K(n_0+\ell)}{T}}.
\]
If $\max\{r_1,r_2\}\ge1/2$, then the sum of the two clean terms in the main bound is $hT(r_1+r_2)\ge hT/2\ge T/2$. The unconditional range bound $\Reg(T)\le T$ therefore proves the same main-text inequality after choosing $C\ge2$. Thus it is valid for all horizons, not only the uncapped regime; no informative rate is asserted when these terms are of order $T$. The argument also proves \eqref{eq:unknown-fe-lasso-rate}, since its other terms are nonnegative. The rule balances clean costs only and is implementable with either known or unknown corruption; the corruption term is evaluated at the chosen $\epsilon$.
\end{proof}

\section{Proof of the Unknown-Corruption Guarantee}
\label{app:unknown-analysis}

The proof compares recommendations computed from one common realized estimator history. It does not compare counterfactual histories that different candidates would have generated. Direct corruption enters twice when transferring the master comparison to clean scores, and once in the resource conversion. The cumulative confidence term is separate.

\subsection{One Common Confidence and Width Event}
\label{app:shared-valid-grid}

\begin{lemma}[Shared estimates and all valid grid points]
\label{lem:shared-lasso-prediction}
For the concrete module of Appendices~\ref{app:concrete-estimator}--\ref{app:concrete-design-conditions}, under the FE calibration or the OP trajectory condition, one event of failure at most $\delta_{\mathrm{est}}+\delta_{\mathrm{cov}}$ gives pointwise confidence for every valid $\Gamma\in\mathcal G$. On this event the FE grid-uniform width interface holds with \eqref{eq:forced-width-constant}. For OP, intersect also with Condition~\ref{ass:candidate-coverage}'s event; then \eqref{eq:shared-op-widths} supplies that interface, with total failure at most $\delta_{\mathrm{est}}+\delta_{\mathrm{cov}}+\delta_{\mathrm{cand}}$.
\end{lemma}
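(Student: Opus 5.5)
The proof will assemble events that are already available rather than build new ones. The key observation is that in Shared-Grid every candidate uses the same retained sets $\mathcal I_{a,t}^{\mathrm{est}}$, the same regularization $\rho_n$, the same certified precision, and hence the same fitted coefficients. Only the radii depend on $\Gamma$.

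\textbf{Step 1: the common confidence event.} Take the coordinate-noise event of Lemma~\ref{lem:lasso-noise-score}. Its failure probability is at most $\delta_{\mathrm{est}}$, and it holds simultaneously over actions, channels and obtained prefixes. The retention rule for $\mathcal S_a^{\mathrm{sh,FE}}$ and for OP is decided before feedback, so that lemma applies verbatim.

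Intersect it with a design event of failure at most $\delta_{\mathrm{cov}}$:
\begin{itemize}
\item In FE, this is the conjunction of Lemma~\ref{lem:forced-re} and the forced-count event of Lemma~\ref{lem:comp-width-forced}, each with failure $\delta_{\mathrm{cov}}/2$. The cyclic initialization is context-independent and fixed in advance, which Lemma~\ref{lem:forced-re} explicitly allows.
\item In OP, this is Condition~\ref{ass:on-policy-re}.
\end{itemize}
On the intersection, Proposition~\ref{prop:lasso-prediction-main} holds for every retained prefix with $n\ge n_0$, for arbitrary retained corruption vectors. Lemma~\ref{lem:regression-to-score} then turns this into pointwise confidence for every $\Gamma$ with $\Gamma\ge C_Z$. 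The reason is that the retained corruption sums are at most $C_Z\le\Gamma$ (respectively $C_Z/Z\le\Gamma/Z$), and the radii \eqref{eq:concrete-reward-radius}--\eqref{eq:concrete-consumption-radius} are monotone in $\Gamma$. Because the event does not depend on $\Gamma$, no union bound over $\mathcal G$ is needed. In particular, the path-dependent $\Gamma^\star$ is covered.

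\textbf{Step 2: the grid-uniform width interface.} For FE, the bound of Lemma~\ref{lem:comp-width-forced} is stated for an arbitrary recommendation sequence. I therefore apply it to each sequence $(\widehat a_t^\Gamma)_{t>t_{\mathrm{init}}}$ separately. Counts are measured in post-initialization elapsed time, and the initialization samples only increase $N_a$, while $f$ and the corruption envelope are nonincreasing. This gives \eqref{eq:uniform-grid-width-interface} with \eqref{eq:forced-width-constant} for all $\Gamma$ simultaneously, on the same count event.

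For OP, I intersect additionally with the event of Condition~\ref{ass:candidate-coverage}, which costs $\delta_{\mathrm{cand}}$. I then invoke the shared part of Proposition~\ref{prop:lasso-widths-on-policy}. Its summation-by-parts argument via \eqref{eq:recommendation-summation} uses only the count inequality, and that inequality holds for all $\Gamma\in\mathcal G$ on one event. This yields \eqref{eq:shared-op-widths} uniformly.

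\textbf{Main obstacle.} The step needing the most care is making sure that nothing in the events depends on the candidate or on counterfactual histories. Concretely:
\begin{itemize}
\item The count event in FE must be stated in terms of the latent calendar-time forced coins, as in Lemma~\ref{lem:comp-width-forced}, so that it holds before knowing which candidate is followed.
\item The recommendation count $N_a^{\mathrm{rec},\Gamma}$ must be compared only with the realized $N_a$ on the single shared history.
\end{itemize}
A union bound then gives total failure $\delta_{\mathrm{est}}+\delta_{\mathrm{cov}}$, plus $\delta_{\mathrm{cand}}$ in OP.
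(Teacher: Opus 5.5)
Your proposal is correct and follows essentially the same route as the paper. It builds one candidate-independent noise-plus-design event: in FE, the forced-design event plus the count event in latent calendar-time forced coins; in OP, Condition~\ref{ass:on-policy-re}. It applies Lemma~\ref{lem:regression-to-score} to the common fits for every valid $\Gamma$, uses the arbitrary-recommendation FE width bound, and invokes the shared part of Proposition~\ref{prop:lasso-widths-on-policy} on the added candidate-coverage event. The only point the paper states that you leave implicit is that $C_Z\le T(1+Z)\le\max\mathcal G$, so a valid grid point exists and $\Gamma^\star$ is well defined for the post hoc substitution.
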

\begin{proof}
The estimator sample set \eqref{eq:unknown-estimator-sample-set} is exactly the initialization-augmented FE or all-realized OP set in \eqref{eq:estimator-sample-sets}. Lemma~\ref{lem:lasso-noise-score} covers its obtained prefixes, including stopping. In FE, Lemma~\ref{lem:forced-re} applies directly to the combined initialization and later forced samples, and Lemma~\ref{lem:comp-width-forced} supplies the count event. In OP, the retained design is precisely the design in Condition~\ref{ass:on-policy-re}.
For every response channel, the data, $\rho_n$, and certified objective-gap tolerance are common to all candidates. Lemma~\ref{lem:lasso-corruption} is deterministic for any corruption vector on the noise event. Therefore Lemma~\ref{lem:regression-to-score} covers all valid candidates simultaneously, without a candidate-dependent noise theorem or a candidate-dependent fit. The FE width bound holds for arbitrary recommendations. The OP width bound uses Proposition~\ref{prop:lasso-widths-on-policy} and the additional candidate event. Since $C_Z\le T(1+Z)$, the grid contains a valid point and the same event permits the post hoc substitution of $\Gamma^\star$.
\end{proof}

Thus, on the appropriate common event,
\begin{equation}
\label{eq:shared-valid-width}
2\sum_{t=t_{\mathrm{init}}+1}^T I_t
\beta_{\widehat a_t^{\Gamma^\star},t}^{Z,\Gamma^\star}
\le\Rshared^0+\psi_{\mathrm{est}}(\Gamma^\star).
\end{equation}
Initialization is charged separately, so $\Rshared^0$ is a post-initialization width bound. The parameters of the OP event concern the algorithm's actual shared trajectory, not the trajectory of a candidate run alone.

\subsection{Master Comparison with a Uniform-Advice Expert}
\label{app:master-regret}

Let $\mathcal Q_t$ include the full analysis history, current context, fitted coefficients, prices, and all recommendations before fresh round-$t$ draws. Let $\mathcal P_t$ also include the potential observed payoff vector. Non-anticipation fixes this vector before the independent action seed, so, conditional on $\mathcal P_t$ and a non-forced branch, $a_t\sim p_t$.

\begin{lemma}[Unbiased expert-payoff estimate]
\label{lem:unbiased-grid-score}
On each post-initialization, non-forced active round, every expert $j\in\mathcal E_{\mathrm M}$ obeys
\[
\E[\widehat y_{t,j}\mid\mathcal P_t,e_t=0]
=\sum_{a\in\Asetzero}\xi_{t,j}(a)Y_t(a).
\]
For a grid candidate this equals $Y_t(\widehat a_t^\Gamma)$; for the uniform expert it is the uniform average.
\end{lemma}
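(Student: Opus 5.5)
The plan is the standard importance-weighting computation, carried out conditionally on $\mathcal P_t$ and the non-forced branch. First I will check that every quantity other than the drawn action is fixed given $\mathcal P_t$. The advice $\xi_{t,j}$ is one of these: it depends only on shared fits, prices, the current context, and candidate recommendations, all contained in $\mathcal Q_t\subseteq\mathcal P_t$. The master distribution $p_t$ is another, since it is built from the weights $w_{t,j}$, which are determined by the history, and from the advice. The potential normalized payoffs $Y_t(a)$, $a\in\Asetzero$, are also fixed. They depend on the potential observed feedback $r_t(a),b_t(a)$, which $\mathcal P_t$ includes, and on $\lambda_t$, which is predictable. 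For $\Null$ the payoff is the constant $Z/(1+Z)$.

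Second, I will use non-anticipation together with the independence of the fresh seed, exactly as stated before the lemma. Given $\mathcal P_t$ and $e_t=0$, the draw satisfies $a_t\sim p_t$. One point to check is that the forced coin $e_t$ is independent of both the action seed and $\mathcal P_t$, so conditioning on $e_t=0$ does not distort the law of $a_t$. Smoothing gives $p_t(a)\ge p_{\min}>0$ for every $a$, so the ratio $\widehat Y_t(a)=\1\{a=a_t\}Y_t(a_t)/p_t(a_t)$ is well defined. Its conditional expectation is then
\[
\E[\widehat Y_t(a)\mid\mathcal P_t,e_t=0]=p_t(a)\,\frac{Y_t(a)}{p_t(a)}=Y_t(a).
\]

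Third, since $\widehat y_{t,j}=\sum_a\xi_{t,j}(a)\widehat Y_t(a)$ and the weights $\xi_{t,j}(a)$ are $\mathcal P_t$-measurable, linearity of conditional expectation gives the claimed identity. For a grid candidate the advice is the point mass at $\widehat a_t^\Gamma$, so the sum reduces to $Y_t(\widehat a_t^\Gamma)$. For the uniform expert it is the average $K_0^{-1}\sum_aY_t(a)$.

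The main obstacle is the measurability bookkeeping. Specifically, one must show that the payoff of a counterfactual, unplayed action is a well-defined $\mathcal P_t$-measurable quantity. One must also show that the learner's draw is conditionally independent of the corruption, even though the adversary may react to the context and the clean outcomes. Both facts follow from the timing in Section~\ref{sec:model-benchmark}, steps (ii)--(iii), and I would cite them directly rather than re-derive them.
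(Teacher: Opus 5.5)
Your proposal is correct and follows the same route as the paper. Smoothing gives $p_t(a)\ge p_{\min}>0$, and conditional on $\mathcal P_t$ and $e_t=0$ one has $a_t\sim p_t$, so $\widehat Y_t(a)$ is unbiased for $Y_t(a)$; linearity against the $\mathcal P_t$-measurable advice then gives the identity. Your observation that the denominator is the non-forced probability $p_t$, unaffected by the independent forced coin, is exactly the point the paper's proof emphasizes.
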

\begin{proof}
All $p_t(a)\ge p_{\min}>0$. Conditional expectation of
$\widehat Y_t(a)=\1\{a_t=a\}Y_t(a_t)/p_t(a_t)$ is $Y_t(a)$; summing against the advice gives the result. The denominator is the actual non-forced sampling probability, not the mixture probability including forced overrides.
\end{proof}

\begin{lemma}[Master score regret with overrides]
\label{lem:master-regret}
For the parameters \eqref{eq:exp4p-parameters}, let $L_{\mathrm M}=\log(3M/\delta_{\mathrm{master}})$. With failure at most $2\delta_{\mathrm{master}}/3$, simultaneously for all $\Gamma\in\mathcal G$,
\begin{equation}
\label{eq:master-score-regret}
\sum_{t=t_{\mathrm{init}}+1}^T I_t
[F_t^{\mathrm{obs}}(\widehat a_t^\Gamma;\lambda_t)-F_t^{\mathrm{obs}}(a_t;\lambda_t)]
\le8h\sqrt{TK_0 L_{\mathrm M}}+h\sum_{t=1}^T\epsilon_t.
\end{equation}
\end{lemma}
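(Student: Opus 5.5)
The plan is to invoke the EXP4.P high-probability guarantee of \citet[Theorem~2]{beygelzimer2011contextual} on the subsequence of master-controlled rounds, and then charge the forced-override rounds directly through the score range. Restrict attention to post-initialization active rounds with $e_t=0$. On these rounds the master draws $a_t\sim p_t$. The payoffs $Y_t(a)\in[0,1]$ are fixed before the fresh seed by non-anticipation; this is $\mathcal P_t$-measurability. The advice $\xi_{t,j}$ is $\mathcal Q_t$-measurable, the uniform expert belongs to $\mathcal E_{\mathrm M}$, and the floor $p_{\min}$ together with $\alpha_m$ match \eqref{eq:exp4p-parameters}.

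Rounds with $e_t=1$, initialization rounds and inactive rounds all leave the weights frozen. The weight sequence is therefore exactly an EXP4.P run on the random subsequence $\{t>t_{\mathrm{init}}: I_t=1, e_t=0\}$. That subsequence has length at most $T$, and membership is determined before the round's action draw. Lemma~\ref{lem:unbiased-grid-score} supplies the unbiasedness the EXP4.P analysis requires. Its variance-bonus martingale argument, through Freedman's inequality with the $\widehat v_{t,j}$ terms, goes through with the filtration indexed by calendar time: on skipped rounds every increment is zero. This yields, with failure at most $2\delta_{\mathrm{master}}/3$ and simultaneously for all experts including every $\Gamma\in\mathcal G$,
\[
\sum_{t>t_{\mathrm{init}}} I_t(1-e_t)\bigl[Y_t(\widehat a_t^\Gamma)-Y_t(a_t)\bigr]\le 8\sqrt{TK_0L_{\mathrm M}}.
\]
This assumes the horizon condition $T\ge 4K_0\log M$ or similar. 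If that condition fails, $8\sqrt{TK_0L_{\mathrm M}}\ge T$ and the bound is trivial from $Y\in[0,1]$. I would take the constant and failure allocation $2\delta/3$ from the cited theorem, which is stated with failure $\delta$ split across its bonus and concentration steps. Here $\delta_{\mathrm{master}}/3$ is kept in reserve for the override concentration used later in $\Reg_{\mathrm{mst}}$.

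Multiplying by $h=1+Z$ converts $Y$ differences into $F^{\mathrm{obs}}$ differences, since $F^{\mathrm{obs}}=hY-Z$. On forced rounds, each term $F_t^{\mathrm{obs}}(\widehat a_t^\Gamma;\lambda_t)-F_t^{\mathrm{obs}}(a_t;\lambda_t)$ is at most $h$, because $F^{\mathrm{obs}}\in[-Z,1]$. Their total is therefore at most $h\sum_t I_te_t$.

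The main obstacle is that the statement's override term is $h\sum_t\epsilon_t$ rather than the realized count $h\sum_t e_t$. I would handle this by reading the override contribution as $h\sum_t I_te_t$ inside the high-probability event. Its expectation is bounded by $h\sum_t\epsilon_t$ because $I_t$ is fixed before the coin. If a pathwise statement is needed, a Bernstein or Chernoff bound on $\sum_t e_t$ with failure $\delta_{\mathrm{master}}/3$ gives $\sum_te_t\le\sum_t\epsilon_t+O(\sqrt{T\log(3/\delta_{\mathrm{master}})})$. The extra term is absorbed into $8h\sqrt{TK_0L_{\mathrm M}}$ after adjusting constants; this is consistent with $\Reg_{\mathrm{mst}}$ being defined with the larger constant $20$.
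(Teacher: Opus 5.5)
Your structure matches the paper's: EXP4.P on the post-initialization, non-forced active rounds, a charge of at most $h$ per forced round, and a range-bound fallback. However, the accounting for the override term does not close, so as written you do not obtain the stated inequality.

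The parameter $\alpha_m=\sqrt{L_{\mathrm M}/(K_0T)}$ with $L_{\mathrm M}=\log(3M/\delta_{\mathrm{master}})$ means EXP4.P is run at confidence $\delta_{\mathrm{master}}/3$. The cited theorem then gives $6\sqrt{TK_0L_{\mathrm M}}$ with failure $\delta_{\mathrm{master}}/3$. By instead reading it as a constant-$8$, failure-$2\delta_{\mathrm{master}}/3$ statement, you spend the lemma's entire constant and failure budget on the master part, leaving nothing for the forced rounds.

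Neither fallback fixes this. Bounding $h\sum_t I_te_t$ by $h\sum_t\epsilon_t$ in expectation does not give a high-probability inequality whose right side is the deterministic $h\sum_t\epsilon_t$. A Chernoff bound at failure $\delta_{\mathrm{master}}/3$ raises the failure to $\delta_{\mathrm{master}}$ and adds a term that no longer fits under the constant $8$. Pushing that term into the constant $20$ of $\Reg_{\mathrm{mst}}$ proves a different lemma. Moreover, the last $\delta_{\mathrm{master}}/3$ is not free: it is consumed by the clean-score transfer in Lemma~\ref{lem:master-clean-transfer}, so your allocation would total $4\delta_{\mathrm{master}}/3$ in the theorem.

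The repair is the paper's bookkeeping.
\begin{itemize}
\item Take $6h\sqrt{TK_0L_{\mathrm M}}$ at failure $\delta_{\mathrm{master}}/3$ from EXP4.P.
\item Since $I_t$ is fixed before the fresh coin, $\sum_{t>t_{\mathrm{init}}}I_t(e_t-\epsilon_t)$ has bounded martingale increments. Azuma--Hoeffding then gives $\sum_{t>t_{\mathrm{init}}}I_te_t\le\sum_t\epsilon_t+\sqrt{2T\log(3/\delta_{\mathrm{master}})}$ at failure $\delta_{\mathrm{master}}/3$.
\item Because $K_0\ge2$ and $\log(3/\delta_{\mathrm{master}})\le L_{\mathrm M}$, the extra term is at most $\sqrt{TK_0L_{\mathrm M}}$. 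The total is at most $7h\sqrt{TK_0L_{\mathrm M}}+h\sum_t\epsilon_t$ at failure $2\delta_{\mathrm{master}}/3$.
\end{itemize}

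In the fallback, name both failing conditions explicitly: $T<4K_0\log M$ (the cap on $p_{\min}$ binds) and $L_{\mathrm M}>K_0T$. In either case, bound the entire observed-score regret, forced rounds included, by $hT\le2h\sqrt{TK_0L_{\mathrm M}}$.

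A smaller point: the theorem is stated for a fixed horizon. Rather than re-deriving its martingale argument in calendar time with frozen rounds, the paper reindexes the non-forced active rounds and pads with zero-payoff rounds after the last one. This produces a genuine length-$T$ non-anticipating instance, so the theorem applies as a black box.
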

\begin{proof}
Suppose first $T\ge4K_0\log M$ and $L_{\mathrm M}\le K_0T$. Then the cap on $p_{\min}$ is inactive and \eqref{eq:exp4p-parameters} matches EXP4.P with confidence $\delta_{\mathrm{master}}/3$. The expert class includes an expert whose advice is uniform on every round, as required by \citet[Theorem~2 and Algorithm~1]{beygelzimer2011contextual}. The explicit uniform-advice expert is not replaced by action smoothing. Payoffs are in $[0,1]$, advice and potential payoffs are determined before the fresh master draw, and Lemma~\ref{lem:unbiased-grid-score} verifies importance weighting.

Reindex post-initialization, non-forced active rounds. Their number is at most $T$; pad this sequence after its final round with zero-payoff rounds, retaining the uniform expert and continuing the same master update. Padding adds zero payoff difference for every expert and does not alter earlier decisions. This is a length-$T$ non-anticipating expert-advice problem, even though the original subsequence and stopping time depend on the preceding history. The cited theorem gives a normalized regret at most $6\sqrt{TK_0 L_{\mathrm M}}$ with failure at most $\delta_{\mathrm{master}}/3$.

Forced rounds leave the master weights unchanged and cost at most $h$ each. Since $I_t$ is predictable and the forced coin is fresh,
\[
\sum_{t>t_{\mathrm{init}}}I_t e_t
\le\sum_{t=1}^T\epsilon_t+\sqrt{2T\log(3/\delta_{\mathrm{master}})}
\]
except with probability $\delta_{\mathrm{master}}/3$, by the exponential bound for bounded martingale differences. Multiplication by $h$ and $K_0\ge2$ give \eqref{eq:master-score-regret}. If either horizon condition fails, the total observed-score regret is always at most $hT$. When $T<4K_0\log M$ this is at most $2h\sqrt{TK_0L_{\mathrm M}}$; when $L_{\mathrm M}>K_0T$ it is at most $h\sqrt{TK_0L_{\mathrm M}}$. Thus the same bound holds in the capped or trivial regime without invoking the nontrivial-horizon theorem.
\end{proof}

\begin{lemma}[Uniform clean-score transfer]
\label{lem:master-clean-transfer}
With failure at most $\delta_{\mathrm{master}}/3$, simultaneously for every fixed $\Gamma\in\mathcal G$,
\begin{equation}
\label{eq:observed-clean-transfer}
\begin{aligned}
&\sum_{t>t_{\mathrm{init}}}I_t[F^0(\widehat a_t^\Gamma,x_t;\lambda_t)-F^0(a_t,x_t;\lambda_t)]\\
&\quad\le\sum_{t>t_{\mathrm{init}}}I_t[F_t^{\mathrm{obs}}(\widehat a_t^\Gamma;\lambda_t)-F_t^{\mathrm{obs}}(a_t;\lambda_t)]
+2C_Z+2h\sqrt{2T\log(3|\mathcal G|/\delta_{\mathrm{master}})}.
\end{aligned}
\end{equation}
\end{lemma}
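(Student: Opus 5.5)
The plan is to write, for any action $a$ and active round $t$, the observed score as the clean conditional-mean score plus clean noise plus corruption:
\[
F_t^{\mathrm{obs}}(a;\lambda_t)=F^0(a,x_t;\lambda_t)+\nu_t(a)+\kappa_t(a),
\]
where
\[
\nu_t(a):=\eta_t^r(a)-Z\lambda_t^\top\eta_t^b(a),\qquad \kappa_t(a):=c_t^r(a)-Z\lambda_t^\top c_t^b(a).
\]
Null scores and noise are zero. Subtracting the decompositions for $\widehat a_t^\Gamma$ and $a_t$ gives, for each $\Gamma$, the identity that the clean-score difference equals the observed-score difference, minus a corruption difference, minus a noise difference. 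The two remaining terms are then handled separately.

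\textbf{Corruption term.} By \eqref{eq:lagrangian-corruption-bound}, $|\kappa_t(a)|\le\max_{a'\in\Aset}\{|c_t^r(a')|+Z\|c_t^b(a')\|_\infty\}$ for every $a$, including $\Null$ and the unplayed recommendation $\widehat a_t^\Gamma$. Summing over active post-initialization rounds bounds $\sum_t I_t|\kappa_t(\widehat a_t^\Gamma)-\kappa_t(a_t)|$ pathwise by $2C_Z$. This is exactly why the worst-action budget $C_Z$, and not the played-action $C_Z(t)$, appears: the recommendation may not be played.

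\textbf{Noise term.} I would show that $D_t^\Gamma:=I_t\1\{t>t_{\mathrm{init}}\}[\nu_t(\widehat a_t^\Gamma)-\nu_t(a_t)]$ is a martingale difference with respect to the calendar filtration $\mathcal F_t$, with $|D_t^\Gamma|\le 2h$. This is the main obstacle, and I expect it to require care about conditioning. $I_t$, $\lambda_t$ and $\widehat a_t^\Gamma$ are determined by $\mathcal F_{t-1}$ and $x_t$, so $\E[\nu_t(\widehat a_t^\Gamma)\mid\mathcal F_{t-1},x_t]=0$ directly by Assumption~\ref{ass:base-model}(ii). For the played action, $a_t$ is drawn with fresh randomness independent of the potential outcomes given $(\mathcal F_{t-1},x_t)$. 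Hence
\[
\E[\nu_t(a_t)\mid\mathcal F_{t-1},x_t]=\sum_a\sP(a_t=a\mid\mathcal F_{t-1},x_t)\,\E[\nu_t(a)\mid\mathcal F_{t-1},x_t]=0.
\]
This holds whether the round is a master draw, a forced override, or a cyclic step. The corruption, although it may depend on the noise, has already been removed into $\kappa_t$, so it does not bias this term.

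For the bound on $D_t^\Gamma$: the means lie in $[0,1]$ and the realizations lie in $[0,1]$, so $|\eta^r|\le1$ and $|\lambda^\top\eta^b|\le1$. Hence $|\nu_t(a)|\le h$ and $|D_t^\Gamma|\le 2h$. Azuma--Hoeffding then gives $\sum_t D_t^\Gamma\ge -2h\sqrt{2T\log(3|\mathcal G|/\delta_{\mathrm{master}})}$, that is, $-\sum_t D_t^\Gamma\le 2h\sqrt{2T\log(3|\mathcal G|/\delta_{\mathrm{master}})}$, with failure at most $\delta_{\mathrm{master}}/(3|\mathcal G|)$. A union bound over the fixed finite grid $\mathcal G$ yields total failure $\delta_{\mathrm{master}}/3$ simultaneously for all $\Gamma$.

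Combining the pathwise corruption bound with the high-probability noise bound proves \eqref{eq:observed-clean-transfer}.
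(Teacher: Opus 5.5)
Your proposal is correct and follows the paper's proof essentially step for step. You use the same decomposition of $F_t^{\mathrm{obs}}-F^0$ into clean noise and post-clipping corruption, and the same pathwise $2C_Z$ bound via the worst-action budget (needed because $\widehat a_t^\Gamma$ may be unplayed); you then apply Azuma's inequality to the noise difference with increments bounded by $2h$ under the original history-and-context filtration, followed by a union bound over the fixed grid $\mathcal G$, spelling out the conditional-independence step for $a_t$ and the Azuma constant more explicitly than the paper does.
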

\begin{proof}
For each non-null action,
\[
F_t^{\mathrm{obs}}(a;\lambda_t)-F^0(a,x_t;\lambda_t)
=\eta_t^r(a)-Z\lambda_t^\top\eta_t^b(a)+c_t^r(a)-Z\lambda_t^\top c_t^b(a).
\]
Each of the two action sequences contributes at most $C_Z$ in absolute corruption. For a fixed grid index, the recommendation is pre-feedback measurable and the played action uses an independent fresh seed. Hence the difference of their clean-noise terms is conditionally mean-zero, with absolute value at most $2h$. This is a martingale-difference argument under the original history and current context, not under conditioning on the potential observed payoff vector or a successful confidence event. Azuma's inequality and a union bound over the fixed grid give the displayed remainder. Only after this uniform event is established do we substitute the path-dependent $\Gamma^\star$.
\end{proof}

The preceding two lemmas together imply a clean candidate-to-played comparison bounded by
$\Reg_{\mathrm{mst}}(T)+h\sum_t\epsilon_t+2C_Z$, using the stated choice
$\Reg_{\mathrm{mst}}(T)=20h\sqrt{TK_0\log(3M/\delta_{\mathrm{master}})}$.

\subsection{Resource Conversion, Stopping, and Assembly}
\label{app:proof-unknown}

\begin{proof}[Proof of Theorem~\ref{thm:unknown-general}]
Define
\[
\mathcal E_{\mathrm{opt}}^{\mathrm{sh}}
=\sum_{t=1}^T I_t\left[\sum_a y^\star(a\mid x_t)F^0(a,x_t;\lambda_t)-F^0(a_t,x_t;\lambda_t)\right].
\]
Split this sum into the initialization prefix, the subsequent population-LP-to-$\Gamma^\star$ recommendation comparison, and the subsequent $\Gamma^\star$-to-played-action comparison. Initialization costs at most $ht_{\mathrm{init}}$. On the joint confidence and coverage event, Lemma~\ref{lem:optimistic-comparison} and \eqref{eq:shared-valid-width} bound the second part by $\Rshared^0+\psi_{\mathrm{est}}(\Gamma^\star)$. The master and clean-transfer lemmas bound the third by $\Reg_{\mathrm{mst}}(T)+h\sum_t\epsilon_t+2C_Z$.

For FE the joint event comprises noise, forced design/counts, and the master/transfer events. For OP it instead uses the realized-design event together with candidate-recommendation coverage. Their failure probabilities sum to the corresponding $\delta_{\mathrm{tot}}$ in \eqref{eq:unknown-delta-total}. The per-round clean-score gap is always at most $h$, including on the complement. All terms on the good-event upper bound are nonnegative, so taking expectations gives
\begin{equation}
\label{eq:unknown-opt-error-bound}
\begin{aligned}
\E[\mathcal E_{\mathrm{opt}}^{\mathrm{sh}}]\le{}&
\Rshared^0+\Reg_{\mathrm{mst}}(T)+ht_{\mathrm{init}}+h\sum_t\epsilon_t
+\E[\psi_{\mathrm{est}}(\Gamma^\star)+2C_Z]+h\delta_{\mathrm{tot}}T.
\end{aligned}
\end{equation}
The method uses exactly the known algorithm's actual observed-consumption price update and pre-context stopping test. Lemmas~\ref{lem:known-population-conversion},~\ref{lem:known-resource-bridge}, and~\ref{lem:bwk-conversion} therefore apply to this played sequence without any estimator change:
\begin{equation}
\label{eq:unknown-bwk-conversion}
\Reg(T)\le\E[\mathcal E_{\mathrm{opt}}^{\mathrm{sh}}]+Z\E[R_D]+\E[C_Z]+Z.
\end{equation}
Here $Z\E[R_D]\le\DT$ by Lemma~\ref{lem:dual-regret}. The additional $C_Z$ is the clean-to-observed resource transfer, separate from the two master score transfers. Thus their total is $3\E[C_Z]\le3\E[\Gamma^\star]$. This proves the expected-regret claim. Proposition~\ref{prop:feasibility-levels} gives the pathwise observed and clean resource claims, independently of all confidence and coverage events.
\end{proof}

\begin{proof}[Proofs of Corollaries~\ref{cor:unknown-rates-main} and~\ref{cor:unknown-forced-concrete}]
For FE substitute \eqref{eq:forced-width-constant} into \eqref{eq:unknown-opt-error-bound} and then \eqref{eq:unknown-bwk-conversion}. The combined initialization and forced retained history has the RSC calibration proved in Lemma~\ref{lem:forced-re}; subsequent forced counts provide a lower bound for its sample size. This is a bound on counts and radii, not an assumed ordering of estimators fitted on different samples. Add $ht_{\mathrm{init}}$, $h\epsilon T$, the master, dual, and stopping costs. This yields \eqref{eq:unknown-fe-unoptimized-rate}. Applying the exploration calculation in Appendix~\ref{app:exploration-balance} gives \eqref{eq:unknown-fe-lasso-rate}; its chosen $\epsilon$ is independent of the unknown pathwise corruption.
For OP use \eqref{eq:shared-op-widths}. Its proof keeps recommendation and retained counts separate and explicitly charges $h\chi Kn_0$ before informative recommendation confidence. Substitution, with $\epsilon_t=0$, yields \eqref{eq:unknown-op-lasso-rate}. Retain the full $h\delta_{\mathrm{tot}}T$ contribution in both cases. If $C_Z=0$, $\Gamma^\star=0$ removes only the corruption terms; it does not remove initialization, regularization, or the master comparison.
\end{proof}

\subsection{Proof of Recommendation Coverage}
\label{app:candidate-coverage-proof}
\begin{proof}[Proof of Lemma~\ref{lem:candidate-coverage-sufficient}]
Fix $(\Gamma,a,t)$ and, for post-initialization rounds $\tau<t$, let
\[
Q_\tau:=I_\tau\1\{\widehat a_\tau^\Gamma=a\},
\qquad
X_\tau:=Q_\tau\1\{a_\tau=a\}.
\]
Conditional on $\mathcal Q_\tau$, $\E[X_\tau\mid\mathcal Q_\tau]\ge q_{\mathrm{cov}}Q_\tau$.
For every deterministic $u>0$, a multiplicative martingale lower-tail bound gives
\[
\Pr\!\left(\sum_{\tau<t}X_\tau<\frac{q_{\mathrm{cov}}}{2}\sum_{\tau<t}Q_\tau,
\quad
\sum_{\tau<t}Q_\tau\ge u\right)
\le
\exp\!\left(-\frac{q_{\mathrm{cov}}u}{8}\right).
\]
Set
\[
u=
\frac{8}{q_{\mathrm{cov}}}
\log\!\frac{2|\mathcal G|KT}{\delta_{\mathrm{cand}}}.
\]
A union bound over candidates, actions, and prefixes implies that, simultaneously for all $(\Gamma,a,t)$, either
$N_a^{\mathrm{rec},\Gamma}(t)<u$, or
\[
\sum_{\tau<t}X_\tau
\ge
\frac{q_{\mathrm{cov}}}{2}N_a^{\mathrm{rec},\Gamma}(t).
\]
Since $\sum_{\tau<t}X_\tau\le N_a(t)$, in the latter case $N_a^{\mathrm{rec},\Gamma}(t)\le 2N_a(t)/q_{\mathrm{cov}}$. Combining the two cases proves \eqref{eq:candidate-coverage-sufficient}.
\end{proof}

\end{APPENDICES}
\end{document}